\newcommand{\states}{\mathcal{S}}
\newcommand{\actions}{\mathcal{A}}
\newcommand{\probs}{\mathcal{P}}  
\newcommand{\sU}{\mathcal{U}}
\newcommand{\sL}{\mathcal{L}}
\newcommand{\sZ}{ {\widebar{\mathcal{N}}}}
\newcommand{\sUL}{\mathcal{E}}
\newcommand{\sUZ}{{\widebar{\mathcal{U}}}}
\newcommand{\sLZ}{{\widebar{\mathcal{L}}}}
\newcommand{\sQ}{\mathcal{Q}}
\newcommand{\sN}{\mathcal{N}} 
\newcommand{\sE}{\sUL}
\renewcommand{\b}[1]{\bm{#1}}
\renewcommand{\*}[1]{\bm{#1}}
\newcommand{\cm}{$\checkmark$}
\newcommand{\nm}{$\cdot$}
\newcommand{\Bell}{\mathfrak{L}}
\newcommand{\statecount}{S}
\newcommand{\actioncount}{A}
\DeclareRobustCommand{\bigO}{\text{\usefont{OMS}{cmsy}{m}{n}O}}
\renewcommand{\cite}[1]{\citep{#1}}
\title{Partial Policy Iteration \\for $L_1$-Robust Markov Decision Processes}
\author{%
	\name Chin Pang Ho \email clint.ho@cityu.edu.hk \\
	City University of Hong Kong \AND
	\name Marek Petrik \email mpetrik@cs.unh.edu \\
	University of New Hampshire \AND
	\name Wolfram Wiesemann \email ww@imperial.ac.uk \\
	Imperial College London
}
\begin{document}

\maketitle

\begin{abstract}%	
    Robust Markov decision processes (MDPs) allow to compute reliable solutions for dynamic decision problems whose evolution is modeled by rewards and partially-known transition probabilities. Unfortunately, accounting for uncertainty in the transition probabilities significantly increases the computational complexity of solving robust MDPs, which severely limits their scalability. This paper describes new efficient algorithms for solving the common class of robust MDPs with s- and sa-rectangular ambiguity sets defined by weighted $L_1$ norms. We propose partial policy iteration, a new, efficient, flexible, and general policy iteration scheme for robust MDPs. We also propose fast methods for computing the robust Bellman operator in quasi-linear time, nearly matching the linear complexity the non-robust Bellman operator. Our experimental results indicate that the proposed methods are many orders of magnitude faster than the state-of-the-art approach which uses linear programming solvers combined with a robust value iteration.
\end{abstract}

\section{Introduction}

Markov decision processes (MDPs) provide a versatile methodology for modeling and solving dynamic decision problems under uncertainty~\cite{Puterman2005}. Unfortunately, however, MDP solutions can be very sensitive to estimation errors in the transition probabilities and rewards. This is of particular worry in \emph{reinforcement learning} applications, where the model is fit to data and therefore inherently uncertain. Robust MDPs (RMDPs) do not assume that the transition probabilities are known precisely but instead allow them to take on any value from a given \emph{ambiguity set} or uncertainty set~\cite{Xu2006,Mannor2012,Hanasusanto2013,Tamar2014a,Delgado2016}. With appropriately chosen ambiguity sets, RMDP solutions are often much less sensitive to model errors~\cite{Xu2009,Petrik2012,Petrik2016a}. 

%RMDPs are reminiscent of dynamic zero-sum games: the agent chooses the best actions, while adversarial nature chooses the worst plausible transition probabilities.

Most of the RMDP literature assumes \emph{rectangular} ambiguity sets that constrain the errors in the transition probabilities independently for each state~\cite{Iyengar2005,Nilim2005,LeTallec2007,Kaufman2013,Wiesemann2013}. This assumption is crucial to retain many of the desired structural features of MDPs. In particular, the robust return of an RMDP with a rectangular ambiguity set is maximized by a \emph{stationary policy}, and the optimal value function satisfies a robust variant of the Bellman optimality equation. Rectangularity also ensures that an optimal policy can be computed in \emph{polynomial time} by robust versions of the classical value or policy iteration~\cite{Iyengar2005,Hansen2013}.

A particularly popular class of rectangular ambiguity sets is defined by bounding the $L_1$-distance of any plausible transition probabilities from a \emph{nominal} distribution~\cite{Iyengar2005,Strehl2009,Auer2010,Petrik2014,Taleghan2015,Petrik2016a}. Such ambiguity sets can be readily constructed from samples~\cite{Weissman2003xx,Behzadian2019}, and their polyhedral structure implies that the worst transition probabilities can be computed by the solution of linear programs (LPs). Unfortunately, even for the specific class of $L_1$-ambiguity sets, an LP has to be solved for each state and each step of the value or policy iteration. Generic LP algorithms have a worst-case complexity that is approximately quartic in the number of states~\cite{Vanderbei2001}, and they thus become prohibitively expensive for RMDPs with many states.

In this paper, we propose a new framework for solving RMDPs. Our framework applies to both sa-rectangular ambiguity sets, where adversarial nature observes the agent's actions before choosing the worst plausible transition probabilities~\cite{Iyengar2005,Nilim2005}, and s-rectangular ambiguity sets, where nature must commit to a realization of the transition probabilities before observing the agent's actions~\cite{LeTallec2007,Wiesemann2013}. We achieve a significant theoretical and practical acceleration over the robust value and policy iteration by reducing the number of iterations needed to compute an optimal policy and by reducing the computational complexity of each iteration. The overall speedup of our framework allows us to solve RMDPs with $L_1$-ambiguity sets in a time complexity that is similar to that of classical MDPs. Our framework comprises of three components, each of which represents a novel contribution.

Our first contribution is \emph{partial policy iteration} (PPI), which generalizes the classical modified policy iteration to RMDPs. PPI resembles the robust modified policy iteration~\cite{Kaufman2013}, which has been proposed for sa-rectangular ambiguity sets. In contrast to the robust modified policy iteration, however, PPI applies to both sa-rectangular and s-rectangular ambiguity sets, and it is guaranteed to converge at the same linear rate as robust value and robust policy iteration. In our experimental results, PPI outperforms robust value iteration by several orders of magnitude.

Our second contribution is a fast algorithm for computing the robust Bellman operator for sa-rectangular weighted $L_1$-ambiguity sets.  Our algorithm employs the \emph{homotopy continuation} strategy~\cite{Vanderbei2001}: it starts with a singleton ambiguity set for which the worst transition probabilities can be trivially identified, and it subsequently traces the most adverse transition probabilities as the size of the ambiguity set increases. The time complexity of our homotopy method is quasi-linear in the number of states and actions, which is significantly faster than the quartic worst-case complexity of generic LP solvers.

Our third contribution is a fast algorithm for computing the robust Bellman operator for s-rectangular weighted $L_1$-ambiguity sets. While often less conservative and hence more appropriate in practice, s-rectangular ambiguity sets are computationally challenging since the agent's optimal policy can be randomized~\cite{Wiesemann2013}. We propose a \emph{bisection} approach to decompose the s-rectangular Bellman computation into a series of sa-rectangular Bellman computations. When our bisection method is combined with our homotopy method, its time complexity is quasi-linear in the number of states and actions, compared again to the quartic complexity of generic LP solvers.

Put together, our contributions comprise a complete framework that can be used to solve RMDPs efficiently. Besides being faster than solving LPs directly, our framework does not require an expensive black-box commercial optimization package such as CPLEX, Gurobi, or Mosek. A well-tested and documented implementation of the methods described in this paper is available at \url{https://github.com/marekpetrik/craam2}.

Compared to an earlier conference version of this work~\cite{Ho2018}, the present paper introduces PPI, it improves the bisection method to work with PPI, it provides extensive and simpler proofs, and it reports more complete and thorough experimental results. 

The remainder of the paper is organized as follows. We summarize relevant prior work in \cref{sec:prior_work} and subsequently review  basic properties of RMDPs in \cref{sec:robust_bellman_updates}. \Cref{sec:ppi} describes our partial policy iteration~(PPI), \cref{sec:fast_solution} develops the homotopy method for sa-rectangular ambiguity sets, and \cref{sec:decomposition} is devoted to the bisection method for s-rectangular ambiguity sets. \cref{sec:numerical} compares our algorithms with the solution of RMDPs via Gurobi, a leading commercial LP solver, and we offer concluding remarks in \cref{sec:conclusion}.

\textbf{Notation.}
Regular lowercase letters (such as $p$) denote scalars, boldface lowercase letters (such as $\b{p}$) denote vectors, and boldface uppercase letters (such as $\b X$) denote matrices. Indexed values are printed in bold if they are vectors and in regular font if they are scalars. That is,  $p_i$ refers to the $i$-th element of a vector $\*p$, whereas $\*p_i$ is the $i$-th vector of a sequence of vectors. An expression in parentheses indexed by a set of natural numbers, such as $(p_i)_{i\in\mathcal{Z}}$ for $\mathcal{Z} = \{1,\ldots, k\}$, denotes the vector $(p_1, p_2, \ldots, p_k)$. Similarly, if each $\*p_i$ is a vector, then $\*P = (\*p_i)_{i\in\mathcal{Z}}$ is a matrix with each vector $\*p_i\tr$ as a row. The expression $(\*p_i)_j \in \Real$ represents the element in $i$-th row and $j$-th column. Calligraphic letters and uppercase Greek letters (such as $\mathcal{X}$ and $\Xi$) are reserved for sets. The symbols $\one$ and $\zero$ denote vectors of all ones and all zeros, respectively, of the size appropriate to their context. The symbol $\eye$ denotes the identity matrix of the appropriate size. The probability simplex in $\RealPlus^\statecount$ is denoted as $\Delta^\statecount = \left\{ \bm{p} \in \RealPlus^S \ss  \one\tr \bm{p} = 1 \right\}$.  The set $\Real$ represents real numbers and the set $\RealPlus$ represents non-negative real numbers.

% All random variables are represented by uppercase letters.

\section{Related Work} \label{sec:prior_work}

We review relevant prior work that aims at (i) reducing the number of iterations needed to compute an optimal RMDP policy, as well as (ii) reducing the computational complexity of each iteration. We also survey algorithms for related machine learning problems.

The standard approach for computing an optimal RMDP policy is \emph{robust value iteration}, which is a variant of the classical value iteration for non-robust MDPs that iteratively applies the robust Bellman operator to an increasingly accurate approximation of the optimal robust value function~\cite{Givan2000,Iyengar2005,LeTallec2007,Wiesemann2013}. Robust value iteration is easy to implement and versatile, and it converges linearly with a rate of $\gamma$, the discount factor of the RMDP. 

Unfortunately, robust value iteration requires many iterations and thus performs poorly when the discount factor of the RMDP approaches 1. To alleviate this issue, \emph{robust policy iteration} alternates between robust policy evaluation steps that determine the robust value function for a fixed policy and policy improvement steps that select the optimal greedy policy for the current estimate of the robust value function~\cite{Iyengar2005,Hansen2013}. While the theoretical convergence rate guarantee for the robust policy iteration matches that for the robust value iteration, its practical performance tends to be superior for discount factors close to 1. However, unlike the classical policy iteration for non-robust MDPs, which solves a system of linear equations in each policy evaluation step, robust policy iteration solves a large LP in each robust policy evaluation step. This restricts robust policy iteration to small RMDPs.

\emph{Modified policy iteration}, also known as optimistic policy iteration, tends to significantly outperform both value and policy iteration on non-robust MDPs~\cite{Puterman2005}. Modified policy iteration adopts the same strategy as policy iteration, but it merely approximates the value function in each policy evaluation step by executing a small number of value iterations. Generalizing the modified policy iteration to RMDPs is not straightforward. There were several early attempts to develop a robust modified policy iteration~\cite{Satia1973,White1994}, but their convergence guarantees are in doubt~\cite{Kaufman2013}. The challenge is that the alternating maximization (in the policy improvement step) and minimization (in the policy evaluation step) may lead to infinite cycles in the presence of approximation errors. Several natural robust policy iteration variants have been shown to loop infinitely on some inputs~\cite{Condon1993}.

To the best of our knowledge, \emph{robust modified policy iteration}~(RMPI) is the first generalization of the classical modified policy iteration to RMDPs with provable convergence guarantees~\cite{Kaufman2013}. RMPI alternates between robust policy evaluation steps and policy improvement steps. The robust policy evaluation steps approximate the robust value function of a fixed policy by executing a small number of value iterations, and the policy improvement steps select the optimal greedy policy for the current estimate of the robust value function. Our partial policy iteration~(PPI) improves on RMPI in several respects. RMPI only applies to sa-rectangular problems in which there exist optimal deterministic policies, while PPI also applies to s-rectangular problems in which all optimal policies may be randomized. Also, RMPI relies on a value iteration to partially evaluate a fixed policy, whereas PPI can evaluate the fixed policy more efficiently using other schemes such as policy or modified policy iteration. Finally, PPI enjoys a guaranteed linear convergence rate of $\gamma$.

Apart from variants of the robust value and the robust (modified) policy iteration, efforts have been undertaken to efficiently evaluate the robust Bellman operator for structured classes of ambiguity sets. While this evaluation amounts to the solution of a convex optimization problem for generic convex ambiguity sets and reduces to the solution of an LP for polyhedral ambiguity sets, the resulting polynomial runtime guarantees are insufficient due to the large number of evaluations required. Quasi-linear time algorithms for computing Bellman updates for RMDPs with \emph{unweighted} sa-rectangular $L_1$-ambiguity sets have been proposed by~\citet{Iyengar2005} and~\citet{Petrik2014}. Similar algorithms have been used to guide the exploration of MDPs~\cite{Strehl2009,Taleghan2015}. In contrast, our algorithm for sa-rectangular ambiguity sets applies to both unweighted and weighted $L_1$-ambiguity sets, where the latter ones have been shown to provide superior robustness guarantees~\cite{Behzadian2019}. The extension to weighted norms requires a surprisingly large change to the algorithm. Quasi-linear time algorithms have also been proposed for sa-rectangular $L_\infty$-ambiguity sets~\cite{Givan2000}, $L_2$-ambiguity sets~\cite{Iyengar2005} and KL-ambiguity sets~\cite{Iyengar2005,Nilim2005}. We are not aware of any previous specialized algorithms for s-rectangular ambiguity sets, which are significantly more challenging as all optimal policies may be randomized, and it is therefore not possible to compute the worst transition probabilities independently for each action.

Our algorithm for computing the robust Bellman operator over an sa-rectangular ambiguity set resembles LARS, a homotopy method for solving the LASSO problem~\cite{Drori2006,Hastie2009,Murphy2012}. It also resembles methods for computing fast projections onto the $L_1$-ball~\cite{Duchi2008,Thai2015} and the weighted $L_1$-ball~\cite{Berg2011}. In contrast to those works, our algorithm optimizes a linear function (instead of a more general quadratic one) over the intersection of the (weighted) $L_1$-ball and the probability simplex (as opposed to the entire $L_1$-ball).

Our algorithm for computing the robust Bellman operator for s-rectangular ambiguity sets employs a bisection method. This is a common optimization technique for solving low-dimensional problems. We are not aware of works that use bisection to solve s-rectangular RMDPs or similar machine learning problems. However, a bisection method has been previously used to solve sa-rectangular RMDPs with KL-ambiguity sets~\cite{Nilim2005}. That bisection method, however, has a different motivation, solves a different problem, and bisects on different problem parameters.

Throughout this paper, we focus on RMDPs with sa-rectangular or s-rectangular ambiguity sets but note that several more-general classes have been proposed recently~\cite{Mannor2012,Mannor2016,Goyal2018}. These k-rectangular and r-rectangular sets have tangible advantages, but also introduce additional computational complications. 

\section{Robust Markov Decision Processes} \label{sec:robust_bellman_updates}

This section surveys RMDPs and their basic properties. We cover both sa-rectangular and s-rectangular ambiguity sets but limit the discussion to norm-constrained ambiguity sets.

An MDP $(\mathcal{S}, \mathcal{A}, \bm{p}_0, \bm{p}, \bm{r}, \gamma)$ is described by a state set $\states = \{1,\ldots, \statecount \}$ and an action set $\actions = \{ 1, \ldots, \actioncount \}$. The initial state is selected randomly according to the distribution $\b{p}_0\in\Delta^S$. When the MDP is in state $s \in \states$, taking the action $a \in \actions$ results in a stochastic transition to a new state $s' \in \states$ according to the distribution $\bm{p}_{s,a} \in \Delta^\statecount$ with a reward of $r_{s,a,s'} \in \Real$. We condense the transition probabilities $\bm{p}_{s,a}$ to the transition function $\bm{p} = (\bm{p}_{s,a})_{s\in\states,a\in\actions} \in (\Delta^S)^{S \times A}$ which can also be also interpreted as a function $\*p : \states \times\actions \to \Delta^S$. Similarly, we condense the rewards to vectors $\bm{r}_{s,a} = (r_{s,a,s'})_{s'\in\states} \in \Real^S$ and $\bm{r} = (\bm{r}_{s,a})_{s\in\states,a\in\actions}$. The discount factor is $\gamma \in (0, 1)$.

A (stationary) randomized policy $\bm{\pi} = (\bm{\pi}_s)_{s\in\states}$, $\bm{\pi}_s \in \Delta^A$ for all $s \in \states$, is a function that prescribes to take an action $a \in \actions$ with the probability $\pi_{s,a}$ whenever the MDP is in a state $s \in \states$. We use $\Pi = (\Delta^A)^S$ to denote the set of all randomized stationary policies. 

For a given policy $\bm{\pi} \in \Pi$, an MDP becomes a \emph{Markov reward process}, which is a Markov chain with the $S\times S$ transition matrix $\bm{P} (\bm{\pi}) = (\bm{p}_s (\bm{\pi}))_{s\in\states}$ and the rewards $\bm{r} (\bm{\pi}) = (r_s (\bm{\pi}))_{s\in\states} \in \Real^S$ where
\[\b{p}_{s}(\bm{\pi}) = \sum_{a\in\actions} \pi_{s,a} \cdot \b{p}_{s,a}  \quad \text{and} \quad r_s(\bm{\pi}) = \sum_{a\in\actions} \pi_{s,a}\cdot \b{p}_{s,a}\tr \b{r}_{s,a}~, \]
and $\b{p}_{s}(\bm{\pi}) \in \Delta^S$ and $r_s(\*\pi) \in \Real$. The total expected discounted reward of this Markov reward process is
\[\E{\sum_{t = 0}^\infty \gamma^t \cdot r_{S_t, A_t, S_{t+1}} } \;\; = \;\; \b{p}_0\tr (\eye - \gamma\cdot \b{P}(\bm{\pi}))^{-1} \b{r}(\bm{\pi})~. \]
Here, the initial random state $S_0$ is distributed according to $\*p_0$, the subsequent random states $S_1, S_2, \ldots$ are distributed according to $\bm{p} (\bm{\pi})$, and the random actions $A_0, A_1, \ldots$ are distributed according to $\bm{\pi}$. The value function of this Markov reward process is $\bm{v} (\bm{\pi}, \bm{p}) = (\eye - \gamma \cdot \b{P}(\bm{\pi}))^{-1} \b{r}(\bm{\pi})$. For each state $s \in \states$, $v_s (\bm{\pi}, \bm{p})$ describes the total expected discounted reward once the Markov reward process enters $s$. It is well-known that the total expected discounted reward of an MDP is optimized by a deterministic policy $\bm{\pi}$ satisfying $\pi_{s,a} \in \{ 0, 1\}$ for each $s \in \states$ and $a \in \actions$~\cite{Puterman2005}.

RMDPs generalize MDPs in that they account for the uncertainty in the transition function $\bm{p}$. More specifically, the RMDP $(\mathcal{S}, \mathcal{A}, \bm{p}_0, \mathcal{P}, \bm{r}, \gamma)$ assumes that the transition function $\bm{p}$ is chosen adversarially from an \emph{ambiguity set} (or \emph{uncertainty set}) of plausible values $\probs \subseteq (\Delta^S)^{S \times A}$~\cite{Hanasusanto2013,Wiesemann2013,Petrik2014,Petrik2016a,Petrik2019}. The objective is to compute a policy $\bm{\pi} \in \Pi$ that maximizes the \emph{return}, or the expected sum of discounted rewards, under the worst-case transition function from $\probs$:
\begin{equation} \label{eq:the_mother_of_all_problems}
\max_{\bm{\pi}\in\Pi} \min_{\bm{p}\in\probs} \; \b{p}_0\tr \b{v}(\bm{\pi}, \bm{p})~.
\end{equation}
The maximization in~\eqref{eq:the_mother_of_all_problems} represents the objective of the agent, while the minimization can be interpreted as the objective of adversarial nature. To ensure that the minimum exists, we assume throughout the paper that the set $\probs$ is \emph{compact}.
                                                               
The optimal policies in RMDPs are history-dependent, stochastic and NP-hard to compute even when restricted to be stationary~\cite{Iyengar2005,Wiesemann2013}. However, the problem~\eqref{eq:the_mother_of_all_problems} is tractable for some broad classes of ambiguity sets $\probs$. The most common such class are the \emph{sa-rectangular ambiguity sets}, which are defined as Cartesian products of sets $\probs_{s,a} \subseteq \Delta^S$ for each state $s$ and action $a$ \cite{Iyengar2005,Nilim2005,LeTallec2007}: 
\begin{equation} \label{eq:ambiguity_sa_rect}
\probs = \Bigl\{ \bm{p} \in (\Delta^S)^{S \times A} \ss \bm{p}_{s,a} \in \probs_{s,a} \;\; \forall s\in\states, a\in\actions \Bigr\} ~.
\end{equation}
Since each probability vector $\bm{p}_{s,a}$ belongs to a separate set $\probs_{s,a}$, adversarial nature can select the worst transition probabilities independently for each state and action. This amounts to nature being able to observe the agent's action prior to choosing the transition probabilities. Similar to non-robust MDPs, there always exists an optimal deterministic stationary policy in sa-rectangular RMDPs~\cite{Iyengar2005,Nilim2005}.

In this paper, we study sa-rectangular ambiguity sets that constitute weighted $L_1$-balls around some \emph{nominal transition probabilities} $\bar{\b{p}}_{s,a} \in \Delta^S$:
\[ \mathcal{P}_{s,a} = \left\{ \b{p} \in \Delta^\statecount \ss \| \b{p} - \bar{\b{p}}_{s,a} \|_{1,\b{w}_{s,a}} \le \kappa_{s,a} \right\} \]
Here, the weights $\b{w}_{s,a}\in\RealPlus^S$ are assumed to be strictly positive: $w_{s,a} > \zero, s\in\states, a\in\actions$. The radius $\kappa_{s,a} \in \RealPlus$ of the ball is called the \emph{budget}, and the weighted $L_1$-norm is defined as 
\[ \| \b{x} \|_{1,\b{w}} = \sum_{i=1}^{n} w_i \, \lvert x_i \rvert~. \]
Various $L_1$-norm ambiguity sets have been applied to a broad range of  RMDPs~\cite{Iyengar2005,Petrik2014,Petrik2016a,Behzadian2019,Russel2019,Derman2019} and have also been used to guide exploration in MDPs~\cite{Strehl2009,Auer2010,Taleghan2015}.

Similarly to MDPs, the robust value function $\b{v}_{\bm{\pi}} = \min_{\bm{p} \in \mathcal{P}} \bm{v} (\bm{\pi}, \bm{p})$ of an sa-rectangular RMDP for a policy $\bm{\pi} \in \Pi$ can be computed using the \emph{robust Bellman policy update} $\Bell_{\bm{\pi}}: \Real^S \to \Real^S$. For sa-rectangular RMDPs constrained by the $L_1$-norm, the operator $\Bell_{\bm{\pi}}$ is defined for each state $s\in\states$ as
\begin{equation} \label{eq:bellman_sa_rectangular_pol}
\begin{aligned}
    ( \Bell_{\bm{\pi}} \b{v} )_s &= \sum_{a\in\actions}  \left( \pi_{s,a} \cdot \min_{\*p\in\probs_{s,a}} \*p\tr (\*r_{s,a} + \gamma\cdot \*v) \right) \\
    &= \sum_{a\in\actions}  \left( \pi_{s,a} \cdot \min_{\*p\in \Delta^\statecount} \left\{ \*p\tr (\*r_{s,a} + \gamma\cdot \*v) \ss  \| \b{p} - \bar{\b{p}}_{s,a} \|_{1,\b{w}_{s,a}} \le \kappa_{s,a} \right\} \right) ~.
\end{aligned}
\end{equation}
The robust value function is the unique solution to $\b{v}_{\*\pi} = \Bell_{\bm{\pi}} \b{v}_{\*\pi}$~\cite{Iyengar2005}. To compute the optimal value function, we use the sa-rectangular \emph{robust Bellman optimality operator} $\Bell : \Real^S \rightarrow \Real^S$ defined as
\begin{equation} \label{eq:bellman_sa_rectangular}
\begin{aligned}
    (\Bell \b{v})_s &= \max_{a\in\actions} \min_{\*p\in\probs_{s,a}} \*p\tr (\*r_{s,a} + \gamma\cdot \*v) \\
    &= \max_{a\in\actions} \min_{\*p\in \Delta^\statecount} \left\{ \*p\tr (\*r_{s,a} + \gamma \cdot \*v)  \ss \| \b{p} - \bar{\b{p}}_{s,a} \|_{1,\b{w}_{s,a}} \le \kappa_{s,a} \right\}~.
\end{aligned}
\end{equation}
Let $\bm{\pi}^\star\in\Pi$ be an optimal robust policy which solves~\eqref{eq:the_mother_of_all_problems}. Then the optimal robust value function $\*v\opt = \*v_{\*\pi\opt}$ is the unique vector that satisfies $\b{v}\opt = \Bell \b{v}\opt$~\cite{Iyengar2005,Wiesemann2013}.  

Note that the $\bm{p} \in \Delta^S$ in the equations above represents a probability vector rather than the transition function $\bm{p} \in (\Delta^S)^{S \times A}$. To prevent confusion between the two in the remainder of the paper, we specify the dimensions of $\*p$ whenever it is not obvious from its context. 

As mentioned above, sa-rectangular sets assume that nature can observe the agent's action when choosing the robust transition probabilities. This assumption grants nature too much power and  often results in overly conservative policies \cite{LeTallec2007,Wiesemann2013}. \emph{S-rectangular ambiguity sets} partially alleviate this issue while preserving the computational tractability of sa-rectangular sets. They are defined as Cartesian products of sets $\probs_s \subseteq (\Delta^S)^A$ for each state $s$ (as opposed to state-action pairs earlier):
\begin{equation} \label{eq:ambiguity_s_rect}
\probs = \left\{ \bm{p} \in (\Delta^S)^{S \times A} \ss (\bm{p}_{s,a})_{a\in\actions} \in \probs_{s} \; \forall s\in\states \right\}
\end{equation}
Since the probability vectors $\bm{p}_{s,a}$, $a \in \actions$, for the same state $s$ are subjected to the joint constraints captured by $\probs_{s}$, adversarial nature can no longer select the worst transition probabilities independently for each state and action. The presence of these joint constraints amounts to nature choosing the transition probabilities while only observing the state and not the agent's action (but observing the agent's policy). In contrast to non-robust MDPs and sa-rectangular RMDPs, s-rectangular RMDPs are optimized by randomized policies in general \cite{LeTallec2007,Wiesemann2013}. As before, we restrict our attention to s-rectangular ambiguity sets defined in terms of $L_1$-balls around nominal transition probabilities:
\[\mathcal{P}_{s} = \left\{ \bm{p} \in (\Delta^S)^A \ss \sum_{a\in\actions} \lVert \b{p}_a - \bar{\b{p}}_{s,a} \rVert_{1,\b{w}_{s,a}} \le \kappa_{s} \right\}\]
In contrast to the earlier sa-rectangular ambiguity set, nature is now restricted by a single budget $\kappa_s \in \RealPlus$ for all transition probabilities $(\bm{p}_{s,a})_{a\in\actions}$ relating to a state $s \in \states$. We note that although sa-rectangular ambiguity sets are a special case of s-rectangular ambiguity sets in general, this is not true for our particular classes of $L_1$-ball ambiguity sets.

The s-rectangular \emph{robust Bellman policy update} $\Bell_{\*\pi} : \Real^S \to \Real^S$ is defined as
\begin{equation} \label{eq:bellman_s_rectangular_pol}
\begin{aligned}
    ( \Bell_{\bm{\pi}} \b{v} )_s &= \min_{\*p \in \probs_s}  \sum_{a\in\actions} \left( \pi_{s,a} \cdot \*p_a\tr (\*r_{s,a} + \gamma\cdot \*v) \right) \\
    &= \min_{\*p\in (\Delta^\statecount)^A} \left\{ \sum_{a\in\actions} \pi_{s,a} \cdot \*p_a \tr (\*r_{s,a} + \gamma\cdot\*v)  \ss \sum_{a\in\actions} \lVert \b{p}_a - \bar{\b{p}}_{s,a} \rVert_{1,\b{w}_{s,a}} \le \kappa_{s} \right\}~.
\end{aligned}
\end{equation}
As in the sa-rectangular case, the robust value function is the unique solution to $\b{v}_{\*\pi} = \Bell_{\bm{\pi}} \b{v}_{\*\pi}$ \cite{Wiesemann2013}. The s-rectangular \emph{robust Bellman optimality operator} $\Bell : \Real^S \to \Real^S$ is defined as
\begin{equation} \label{eq:bellman_s_rectangular}
\begin{aligned}
(\Bell \b{v})_s &= \max_{\bm{d} \in\Delta^A}  \min_{\*p\in\probs_{s}} \sum_{a\in\actions} d_a \cdot \*p_a\tr (\*r_{s,a} + \gamma\cdot \*v) \\
&= \max_{\bm{d} \in\Delta^A}  \min_{\*p\in (\Delta^\statecount)^A} \left\{ \sum_{a\in\actions} d_a \cdot \*p_a \tr (\*r_{s,a} + \gamma\cdot\*v)  \ss \sum_{a\in\actions} \lVert \b{p}_a - \bar{\b{p}}_{s,a} \rVert_{1,\b{w}_{s,a}} \le \kappa_{s} \right\}~.
\end{aligned}
\end{equation}
The optimal robust value function $\*v\opt = \*v_{\*\pi\opt}$ in an s-rectangular RMDP is also the unique vector that satisfies $\b{v}\opt = \Bell \b{v}\opt$~\cite{Iyengar2005,Wiesemann2013}. We use the same symbols $\Bell_{\bm{\pi}}$ and $\Bell$ for sa-rectangular and s-rectangular ambiguity sets; their meaning will be clear from the context.

\section{Partial Policy Iteration} \label{sec:ppi}

In this section, we describe and analyze a new iterative method for solving RMDPs with sa-rectangular or s-rectangular ambiguity sets which we call \emph{Partial Policy Iteration}~(PPI). It resembles standard policy iteration; it evaluates policies only partially before improving them. PPI is the first policy iteration method that provably converges to the optimal solution for s-rectangular RMDPs. We first describe and analyze PPI and then compare it with existing robust policy iteration algorithms.

\begin{algorithm}[tb]
    \KwIn{Tolerances $\epsilon_1, \epsilon_2, \ldots$ such that $\epsilon_{k+1} < \gamma \epsilon_k$ and desired precision $\delta$}
    \KwOut{Policy $\bm{\pi}_k$ such that $\norm{\bm{v}_{\bm{\pi}_k}  - \*v\opt}_\infty \le \delta$}
    $k \gets 0$,~
    $\*v_0 \gets $ an arbitrary initial value function \;
    \Repeat{$\norm{\Bell \b{v}_k - \b{v}_k}_\infty < \frac{1-\gamma}{2}\, \delta $}{
        $k\gets k+1$\;
        \tcp{Policy improvement}
        Compute $\b{\tilde v}_k \gets \Bell \b{v}_{k-1}$ and choose \emph{greedy} $\bm{\pi}_k$ such that $\Bell_{\bm{\pi}_k} \b{v}_{k-1} = \b{\tilde v}_k$\;
        \tcp{Policy evaluation}
        Solve MDP in Def.~\ref{def:evaluation_mdp} to get $\b{v}_k$ such that $\norm{\Bell_{\bm{\pi}_{k}} \b{v}_k - \b{v}_k}_\infty \le (1-\gamma)\,\epsilon_k$ \;
    }
    \Return $\bm{\pi}_k$
    \caption{Partial Policy Iteration~(PPI)} 
    \label{alg:rpi}
\end{algorithm}

\cref{alg:rpi} provides an outline of PPI. The algorithm follows the familiar pattern of interleaving approximate policy evaluation with policy improvement and thus resembles the modified policy iteration (also known as optimistic policy iteration) for classical, non-robust MDPs~\cite{Bertsekas1996a,Puterman2005}. In contrast to classical policy iteration, which always evaluates incumbent policies precisely, PPI approximates policy evaluation. This is fast and sufficient, particularly when evaluating highly suboptimal policies.

Notice that by employing the robust Bellman optimality operator $\Bell$, the policy improvement step in \cref{alg:rpi} selects the updated greedy policy $\bm{\pi}_k$ in view of the worst transition function from the ambiguity set. Although the robust Bellman optimality operator $\Bell$ requires more computational effort than its non-robust counterpart, it is necessary as several variants of PPI that employ a non-robust Bellman optimality operator have been shown to fail to converge to the optimal solution~\cite{Condon1993}.

The policy evaluation step in \cref{alg:rpi} is performed by approximately solving a \emph{robust policy evaluation MDP} defined as follows.
\begin{defn}\label{def:evaluation_mdp}
    For an s-rectangular RMDP $(\mathcal{S}, \mathcal{A}, \bm{p}_0, \mathcal{P}, \bm{r}, \gamma)$ and a fixed policy $\bm{\pi} \in \Pi$, we define the \emph{robust policy evaluation MDP} $(\mathcal{S}, \bar{\mathcal{A}}, \bm{p}_0, \bar{\bm{p}}, \bar{\bm{r}}, \gamma)$ as follows. The continuous state-dependent action sets $\bar\actions(s)$, $s\in\states$, represent nature's choice of the transition probabilities and are defined as $\bar\actions(s) = \probs_s$. Thus, nature's decisions are of the form $\bm{\alpha} = (\bm{\alpha}_a)_{a\in\actions} \in (\Delta^S)^A$ with $\bm{\alpha}_a \in \Delta^S$, $a \in \mathcal{A}$. The transition function $\bar{\*p}$ and the rewards $\bar{\bm{r}}$ are defined as
    \[ \bar{\*p}_{s, \bm{\alpha}} = \sum_{a\in\actions} \pi_{s,a} \cdot \*\alpha_a 
    \quad \text{and} \quad
    \bar{r}_{s, \bm{\alpha}} = - \sum_{a\in\actions} \pi_{s,a}\cdot \b{\alpha}_a\tr \b{r}_{s,a}~ , \]    
    where $\bar{\*p}_{s, \bm{\alpha}} \in \Delta^S$ and $\bar{r}_{s, \bm{\alpha}} \in \Real$.     All other parameters of the robust policy evaluation MDP coincide with those of the RMDP. Moreover, for sa-rectangular RMDPs we replace $\bar\actions(s) = \probs_s$ with $\bar\actions(s) = \times_{a\in\actions} \probs_{s,a}$.
\end{defn}

We emphasize that although the robust policy evaluation MDP in \cref{def:evaluation_mdp} computes the robust value function of the policy $\bm{\pi}$, it is, nevertheless a regular non-robust MDP. Indeed, although the robust policy evaluation MDP has an infinite action space, its optimal value function exists since the Assumptions 6.0.1--6.0.4 of \citet{Puterman2005} are satisfied. Moreover, since the rewards $\bar{\bm{r}}$ are continuous (in fact, linear) in $\bm{\alpha}$ and the sets $\bar{\mathcal{A}}(s)$ are compact by construction of $\probs$, there also exists an optimal deterministic stationary policy by Theorem 6.2.7 of \citet{Puterman2005} and the extreme value theorem. When the action sets $\bar{\mathcal{A}}(s)$ are polyhedral, the greedy action for each state can be computed readily from an LP, and the MDP can be solved using any standard MDP algorithm. \cref{sec:value_function_update} describes a new algorithm that computes greedy actions in quasi-linear time, which is much faster than the time required by generic LP solvers.

The next proposition shows that the optimal solution to the robust policy evaluation MDP from \cref{def:evaluation_mdp} indeed corresponds to the robust value function $\bm{v}_{\bm{\pi}}$ of the policy $\bm{\pi}$. 

\begin{prop}\label{prop:mdp_rmdp_equivalence}
    For an RMDP $(\mathcal{S}, \mathcal{A}, \bm{p}_0, \mathcal{P}, \bm{r}, \gamma)$ and a policy $\bm{\pi} \in \Pi$, the optimal value function $\bar{\*v}\opt$ of the associated robust policy evaluation MDP satisfies $\bar{\*v}\opt = -\*v_{\bm{\pi}}$.
\end{prop}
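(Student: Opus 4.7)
The plan is to prove the identity by establishing that $-\*v_{\*\pi}$ is a fixed point of the Bellman optimality operator of the robust policy evaluation MDP and then invoking the uniqueness of that fixed point. Let $\bar\Bell : \Real^S \to \Real^S$ denote the (non-robust) Bellman optimality operator for the MDP of \cref{def:evaluation_mdp}, so that $\bar{\*v}\opt$ is characterized uniquely by $\bar\Bell \bar{\*v}\opt = \bar{\*v}\opt$. The argument then reduces to the algebraic identity $\bar\Bell(-\*v_{\*\pi}) = -\*v_{\*\pi}$, together with the standard fact that $\bar\Bell$ is a $\gamma$-contraction whose fixed point exists (which the text preceding the proposition already justified via the assumptions of \citet{Puterman2005}).

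First I would unfold $\bar\Bell$ for an arbitrary $\bar{\*v} \in \Real^S$ by substituting the definitions of $\bar{\*p}_{s,\*\alpha}$ and $\bar r_{s,\*\alpha}$ from \cref{def:evaluation_mdp}, obtaining
\[
(\bar\Bell \bar{\*v})_s \;=\; \max_{\*\alpha \in \bar\actions(s)} \sum_{a\in\actions} \pi_{s,a}\, \*\alpha_a\tr\bigl(-\*r_{s,a} + \gamma\cdot \bar{\*v}\bigr) \;=\; -\min_{\*\alpha \in \bar\actions(s)} \sum_{a\in\actions} \pi_{s,a}\, \*\alpha_a\tr\bigl(\*r_{s,a} - \gamma\cdot \bar{\*v}\bigr).
\]
Next I would substitute $\bar{\*v} = -\*v_{\*\pi}$ so that $\*r_{s,a} - \gamma\cdot\bar{\*v} = \*r_{s,a} + \gamma\cdot\*v_{\*\pi}$, and compare the resulting expression with the robust Bellman policy operator $\Bell_{\*\pi}$ defined in \eqref{eq:bellman_s_rectangular_pol} for the s-rectangular case and in \eqref{eq:bellman_sa_rectangular_pol} for the sa-rectangular case. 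In the s-rectangular case this comparison is immediate since $\bar\actions(s) = \probs_s$; in the sa-rectangular case one further uses that $\bar\actions(s) = \times_a \probs_{s,a}$ allows the minimum to decompose across actions, matching the per-action minimization inside the sum of \eqref{eq:bellman_sa_rectangular_pol}. In both cases one obtains $(\bar\Bell(-\*v_{\*\pi}))_s = -(\Bell_{\*\pi}\*v_{\*\pi})_s = -(\*v_{\*\pi})_s$, where the last equality is the fixed-point characterization of the robust value function $\*v_{\*\pi} = \Bell_{\*\pi}\*v_{\*\pi}$.

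The final step is to invoke uniqueness: since $-\*v_{\*\pi}$ satisfies $\bar\Bell(-\*v_{\*\pi}) = -\*v_{\*\pi}$ and the optimality equation of the (non-robust) MDP has a unique solution $\bar{\*v}\opt$, we conclude $\bar{\*v}\opt = -\*v_{\*\pi}$. I do not anticipate a substantial obstacle in the proof itself; the only delicate point is the sign-flipping identity $\max_x f(x) = -\min_x (-f(x))$, applied carefully so that the minimizer over $\bar\actions(s)$ in the resulting expression is recognized as nature's minimizer inside $\Bell_{\*\pi}$. The well-posedness of $\bar\Bell$ (existence and uniqueness of its fixed point, existence of a maximizing $\*\alpha$) has already been argued in the paragraph preceding the proposition, so it can be cited rather than re-proved.
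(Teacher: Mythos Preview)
Your proposal is correct and follows essentially the same approach as the paper: both establish the operator identity $\bar\Bell \bar{\*v} = -\Bell_{\*\pi}(-\bar{\*v})$ by unfolding the definitions and applying the sign-flip $\max f = -\min(-f)$, and then invoke uniqueness of a Bellman fixed point. The only cosmetic difference is direction: the paper substitutes $\bar{\*v}\opt$ into the identity and concludes that $-\bar{\*v}\opt$ is the unique fixed point of $\Bell_{\*\pi}$, whereas you substitute $-\*v_{\*\pi}$ and conclude that it is the unique fixed point of $\bar\Bell$; these are equivalent.
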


\begin{proof}
    Let $\bar\Bell$ be the Bellman operator for the robust policy evaluation MDP. To prove the result, we first argue that $\bar{\Bell} \*v = - (\Bell_{\bm{\pi}} (-\*v))$ for every $\*v\in\Real^S$. Indeed, \cref{def:evaluation_mdp} and basic algebraic manipulations reveal that
    \[
        \begin{array}{ll@{}l@{\quad}l} 
            \displaystyle (\bar{\Bell} \*v)_s
            &=&
            \displaystyle \max_{\bm{\alpha} \in \bar{\actions}(s)} \; \bar{r}_{s,\bm{\alpha}} + \gamma \cdot \bar{\*p}_{s,\bm{\alpha}}\tr \*v \\
            &=&
            \displaystyle \max_{\bm{\alpha} \in \probs_s} \; \left( - \sum_{a\in\actions} \pi_{s,a}\cdot \b{\alpha}_a\tr \b{r}_{s,a} \right) + \gamma \cdot  \left(\sum_{a\in\actions} \pi_{s,a} \cdot \*\alpha_a \right)\tr \*v & \text{(from \cref{def:evaluation_mdp})} \\
            &=&
            \displaystyle \max_{\bm{\alpha} \in \probs_s} \; \sum_{a\in\actions} \pi_{s,a}\cdot \*\alpha_a \tr \left( - \*r_{s,a} + \gamma \cdot \*v \right) \\
            &=-&
            \displaystyle \min_{\bm{\alpha} \in \probs_s} \; \sum_{a\in\actions} \pi_{s,a}\cdot \*\alpha_a \tr \left( \*r_{s,a} + \gamma \cdot (-\*v) \right) \;\;=\;\; (- \Bell_{\bm{\pi}} (-\*v) )_s~.
        \end{array}
    \]
    Let $\bar{\*v}\opt = \bar{\Bell} \bar{\*v}\opt$ be the fixed point of $\bar{\Bell}$, whose existence and uniqueness is guaranteed by the Banach fixed-point theorem since $\bar\Bell$ is a contraction under the $L_\infty$-norm. Substituting $\bar{\*v}\opt$ into the identity above then gives
    \[ 
    \bar{\*v}\opt = \bar{\Bell} \bar{\*v}\opt = -\Bell_{\bm{\pi}} (-\bar{\*v}\opt)
    \quad \Longrightarrow \quad
    -\bar{\*v}\opt = \Bell_{\bm{\pi}} (- \bar{\*v}\opt)~, 
    \]
    which shows that $-\bar{\*v}\opt$ is the unique fixed point of $\Bell_{\bm{\pi}}$ since this operator is also an $L_\infty$-contraction (see~\cref{lem:contraction} in \cref{{sec:bell_props}}).
\end{proof}

The robust policy evaluation MDP can be solved by value iteration, (modified) policy iteration, linear programming, or another suitable method. We describe in~\cref{sec:value_function_update} an efficient algorithm for calculating $\Bell_{\bm{\pi}_k}$. The accuracy requirement $\norm{\Bell_{\bm{\pi}_k} \b{v}_k - \b{v}_k}_\infty \le (1-\gamma)\,\epsilon_k$ in \cref{alg:rpi} can be used as the stopping criterion in the employed method. As we show next, this condition guarantees that $\norm{\*v_k - \*v_{\bm{\pi}_k}}_\infty \le \epsilon_k$, that is, $\bm{v}_k$ is an $\epsilon_k$-approximation to the robust value function of $\bm{\pi}_k$.

\begin{prop} \label{cor:pol_value_bound}
    Consider any value function $\*v_k$ and any policy $\bm{\pi}_k$ greedy for $\*v_k$, that is, $\Bell_{\bm{\pi}_k} \*v_k = \Bell \*v_k$. The robust value function $v_{\bm{\pi}_k}$ of  $\bm{\pi}_k$ can then be bounded as follows.
    \[ \norm{\*v_{\bm{\pi}_k} - \*v_k}_\infty \le \frac{1}{1-\gamma} \norm{\Bell_{\bm{\pi}_k} \*v_k - \*v_k}_\infty\]
\end{prop}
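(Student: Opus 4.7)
The plan is to exploit two facts already established in the paper: (i) $\Bell_{\bm{\pi}_k}$ is a $\gamma$-contraction in the $L_\infty$-norm (the cited \cref{lem:contraction}), and (ii) the robust value function $\*v_{\bm{\pi}_k}$ is the unique fixed point of $\Bell_{\bm{\pi}_k}$, i.e.\ $\Bell_{\bm{\pi}_k} \*v_{\bm{\pi}_k} = \*v_{\bm{\pi}_k}$ (this holds for both sa- and s-rectangular cases, as recalled right after equations~\eqref{eq:bellman_sa_rectangular_pol} and~\eqref{eq:bellman_s_rectangular_pol}). Interestingly, the greediness hypothesis $\Bell_{\bm{\pi}_k} \*v_k = \Bell \*v_k$ does not appear to be needed for this bound; the inequality is really a statement about any policy $\bm{\pi}_k$ and any vector $\*v_k$.

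The concrete steps are as follows. First, insert $\Bell_{\bm{\pi}_k} \*v_k$ via the triangle inequality:
\[
\norm{\*v_{\bm{\pi}_k} - \*v_k}_\infty \;\le\; \norm{\*v_{\bm{\pi}_k} - \Bell_{\bm{\pi}_k} \*v_k}_\infty + \norm{\Bell_{\bm{\pi}_k} \*v_k - \*v_k}_\infty .
\]
Second, rewrite the first summand using the fixed-point property $\*v_{\bm{\pi}_k} = \Bell_{\bm{\pi}_k} \*v_{\bm{\pi}_k}$ and then apply the $\gamma$-contraction of $\Bell_{\bm{\pi}_k}$:
\[
\norm{\*v_{\bm{\pi}_k} - \Bell_{\bm{\pi}_k} \*v_k}_\infty
=
\norm{\Bell_{\bm{\pi}_k} \*v_{\bm{\pi}_k} - \Bell_{\bm{\pi}_k} \*v_k}_\infty
\;\le\;
\gamma \cdot \norm{\*v_{\bm{\pi}_k} - \*v_k}_\infty .
\]
Combining these two estimates yields
\[
\norm{\*v_{\bm{\pi}_k} - \*v_k}_\infty \;\le\; \gamma\cdot\norm{\*v_{\bm{\pi}_k} - \*v_k}_\infty + \norm{\Bell_{\bm{\pi}_k} \*v_k - \*v_k}_\infty ,
\]
and rearranging (using $\gamma<1$) gives the desired bound after dividing by $1-\gamma$.

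The proof is essentially textbook once the two structural properties are in hand, so the only subtle point is confirming that they are indeed available in the s-rectangular setting as well, not just the sa-rectangular one. Both are; the paper asserts the fixed-point characterization in both cases immediately after the respective Bellman operator definitions, and the contraction lemma \cref{lem:contraction} is invoked generically in the proof of \cref{prop:mdp_rmdp_equivalence} above. There is no real obstacle and no need for any case split between the two rectangularity notions.
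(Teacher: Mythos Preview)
Your proof is correct and follows essentially the same route as the paper. The paper's own proof simply invokes \cref{lem:opt_value_function_approx} in the appendix (which in turn defers to a standard argument in Bertsekas), whereas you unfold that argument explicitly via the triangle inequality, the fixed-point identity $\*v_{\bm{\pi}_k}=\Bell_{\bm{\pi}_k}\*v_{\bm{\pi}_k}$, and the $\gamma$-contraction of $\Bell_{\bm{\pi}_k}$; your observation that the greediness hypothesis is not actually used is also correct.
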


\begin{proof}
    The statement follows immediately from \cref{lem:opt_value_function_approx} in \cref{sec:bell_props} if we set $\bm{\pi} = \bm{\pi}_k$ and $\*v = \*v_k$.
\end{proof}

\cref{alg:rpi} terminates once the condition $\norm{\Bell \b{v}_k - \b{v}_k}_\infty < \frac{1-\gamma}{2}\, \delta$ is met. Note that this condition can be verified using the computations from the current iteration and thus does not require a new application of the Bellman optimality operator. As the next proposition shows, this termination criterion guarantees that the computed policy $\bm{\pi}_k$ is within $\delta$ of the optimal policy.

\begin{prop}\label{cor:opt_value_bound}
    Consider any value function $\*v_k$ and any policy $\bm{\pi}_k$ greedy for $\*v_k$. If $\*v\opt$ is the optimal robust value function, then
    \[ \norm{\*v\opt - \*v_{\bm{\pi}_k}}_\infty \le \frac{2}{1-\gamma} \norm{\Bell \*v_k - \*v_k}_\infty ~,\]
    where $\*v_{\bm{\pi}_k}$ the robust value function of $\bm{\pi}_k$. 
\end{prop}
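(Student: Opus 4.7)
The plan is to apply the triangle inequality and split the target quantity as
\[
\lVert \*v\opt - \*v_{\*\pi_k} \rVert_\infty \;\le\; \lVert \*v\opt - \*v_k \rVert_\infty + \lVert \*v_k - \*v_{\*\pi_k} \rVert_\infty,
\]
and then show that each of the two terms on the right is bounded by $\tfrac{1}{1-\gamma}\lVert \Bell \*v_k - \*v_k \rVert_\infty$. Adding these two bounds yields the factor of $2$ stated in the proposition.

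For the first term, I would exploit that $\*v\opt$ is the unique fixed point of $\Bell$ and that $\Bell$ is a $\gamma$-contraction under $\lVert \cdot \rVert_\infty$ (a fact already noted in the excerpt and used in the proof of \cref{prop:mdp_rmdp_equivalence}). Inserting $\Bell \*v_k$ via the triangle inequality and applying the contraction property gives
\[
\lVert \*v\opt - \*v_k \rVert_\infty \;\le\; \lVert \Bell \*v\opt - \Bell \*v_k \rVert_\infty + \lVert \Bell \*v_k - \*v_k \rVert_\infty \;\le\; \gamma \lVert \*v\opt - \*v_k \rVert_\infty + \lVert \Bell \*v_k - \*v_k \rVert_\infty,
\]
from which rearranging yields $\lVert \*v\opt - \*v_k \rVert_\infty \le \tfrac{1}{1-\gamma}\lVert \Bell \*v_k - \*v_k \rVert_\infty$.

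For the second term, I would use that $\*\pi_k$ is greedy with respect to $\*v_k$, so $\Bell_{\*\pi_k} \*v_k = \Bell \*v_k$. Invoking the just-established \cref{cor:pol_value_bound} then gives
\[
\lVert \*v_{\*\pi_k} - \*v_k \rVert_\infty \;\le\; \frac{1}{1-\gamma} \lVert \Bell_{\*\pi_k} \*v_k - \*v_k \rVert_\infty \;=\; \frac{1}{1-\gamma} \lVert \Bell \*v_k - \*v_k \rVert_\infty.
\]

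There is essentially no obstacle here: the argument is a standard two-step bound via the triangle inequality plus the contraction of $\Bell$ and the previously proven policy-evaluation error bound. The only subtlety worth flagging is that both $\Bell$ and $\Bell_{\*\pi_k}$ must be $\gamma$-contractions in $\lVert \cdot \rVert_\infty$; these properties are attributed to \cref{lem:contraction} in \cref{sec:bell_props} and can be cited directly. The greedy identity $\Bell_{\*\pi_k} \*v_k = \Bell \*v_k$ is what makes the final residual $\lVert \Bell \*v_k - \*v_k \rVert_\infty$ appear in both bounds and thus leads cleanly to the factor $2/(1-\gamma)$.
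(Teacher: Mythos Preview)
Your proposal is correct and follows essentially the same approach as the paper: triangle inequality into $\lVert \*v\opt - \*v_k \rVert_\infty + \lVert \*v_k - \*v_{\*\pi_k} \rVert_\infty$, then bound each term by $\tfrac{1}{1-\gamma}\lVert \Bell \*v_k - \*v_k\rVert_\infty$ using, respectively, the contraction of $\Bell$ (the paper cites \cref{lem:opt_value_function_approx}, which is exactly the inequality you derive inline) and \cref{cor:pol_value_bound} combined with the greedy identity $\Bell_{\*\pi_k}\*v_k = \Bell \*v_k$.
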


The statement of \cref{cor:opt_value_bound} parallels the well-known properties of approximate value functions for classical, non-robust MDPs~\cite{Williams1993a}.

\begin{proof}[Proof of \cref{cor:opt_value_bound}]
    Using the triangle inequality of vector norms, we see that
    \[ 
    \norm{\*v\opt - \*v_{\bm{\pi}_k}}_\infty \le \norm{\*v\opt  - \*v_k}_\infty + \norm{\*v_k - \*v_{\bm{\pi}_k}}_\infty~.
    \]
    Using \cref{lem:opt_value_function_approx} in \cref{sec:bell_props} with $\*v = \*v_k$, the first term $\norm{\*v\opt  - \*v_k}_\infty$ can be bounded from above as follows.
    \[ 
    \norm{\*v\opt  - \*v_k}_\infty \le \frac{1}{1-\gamma} \norm{\Bell \*v_k - \*v_k}_\infty
    \]
    The second term $\norm{\*v_k - \*v_{\bm{\pi}_k}}_\infty$ above can be bounded using \cref{cor:pol_value_bound} and the fact that $\Bell_{\bm{\pi}_k} \*v_k = \Bell \*v_k$, which holds since $\bm{\pi}_k$ is greedy for $\*v_k$:
    \[
    \norm{\*v_k - \*v_{\bm{\pi}_k}}_\infty \le \frac{1}{1-\gamma} \norm{\Bell \*v_k - \*v_k}_\infty
    \]
    The result then follows by combining the two bounds.
\end{proof}

We are now ready to show that PPI converges linearly with a rate of at most $\gamma$ to the optimal robust value function. This is no worse than the convergence rate of the robust value iteration. The result mirrors similar results for classical, non-robust MDPs. Regular policy iteration is not known to converge at a faster rate than value iteration even though it is strongly polynomial~\cite{Puterman2005,Post2015,Hansen2013}.

\begin{thm}\label{thm:ppi_convergence}
    Consider $c > 1$ such that $\epsilon_{k+1} \le \gamma^c \, \epsilon_k$ for all $k$ in  \cref{alg:rpi}. Then the optimality gap of the policy $\bm{\pi}_{k+1}$ computed in each iteration $k \ge 1$ is bounded from above by
    \[ \norm{\b{v}\opt - \b{v}_{\bm{\pi}_{k+1}}}_\infty \le \gamma^k \left(\norm{\b{v}\opt - \b{v}_{\bm{\pi}_1}}_\infty + \frac{2 \, \epsilon_1}{(1-\gamma^{c-1})(1-\gamma)} \right)~.\]
\end{thm}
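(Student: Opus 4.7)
The plan is to prove a per-iteration recursion of the form
\[
e_{k+1} \;\le\; \gamma\, e_k \;+\; \frac{2\gamma}{1-\gamma}\,\epsilon_k
\quad \text{where} \quad e_k \;=\; \|\b{v}\opt - \b{v}_{\bm{\pi}_k}\|_\infty,
\]
and then to unroll it, exploiting $\epsilon_{k+1}\le\gamma^c\epsilon_k$ (which iterates to $\epsilon_j \le \gamma^{c(j-1)}\epsilon_1$) to reduce the accumulated error term to a geometric series of ratio $\gamma^{c-1}$. To obtain the recursion I split
\[
\b{v}\opt - \b{v}_{\bm{\pi}_{k+1}} \;=\; (\b{v}\opt - \Bell \b{v}_{\bm{\pi}_k}) \;+\; (\Bell \b{v}_{\bm{\pi}_k} - \b{v}_{\bm{\pi}_{k+1}})
\]
and bound the two summands componentwise as $\b{v}\opt - \Bell\b{v}_{\bm{\pi}_k}\le\gamma e_k\,\one$ and $\Bell\b{v}_{\bm{\pi}_k} - \b{v}_{\bm{\pi}_{k+1}}\le \frac{2\gamma\epsilon_k}{1-\gamma}\,\one$.

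The first bound is routine: $\b{v}\opt = \Bell\b{v}\opt$ and the $\gamma$-contraction of $\Bell$ yield $\|\Bell\b{v}\opt - \Bell\b{v}_{\bm{\pi}_k}\|_\infty\le\gamma e_k$, which translates directly into the componentwise inequality. The second bound is the crux (an approximate policy improvement lemma). Because $\bm{\pi}_{k+1}$ is greedy for $\b{v}_k$ rather than for $\b{v}_{\bm{\pi}_k}$, I first chain the two $\gamma$-contractions together with $\|\b{v}_k - \b{v}_{\bm{\pi}_k}\|_\infty \le \epsilon_k$ (from the inner-loop stopping criterion via \cref{cor:pol_value_bound}) to show
\[
\Bell_{\bm{\pi}_{k+1}} \b{v}_{\bm{\pi}_k} \;\ge\; \Bell_{\bm{\pi}_{k+1}} \b{v}_k - \gamma\epsilon_k\,\one \;=\; \Bell \b{v}_k - \gamma\epsilon_k\,\one \;\ge\; \Bell\b{v}_{\bm{\pi}_k} - 2\gamma\epsilon_k\,\one.
\]
I then iterate $\Bell_{\bm{\pi}_{k+1}}$ starting from $\b{v}_{\bm{\pi}_k}$. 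The translation identity $\Bell_{\bm{\pi}}(v + c\,\one) = \Bell_{\bm{\pi}} v + \gamma c\,\one$ (which holds because nature's inner minimum is over probability vectors), monotonicity of $\Bell_{\bm{\pi}_{k+1}}$, and the trivial observation $\Bell\b{v}_{\bm{\pi}_k}\ge\Bell_{\bm{\pi}_k}\b{v}_{\bm{\pi}_k}=\b{v}_{\bm{\pi}_k}$ together yield inductively
\[
\Bell_{\bm{\pi}_{k+1}}^{n} \b{v}_{\bm{\pi}_k} \;\ge\; \Bell\b{v}_{\bm{\pi}_k} \;-\; 2\gamma\epsilon_k\,(1 + \gamma + \cdots + \gamma^{n-1})\,\one,
\]
and passing $n\to\infty$ (the iterates converge to $\b{v}_{\bm{\pi}_{k+1}}$ by the Banach fixed-point theorem) gives the desired bound.

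With the recursion in hand, the remaining algebra is straightforward: unrolling from $e_1$ produces
\[
e_{k+1} \;\le\; \gamma^k e_1 \;+\; \frac{2\gamma}{1-\gamma}\sum_{j=1}^{k}\gamma^{k-j}\epsilon_j,
\]
and substituting $\epsilon_j\le\gamma^{c(j-1)}\epsilon_1$ collapses the sum through $\sum_{j=1}^{k}\gamma^{k-j+c(j-1)} = \gamma^{k-1}\sum_{i=0}^{k-1}\gamma^{i(c-1)}\le\gamma^{k-1}/(1-\gamma^{c-1})$, producing exactly the constants stated in the theorem. The hard part is the approximate policy improvement step: mixing norms as in textbook approximate-policy-iteration arguments only delivers $e_{k+1}\le\frac{2\gamma}{1-\gamma}(e_k + \epsilon_k)$, which fails to contract whenever $\gamma > 1/3$. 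The trick is to keep the improvement analysis strictly componentwise so that the unavoidable $1/(1-\gamma)$ factor multiplies only $\epsilon_k$ and never $e_k$.
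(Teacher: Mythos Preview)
Your proof is correct and reaches the same per-iteration recursion $e_{k+1}\le\gamma e_k + \tfrac{2\gamma}{1-\gamma}\epsilon_k$ as the paper, followed by the same unrolling and geometric-series estimate. The difference lies in how the approximate policy-improvement bound is established. The paper introduces an auxiliary lemma (\cref{lem:Bellman_bound_linear}) stating that $\Bell_{\bm\pi}\*x - \Bell_{\bm\pi}\*y \ge \gamma\*P(\*x-\*y)$ for some stochastic matrix $\*P$, and then inverts $(\eye-\gamma\*P)$ via its Neumann series to conclude $\b{v}_{\bm{\pi}_{k+1}}\ge\b{v}_{\bm{\pi}_k}-\tfrac{2\gamma\epsilon_k}{1-\gamma}\one$; the paper's subsequent decomposition of $\b{v}\opt-\b{v}_{\bm{\pi}_{k+1}}$ invokes the same lemma a second time in a longer chain that eventually routes through $\Bell_{\bm{\pi}\opt}$. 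You instead iterate $\Bell_{\bm{\pi}_{k+1}}^n$ on $\b{v}_{\bm{\pi}_k}$ and pass to the limit, which is the dynamic-programming analogue of that Neumann series and avoids the auxiliary stochastic matrix entirely; your two-term split $\b{v}\opt-\b{v}_{\bm{\pi}_{k+1}} = (\b{v}\opt-\Bell\b{v}_{\bm{\pi}_k}) + (\Bell\b{v}_{\bm{\pi}_k}-\b{v}_{\bm{\pi}_{k+1}})$ is also tidier. What the paper's route buys is a reusable standalone tool; what yours buys is self-containment, relying only on monotonicity, translation invariance, and contraction. One small point you leave implicit (and the paper makes explicit via \cref{cor:optimal_dominates}) is that $\b{v}\opt\ge\b{v}_{\bm{\pi}_{k+1}}$, which is needed to pass from the componentwise upper bound to the $L_\infty$ norm.
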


\cref{thm:ppi_convergence} requires the sequence of acceptable evaluation errors $\epsilon_k$ to decrease faster than the discount factor $\gamma$. As one would expect, the theorem shows that smaller values of $\epsilon_k$ lead to a faster convergence in terms of the number of iterations. On the other hand, smaller $\epsilon_k$ values also imply that each individual iteration is computationally more expensive.

The proof of \cref{thm:ppi_convergence} follows an approach similar to the convergence proofs  of policy iteration~\cite{Puterman1979a,Puterman2005}, modified policy iteration~\cite{Puterman1978,Puterman2005} and robust modified policy iteration~\cite{Kaufman2013}. The proofs for (modified) policy iteration start by assuming that the initial value function $\*v_0$ satisfies $\*v_0 \le \*v\opt$; the policy updates and evaluations then increase $\*v_k$ as fast as value iteration while preserving $\*v_k \le \*w_k$ for some $\*w_k$ satisfying $\lim_{k=\infty} \*w_k = \*v\opt$. The incomplete policy evaluation in RMDPs may result in $\*v_k\ge \*v\opt$, which precludes the use of the modified policy iteration proof strategy. The convergence proof for RMPI inverts the argument by starting with $\*v_0 \ge \*v\opt$ and decreasing $\*v_k$ while preserving $\*v_k \ge \*w_k$. This property, however, is only guaranteed to hold when the policy evaluation step is performed using value iteration. PPI, on the other hand, makes no assumptions on how the policy evaluation step is performed. Its approximate value functions $\*v_k$ may not satisfy $\*v_k \le \*v\opt$, and the decreasing approximation errors $\epsilon_{k}$ guarantee improvements in $\*v_{\bm{\pi}_k}$ that are sufficiently close to those of robust policy iteration. A key challenge is that $\*v_k \neq \*v_{\bm{\pi}_k}$, which implies that the incumbent policies $\bm{\pi}_k$ can actually become \emph{worse} in the short run.

\begin{proof}[Proof of \cref{thm:ppi_convergence}]
    We first show that the robust value function of policy $\bm{\pi}_{k+1}$ is at least as good as that of $\bm{\pi}_k$ with a tolerance that depends on $\epsilon_k$. Using this result, we then prove that in each iteration $k$, the optimality gap of the determined policy $\bm{\pi}_k$ shrinks by the factor $\gamma$, again with a tolerance that depends on $\epsilon_k$. In the third and final step, we recursively apply our bound on the optimality gap of the policies $\bm{\pi}_1, \bm{\pi}_2, \ldots$ to obtain the stated convergence rate.
    
    We remind the reader that for each iteration $k$ of \cref{alg:rpi}, $\*v_k$ denotes the approximate robust value function of the incumbent policy $\bm{\pi}_k$, whereas $\bm{v}_{\bm{\pi}_k}$ denotes the \emph{precise} robust value function of $\bm{\pi}_k$. We abbreviate the robust Bellman policy update $\Bell_{\bm{\pi}_k}$ by $\Bell_k$. Moreover, we denote by $\bm{\pi}\opt$ the optimal policy with robust value function $\bm{v}\opt$. The proof uses several properties of robust Bellman operators that are summarized in \cref{sec:bell_props}.

    As for the first step, recall that the policy evaluation step of PPI computes a value function $\*v_k$ that approximates the robust value function $\*v_{\bm{\pi}_k}$ within a certain tolerance:
    \[ \norm{\Bell_{k} \b{v}_k - \b{v}_k}_\infty \le (1-\gamma)\,\epsilon_k~.\] 
    Combining this bound with \cref{cor:pol_value_bound} yields $\norm{\*v_{\bm{\pi}_k} - \b{v}_k}_\infty \le \epsilon_k$, which is equivalent to
    \begin{align}
        \label{eq:impr_bound_one} \*v_{\bm{\pi}_k} &\ge \b{v}_k \mspace{10mu} -\epsilon_k \cdot \one \\
        \label{eq:impr_bound_two} \b{v}_k &\ge \*v_{\bm{\pi}_k} - \epsilon_k \cdot \one ~.
    \end{align}
    We use this bound to bound $\Bell_{k+1} \*v_{\bm{\pi}_k}$ from below as follows:
    \begin{equation} \label{eq:lower_bound_bellman}
        \begin{aligned}
            \Bell_{k+1} \*v_{\bm{\pi}_k} &\ge \Bell_{k+1} (\b{v}_k - \epsilon_k \one) && \text{from \eqref{eq:impr_bound_two} and \cref{lem:monotone}} \\
            &\ge \Bell_{k+1} \b{v}_k - \gamma \epsilon_k \one &&
            \text{from \cref{lem:bellman_linear_translation}} \\
            &\ge \Bell_{k} \b{v}_k - \gamma \epsilon_k \one &&
            \text{$\Bell_{k+1}$ is greedy to $\*v_k$} \\
            &\ge \Bell_{k} (\*v_{\bm{\pi}_k} - \epsilon_k \one)  - \gamma \epsilon_k \one &&
            \text{from \eqref{eq:impr_bound_one} and \cref{lem:monotone}} \\
            &\ge \Bell_{k} \*v_{\bm{\pi}_k} - 2 \gamma \epsilon_k \one &&
            \text{from \cref{lem:bellman_linear_translation}}\\
            &\ge \*v_{\bm{\pi}_k} - 2 \gamma \epsilon_k \one &&
            \text{because $\*v_{\bm{\pi}_k} = \Bell_k \*v_{\bm{\pi}_k}$}
        \end{aligned}
    \end{equation}
    This lower bound on $\Bell_{k+1} \*v_{\bm{\pi}_k}$ readily translates into the following lower bound on $\*v_{\bm{\pi}_{k+1}}$:
    \begin{align*}
        \b{v}_{\bm{\pi}_{k+1}} - \b{v}_{\bm{\pi}_{k}} &= \Bell_{k+1} \b{v}_{\bm{\pi}_{k+1}} - \b{v}_{\bm{\pi}_{k}} && \text{from $\*v_{\bm{\pi}_{k+1}} = \Bell_{k+1} \*v_{\bm{\pi}_{k+1}}$} \\
        &= (\Bell_{k+1} \b{v}_{\bm{\pi}_{k+1}} - \Bell_{k+1} \b{v}_{\bm{\pi}_{k}}) + (\Bell_{k+1} \b{v}_{\bm{\pi}_{k}} - \b{v}_{\bm{\pi}_{k}}) && \text{add $0$} \\
        &\ge \gamma \*P (\b{v}_{\bm{\pi}_{k+1}} - \b{v}_{\bm{\pi}_{k}}) + (\Bell_{k+1} \b{v}_{\bm{\pi}_{k}} - \b{v}_{\bm{\pi}_{k}}) &&
        \text{from \cref{lem:Bellman_bound_linear}} \\
        &\ge \gamma \*P (\b{v}_{\bm{\pi}_{k+1}} - \b{v}_{\bm{\pi}_{k}}) - 2 \gamma \epsilon_k \one
        && \text{from \eqref{eq:lower_bound_bellman}}
    \end{align*}
    Here, $\*P$ is the stochastic matrix defined in \cref{lem:Bellman_bound_linear}. Basic algebraic manipulations show that the inequality above further simplifies to
    \[ (\eye - \gamma \*P) (\b{v}_{\bm{\pi}_{k+1}} - \b{v}_{\bm{\pi}_{k}}) \ge -2 \gamma \epsilon_k \one~. \]
    Recall that for any stochastic matrix $\*P$, the inverse $(\eye - \gamma \*P)^{-1}$ exists, is monotone, and satisfies $(\eye - \gamma \*P)^{-1} \one = (1-\gamma)^{-1} \one$, which can all be seen from its von Neumann series expansion. Using these properties, the lower bound on $\*v_{\bm{\pi}_{k+1}}$ simplifies to
    \begin{equation} \label{eq:improvement_bellman}
        \b{v}_{\bm{\pi}_{k+1}} \ge \*v_{\bm{\pi}_k} - \frac{2\, \gamma\, \epsilon_k }{1-\gamma} \one ~,
    \end{equation}
    which concludes the first step.
    
    To prove the second step, note that the policy improvement step of PPI reduces the optimality gap of policy $\bm{\pi}_k$ as follows:
    \begin{align*}
        \b{v}\opt - \b{v}_{\bm{\pi}_{k+1}} &= \b{v}\opt - \Bell_{k+1}\b{v}_{\bm{\pi}_{k+1}} &&
        \text{from the definition of $v_{\bm{\pi}_{k+1}}$} \\
        &= (\b{v}\opt - \Bell_{k+1}\b{v}_{\bm{\pi}_{k}}) - (\Bell_{k+1} \b{v}_{\bm{\pi}_{k+1}} - \Bell_{k+1} \*v_{\bm{\pi}_k}) &&
        \text{subtract $0$} \\
        &\leq (\b{v}\opt - \Bell_{k+1}\b{v}_{\bm{\pi}_{k}}) - \gamma \cdot \*P(\b{v}_{\bm{\pi}_{k+1}} - \*v_{\bm{\pi}_k})  &&
        \text{for some $\*P$ from \cref{lem:Bellman_bound_linear}}\\
        &\le (\b{v}\opt - \Bell_{k+1}\b{v}_{\bm{\pi}_{k}})  +  \frac{2 \gamma^2 \epsilon_k}{1-\gamma} \one 
        && \text{from \eqref{eq:improvement_bellman} and $\*P\one = \one$} \\
        &\le (\b{v}\opt - \Bell_{k+1} \b{v}_{k}) + \left(  \gamma \epsilon_k +  \frac{2 \gamma^2 \epsilon_k }{1-\gamma} \right) \one 
        && \text{from \eqref{eq:lower_bound_bellman}}\\
        &\le (\b{v}\opt - \Bell_{\bm{\pi}\opt} \b{v}_{k}) + \left(  \gamma \epsilon_k +  \frac{2 \gamma^2 \epsilon_k }{1-\gamma} \right) \one 
        && \text{$\Bell_{k+1}$ is greedy to $\*v_k$}\\
        &\le (\b{v}\opt - \Bell_{\bm{\pi}\opt} \b{v}_{\bm{\pi}_k}) + \left(  2 \gamma \epsilon_k +  \frac{2 \gamma^2 \epsilon_k }{1-\gamma} \right) \one 
        && \text{from \eqref{eq:impr_bound_two}}\\
        &= (\Bell_{\bm{\pi}\opt} \b{v}\opt - \Bell_{\bm{\pi}\opt} \b{v}_{\bm{\pi}_{k}}) +  \frac{2 \gamma \epsilon_k}{ 1-\gamma} \one 
        && \text{from $\*v\opt = \Bell_{\bm{\pi}\opt}\*v\opt$}
    \end{align*}
    \cref{cor:optimal_dominates} shows that $\b{v}\opt \geq \b{v}_{\bm{\pi}_{k+1}}$, which allows us to apply the $L_\infty$-norm operator on both sides of the inequality above. Using the contraction property of the robust Bellman policy update (see~\cref{lem:contraction}), the bound above implies that
    \begin{equation} \label{eq:recursive_error}
        \norm{\b{v}\opt - \b{v}_{\bm{\pi}_{k+1}}}_\infty
        \;\; \le \;\;
        \norm{\Bell_{\bm{\pi}\opt} \b{v}\opt - \Bell_{\bm{\pi}\opt} \b{v}_{\bm{\pi}_{k}}}_\infty + \frac{2\gamma\epsilon_k}{1-\gamma}
        \;\; \le \;\;
        \gamma \norm{\b{v}\opt -  \b{v}_{\bm{\pi}_{k}}}_\infty + \frac{2\gamma\epsilon_k}{1-\gamma} ~,
    \end{equation}
    which concludes the second step.

    To prove the second step, we recursively apply the inequality \eqref{eq:recursive_error} to bound the overall optimality gap of policy $\bm{\pi}_{k+1}$ as follows:
    \begin{align*}
        \norm{\b{v}\opt - \b{v}_{\bm{\pi}_{k+1}}}_\infty &\le \gamma \norm{\b{v}\opt - \b{v}_{\bm{\pi}_k}}_\infty + \frac{2\gamma\epsilon_k}{1-\gamma} \\
        &\le \gamma^2 \norm{\b{v}\opt - \b{v}_{\bm{\pi}_{k-1}}}_\infty + \frac{2\gamma\epsilon_k} {1-\gamma}  + \frac{2\gamma^2\epsilon_{k-1} }{1-\gamma}  \\
        &\le \ldots  \\
        &\le \gamma^k \norm{\b{v}\opt - \b{v}_{\bm{\pi}_1}}_\infty + \frac{2}{1-\gamma} \sum_{j=0}^{k-1} \epsilon_{j+1} \gamma^{k-j}  ~.
    \end{align*}
    The postulated choice $\epsilon_j \leq \gamma^c \epsilon_{j-1} \leq \gamma^{2c} \epsilon_{j-2} \leq \ldots \leq \gamma^{(j-1)c} \epsilon_1$ with $c > 1$ implies that
    \[
    \sum_{j=0}^{k-1} \epsilon_{j+1} \gamma^{k-j}
    \;\; \le \;\;
    \epsilon_1  \sum_{j=0}^{k-1} \gamma^{j c} \gamma^{k-j}
    \;\; = \;\;
    \gamma^{k} \epsilon_1 \sum_{j=0}^{k-1} \gamma^{j (c-1)}
    \;\; \le \;\;
    \gamma^k \frac{\epsilon_1}{1-\gamma^{c-1}} ~.
    \]
    The result follows by substituting the value of the geometric series in the bound above.
\end{proof}

PPI improves on several existing algorithms for RMDPs. To the best of our knowledge, the only method that has been shown to solve s-rectangular RMDPs is the robust value iteration~\cite{Wiesemann2013}. Robust value iteration is simple and versatile, but it may be inefficient because it employs the computationally intensive robust Bellman optimality operator $\Bell$ both to evaluate and to improve the incumbent policy. In contrast, PPI only relies on $\Bell$ to improve the incumbent policy $\bm{\pi}_k$, whereas the robust value function of $\bm{\pi}_k$ is evaluated (approximately) using the more efficient robust Bellman policy update $\Bell_{\bm{\pi}_k}$. In addition to robust value iteration, several methods proposed for sa-rectangular RMDPs can potentially be generalized to s-rectangular problems.

Robust Modified Policy Iteration~(RMPI)~\cite{Kaufman2013} is the algorithm for sa-rectangular RMDPs that is most similar to PPI. RMPI can be cast as a special case of PPI in which the policy evaluation step is solved by value iteration rather than by an arbitrary MDP solver. Value iteration can be significantly slower than (modified) policy iteration in this context due to the complexity of computing $\Bell_{\bm{\pi}_k}$. RMPI also does not reduce the approximation error $\epsilon_k$ in the policy evaluations but instead runs a fixed number of value iterations. The decreasing tolerances $\epsilon_k$ of PPI are key to guaranteeing its convergence rate; a comparable convergence rate is not known for RMPI.

Robust policy iteration~\cite{Iyengar2005,Hansen2013} is also similar to PPI, but it has only been proposed in the context of sa-rectangular RMDPs. The main difference to PPI is that the policy evaluation step in robust policy iteration is performed exactly with the tolerance $\epsilon_k = 0$ for all iterations $k$, which can be done by solving a large LP~\cite{Iyengar2005}. Although this approach is elegant and simple to implement, our experimental results show that it does not scale to even moderately-sized problems. 

PPI is general and works for sa-rectangular and s-rectangular RMDPs whose robust Bellman operators $\Bell$ and $\Bell_{\bm{\pi}}$ can be computed efficiently. In the next two sections we show that, in fact, the robust Bellman optimality and update operators can be computed efficiently for sa-rectangular and s-rectangular ambiguity sets defined by bounds on the $L_1$-norm.

\section{Computing the Bellman Operator: SA-Rectangular Sets} \label{sec:fast_solution}

In this section, we develop an efficient homotopy algorithm to compute the sa-rectangular robust Bellman optimality operator $\Bell$ defined in~\eqref{eq:bellman_sa_rectangular}. Our algorithm computes the inner minimization over $\bm{p} \in \mathcal{P}_{s,a}$ in~\eqref{eq:bellman_sa_rectangular}; to compute $\Bell \*v$ for some $\*v\in\Real^S$, we simply execute our algorithm for each action $a \in \actions$ and select the maximum of the obtained objective values. To simplify the notation, we fix a state $s \in \states$ and an action $a \in \actions$ throughout this section and drop the associated subscripts whenever the context is unambiguous (for example, we use $\bar{\*p}$ instead of $\bar{\*p}_{s,a}$). We also fix a value function $\*v$ throughout this section.

Our algorithm uses the idea of homotopy continuation~\cite{Vanderbei2001} to solve the following parametric optimization problem $q: \RealPlus\to\Real$, which is parameterized by $\xi$:
\begin{equation} \label{eq:q_optimization_l1}
	q(\xi) = \min_{\*p \in\Delta^\statecount} \Bigl\{  \b{p}\tr \b{z} \ss \norm{\b{p} - \bar{\b{p}}}_{1,\b{w}} \le \xi \Bigr\}
\end{equation}
Here,  we use the abbreviation $\*z = \b{r}_{s,a} + \gamma\cdot \*v$. Note that $\xi$ plays the role of the budget $\kappa_{s,a}$ in our sa-rectangular uncertainty set $\mathcal{P}_{s,a}$, and that $q (\kappa_{s,a})$ computes the inner minimization over $\bm{p} \in \mathcal{P}_{s,a}$ in~\eqref{eq:bellman_sa_rectangular}. Our homotopy method achieves its efficiency by computing $q(\xi)$ for $\xi = 0$ and subsequently for all $\xi \in (0, \kappa_{s,a}]$ instead of computing $q (\kappa_{s,a})$ directly~\cite{Asif2009,Garrigues2009}. The problem $q (0)$ is easy since the only feasible solution is $\b{p} = \bar{\b{p}}$, and thus $q(0) = \bar{\*p}\tr \*z$. We then trace an optimal solution $\*p\opt(\xi)$ as $\xi$ increases, until we reach $\xi = \kappa_{s,a}$. Our homotopy algorithm is fast because the optimal solution can be traced efficiently when $\xi$ is increased. As we show below, $q(\xi)$ is piecewise affine with at most $S^2$ pieces (or $S$ pieces, if all components of $\*w$ are equal), and exactly two elements of $\*p\opt (\xi)$ change when $\xi$ increases.

By construction, $q(\xi)$ varies with $\xi$ only when $\xi$ is small enough so that the constraint $\norm{\b{p} - \bar{\b{p}}}_{1,\b{w}} \le \xi$ in~\eqref{eq:q_optimization_l1} is binding at optimality. To avoid case distinctions for the trivial case when $\norm{\b{p} - \bar{\b{p}}}_{1,\b{w}} < \xi$ at optimality and $q(\xi)$ is constant, we assume in the remainder of this section that $\xi$ is small enough. Our homotopy algorithm treats large $\xi$ identically to the largest $\xi$ for which the constraint is binding at optimality.

In the remainder of this section, we first investigate the structure of basic feasible solutions to the problem~\eqref{eq:q_optimization_l1} in \cref{sec:param}. We then exploit this structure to develop our homotopy method in \cref{sec:homo}, and we conclude with a complexity analysis in \cref{sec:homotopy_complexity}.

\subsection{Properties of the Parametric Optimization Problem $q (\xi)$}\label{sec:param}

Our homotopy method employs the following LP formulation of problem~\eqref{eq:q_optimization_l1}:
\begin{equation}\label{eq:q_linear_program}
	\begin{array}{r@{\;\,}l@{\quad}l}
		q(\xi) = & \displaystyle \min_{\b{p}, \b{l} \in\Real^S} & \displaystyle \b{z}\tr \b{p} \\
		& \displaystyle \text{subject to}
		& \displaystyle \b{p} - \bar{\b{p}} \le \b{l}  \\
		& & \displaystyle \bar{\b{p}} - \b{p} \le \b{l}  \\
		& & \displaystyle \b{p} \ge \zero \\
		& & \displaystyle \one\tr \b{p} = 1, \;\; \b{w}\tr \b{l} = \xi
	\end{array}
\end{equation}
Note that $\*l\ge\zero$ is enforced implicitly. The standard approach is to solve~\eqref{eq:q_linear_program} using a generic LP algorithm. This is, unfortunately, too slow to be practical as our empirical results show.

\begin{table}
	\centering
	\begin{tabular}{|l|ccccccc|}
		\toprule
		$i\in \ldots \rightarrow$ & $\sN_B$ & $\sU_B$ & $\sL_B$ & $\sUL_B$ & $\sZ_B$ & $\sUZ_B$ & $\sLZ_B$ \\
		\midrule
		$p_i - \bar{p}_i \le l_i$ &\nm &\cm &\nm &\cm &\nm &\cm &\nm \\
		$\bar{p}_i - p_i \le l_i$ &\nm &\nm &\cm &\cm &\nm &\nm &\cm \\
		$p_i \ge 0$               &\nm &\nm &\nm &\nm &\cm &\cm &\cm \\
		\bottomrule
	\end{tabular}
	\caption{Possible subsets of active constraints in~\eqref{eq:constraints_i}. Check marks indicate active constraints that are included in the basis $B$ for each index $i = 1,\ldots,S$.} \label{tab:constraint_structure}
\end{table}

Implementing a homotopy method in the context of a linear program, such as \eqref{eq:q_linear_program}, is especially convenient since $q(\xi)$ and $\b{p}\opt(\xi)$ are piecewise affine in $\xi$~\cite{Vanderbei2001}. Indeed, the optimal $\b{p}\opt(\xi)$ is affine in $\xi$ for each optimal basis in~\eqref{eq:q_linear_program}, and a breakpoint (or a ``knot'') occurs whenever the currently optimal basis becomes infeasible for a particular $\xi$. This argument also shows that $q (\xi)$ is piecewise affine. Our homotopy method starts with $\xi=0$ and traces an optimal basis in~\eqref{eq:q_linear_program} while increasing $\xi$. The key to its efficiency is the special structure of the relevant bases to problem~\eqref{eq:q_linear_program}, which we describe next.

Each basis $B$ in the linear program~\eqref{eq:q_linear_program} is fully characterized by $2 S$ \emph{linearly independent} (inequality and/or equality) constraints that are \emph{active}, see for example Definition 2.9 of \citeasnoun{Bertsimas1997}. Remember that an active constraint is satisfied with equality, but not every constraint that is satisfied as equality has to be active in a given basis $B$. To analyze the structure of a basis $B$, we note that the components $p_i$ and $l_i$ of any feasible solution $(\bm{p}, \bm{l})$ to~\eqref{eq:q_linear_program} must satisfy the following three inequality constraints:
\begin{equation}\label{eq:constraints_i}
	\begin{aligned}
		p_i - \bar{p}_i &\le l_i ,&
		\bar{p}_i - p_i &\le l_i ,&
		p_i 			&\ge 0~.
	\end{aligned}
\end{equation}
Since the three constraints in~\eqref{eq:constraints_i} contain only two variables $p_i$ and $l_i$, they must be linearly dependent. Thus, for every $i = 1, \ldots, S$, at most two out of the three constraints in~\eqref{eq:constraints_i} can be active. \cref{tab:constraint_structure} enumerates the seven possible subsets of active constraints~\eqref{eq:constraints_i} for any given component $i = 1, \ldots, S$. Here, the letters $\mathcal{N}$, $\mathcal{U}$, $\mathcal{L}$ and $\mathcal{E}$ mnemonize the cases where \underline{n}one of the constraints is active, only the \underline{u}pper bound or the \underline{l}ower bound on $\bar{p}_i$ is active and where both bounds are simultaneously active and hence $p_i$ \underline{e}quals $\bar{p}_i$. Moreover, we have three cases where in addition to the constraints indicated by $\mathcal{N}$, $\mathcal{U}$, $\mathcal{L}$, the nonnegativity constraint $p_i \geq 0$ is active; those cases are distinguished by adding a bar to the aforementioned letters. By construction, the sets in \cref{tab:constraint_structure} are mutually exclusive and jointly exhaustive, that is, they partition the index set $1,\ldots,S$.

In addition to the inequality constraints~\eqref{eq:constraints_i}, a basis $B$ may include one or both of the equality constraints from~\eqref{eq:q_linear_program}. The set $\sQ_B \subseteq \{1,2\}$ indicates which of these equality constraints are included in the basis $B$. Together with the sets from \cref{tab:constraint_structure}, $\sQ_B$ uniquely identifies any basis $B$. The $2S$ linearly independent active constraints involving the $2S$ decision variables uniquely specify a solution $(\bm{p}, \bm{l})$ for a given basis $B$ as
\begin{equation} \label{eq:basis_equalities}
	\begin{aligned}
		p_i - \bar{p}_i &= l_i \quad &\forall i&\in \sU_B\cup\sE_B\cup\sUZ_B \\
		\bar{p}_i - p_i &= l_i \quad &\forall i&\in \sL_B\cup\sE_B\cup\sLZ_B \\
		p_i &= 0 \quad &\forall i&\in \sZ_B\cup\sUZ_B\cup\sLZ_B \\
		\one\tr \b{p} &= 1 \quad &\text{if } 1 &\in \sQ_B \\
		\b{w}\tr \b{l} &= \xi &\text{if } 2 &\in \sQ_B  ~.
	\end{aligned}
\end{equation}
We use $\*p_B(\xi)$ to denote the solution $\*p$ to~\eqref{eq:basis_equalities} and define $q_B(\xi) = \*z\tr \*p_B(\xi)$ for any $\xi$. The vector $\*p_B(\xi)$ may be feasible in~\eqref{eq:q_linear_program} only for some values of $\xi$.

Before we formally characterize the properties of the optimal bases for different values of $\xi$, we illustrate the parametric behavior of $\b{p}\opt(\xi)$, which is an optimizer to \eqref{eq:q_linear_program} that our homotopy algorithm chooses. Note that this optimizer is not necessarily unique. As $\xi$ changes, the values of exactly \emph{two} components of $\b{p}\opt(\xi)$ change. Since the components of $\b{p}\opt (\xi)$ must sum to $1$, one component $p_j$ increases and another component $p_i$ decreases. We say that $p_i$ is a \emph{donor} as it donates some of its probability mass to the \emph{receiver} $p_j$. The examples below illustrate the specific paths traced by $\*p\opt(\xi)$ and illustrate the complications that arise from using non-uniform weights $\*w$.

\begin{figure}
	\centering
	\begin{minipage}{0.49\linewidth}
		\centering
		\includegraphics[width=0.95\linewidth]{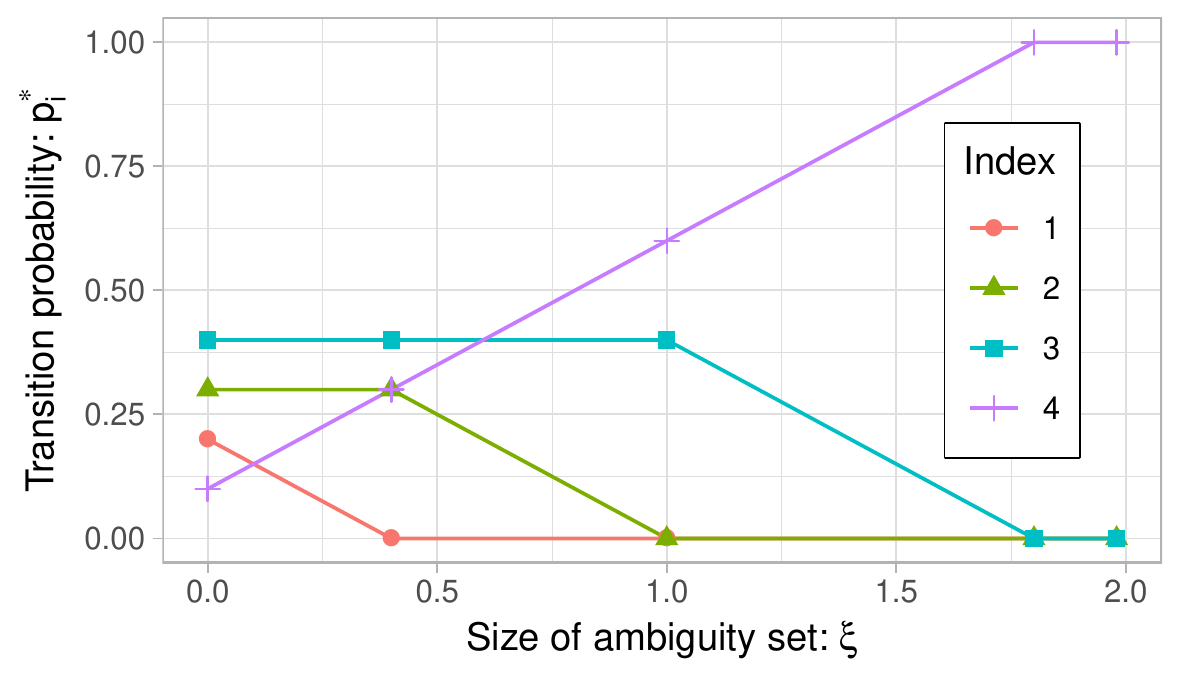}	
	\end{minipage}
	\begin{minipage}{0.49\linewidth}
		\centering
		\includegraphics[width=0.95\linewidth]{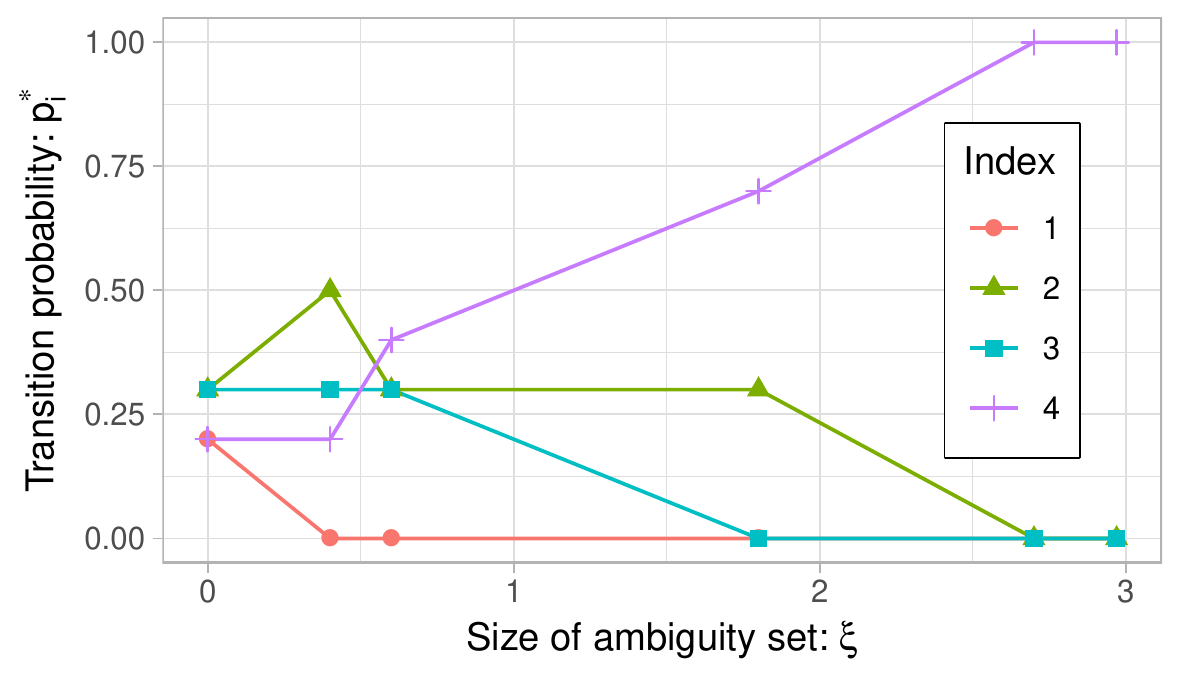}
	\end{minipage}
	\caption{Example evolution of $\b{p}\opt(\xi)$ for a uniform (left) and a non-uniform weight vector $\*w$ (right). Point markers indicate breakpoints where the optimal bases change.} \label{fig:example1_q_homotopy} \label{fig:example2_q_homotopy}
\end{figure}

\begin{exm}[Uniform Weights] \label{exm:unweighted}
	Consider the function $q (\xi)$ in \eqref{eq:q_optimization_l1} for an RMDP with $4$ states, $\b{z} = (4, 3, 2, 1)^\top$, $\bar{\b{p}} = (0.2, 0.3, 0.4, 0.1)^\top$ and $\b{w} = \one$.  \Cref{fig:example1_q_homotopy}~(left) depicts the evolution of $\b{p}\opt (\xi)$ as a function of $\xi$. Component $p_4$ is the receiver for all values of $\xi$, and the donors are the components $p_1$, $p_2$ and $p_3$. We show in \cref{sec:homotopy_complexity} that for uniform weights $\*w$, the component with the smallest value of $\b{z}$ is always the sole receiver.
\end{exm}

\begin{exm}[Non-Uniform Weights] \label{exm:weighted}
	Consider the function $q (\xi)$ in \eqref{eq:q_optimization_l1} for an RMDP with $4$ states, $\b{z} = (2.9, 0.9, 1.5, 0.0)^\top$, $\bar{\b{p}} = (0.2, 0.3, 0.3, 0.2)^\top$ and $w = (1, 1, 2, 2)^\top$. \Cref{fig:example2_q_homotopy}~(right) depicts the evolution of $\b{p}\opt (\xi)$ as a function of $\xi$. The donor-receiver pairs are $(1,2)$, $(2,4)$ $(3,4)$ and again $(2, 4)$. In particular, several components can serve as receivers for different values of $\xi$ when $\*w$ is non-uniform. Also, the same component can serve as a donor more than once.
\end{exm}

In the remainder of this subsection, we show that for any basis $B$ to~\eqref{eq:q_linear_program} that is of interest for our homotopy method, at most two components of $\*p_B(\xi)$ vary with $\xi$. To this end, we bound the sizes of the sets from \cref{tab:constraint_structure}.
\begin{lem} \label{lem:basic_feasible}
	Any basis $B$ to~\eqref{eq:q_linear_program} satisfies $|\sU_B| + |\sL_B| + |\sZ_B| + 2|\sN_B| = |\sQ_B| \le 2$.
\end{lem}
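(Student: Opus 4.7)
The plan is to prove the identity by a straightforward double-counting argument comparing (i) the total number of linearly independent active constraints that a basis must contain with (ii) the total number of indices in $\{1,\ldots,S\}$, exploiting the fact that the seven sets in \cref{tab:constraint_structure} partition the index set.

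First I would recall that a basis $B$ for the LP \eqref{eq:q_linear_program} must consist of exactly $2S$ linearly independent active constraints, since there are $2S$ decision variables ($\*p$ and $\*l$, each in $\Real^S$). Using \cref{tab:constraint_structure}, the number of active inequality constraints from~\eqref{eq:constraints_i} contributed by index $i$ is $0$ if $i\in\sN_B$, equal to $1$ if $i \in \sU_B\cup\sL_B\cup\sZ_B$, and equal to $2$ if $i\in\sE_B\cup\sUZ_B\cup\sLZ_B$. Adding the $|\sQ_B|$ equality constraints from \eqref{eq:q_linear_program} that are included in the basis, the total count of active constraints in $B$ is
\[ |\sU_B| + |\sL_B| + |\sZ_B| + 2\,|\sE_B| + 2\,|\sUZ_B| + 2\,|\sLZ_B| + |\sQ_B| \;=\; 2S~. \]

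Next I would use the partition property: the sets $\sN_B,\sU_B,\sL_B,\sE_B,\sZ_B,\sUZ_B,\sLZ_B$ are mutually exclusive and jointly exhaustive, hence
\[ |\sN_B| + |\sU_B| + |\sL_B| + |\sE_B| + |\sZ_B| + |\sUZ_B| + |\sLZ_B| \;=\; S~. \]
Multiplying this identity by $2$ and subtracting the active-constraint count above eliminates the ``two-active'' sets $\sE_B,\sUZ_B,\sLZ_B$ and rearranges to
\[ |\sU_B| + |\sL_B| + |\sZ_B| + 2\,|\sN_B| \;=\; |\sQ_B|~. \]
Finally, since $\sQ_B \subseteq \{1,2\}$ by definition, we immediately get $|\sQ_B| \le 2$, which yields the claim.

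The only subtlety is justifying that the per-index contributions in \cref{tab:constraint_structure} are really linearly independent within the whole basis. Within a single index $i$, any two of the three constraints in~\eqref{eq:constraints_i} are linearly independent in the variables $(p_i,l_i)$ (their gradients are pairwise non-parallel), and constraints involving different indices act on disjoint variables, so linear independence among the inequality constraints is automatic. The two equality rows $\one\tr\*p=1$ and $\*w\tr\*l=\xi$ may in principle become linearly dependent on the per-index rows, but this is exactly what is being ruled out by the choice of $\sQ_B$: whenever an equality row is dependent on the active inequality rows, it is not included in the basis, which is why the argument only needs the bound $|\sQ_B|\le 2$ rather than $|\sQ_B|=2$. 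No other obstacle arises.
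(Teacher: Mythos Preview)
Your proof is correct and essentially identical to the paper's: both count the $2S$ active constraints per basis, use the partition of $\{1,\ldots,S\}$ into the seven sets, and subtract to eliminate the two-active sets $\sE_B,\sUZ_B,\sLZ_B$. Your additional paragraph on linear independence is not in the paper but is a harmless elaboration.
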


\begin{proof}
	The statement follows from a counting argument. Since the sets listed in \cref{tab:constraint_structure} partition the index set $1, \ldots, S$, their cardinalities must sum to $S$:
	\begin{equation} \label{eq:basis_variables}
		|\sN_B| + |\sU_B| + |\sL_B| + |\sUL_B| + |\sZ_B| + |\sUZ_B| + |\sLZ_B| = S. 
	\end{equation}
	Each index $i = 1,\ldots, S$ contributes between zero and two active constraints to the basis. For example, $i\in\sN_B$ contributes no constraint, whereas $i\in\sUZ_B$ contributes $2$ constraints. The requirement that $B$ contains exactly $2 S$ linearly independent constraints translates to
	\begin{equation}\label{eq:basis_constraints}
		0\cdot|\sN_B| + 1\cdot|\sU_B| + 1\cdot|\sL_B| + 2\cdot|\sUL_B| + 1\cdot|\sZ_B| + 2\cdot|\sUZ_B| + 2\cdot|\sLZ_B| + |\sQ_B| = 2S ~.
	\end{equation}
	Subtracting two times~\eqref{eq:basis_variables} from~\eqref{eq:basis_constraints}, we get
	\[ - 2\cdot|\sN_B| -|\sU_B| - |\sL_B| - |\sZ_B| + |\sQ_B| = 0  ~.\]
	The result then follows by performing elementary algebra.
\end{proof}

We next show that for any basis $B$ feasible in the problem~\eqref{eq:q_linear_program} for a given $\xi$, the elements in $\sU_B$ and $\sL_B$ act as donor-receiver pairs.

%The value of an $i$-th component of $\*p_B(\xi)$ can change with $\xi$ only when $i\in \sU_B \cup \sL_N \cup \sN_B$. Otherwise, $p_i$ is either constrained to be equal to $\bar{p}_i$ or $0$.

\begin{prop} \label{lem:structure}
	Consider some $\xi > 0$ and a basis $B$ to problem \eqref{eq:q_linear_program} that is feasible in a neighborhood of $\xi$. Then the derivatives $\dot{\b{p}} = \frac{d}{d \xi} \b{p}_B(\xi)$ and $\dot{q} = \frac{d}{d \xi} q_B (\xi)$ satisfy:
	\begin{enumerate}[nosep]
		\item[\emph{(C1)}] If $\sU_B = \{ i \}$ and $\sL_B = \{ j \}$, $i \neq j$, then:
		\begin{small}
			\begin{align*} 
				\dot{q} &= \frac{z_i - z_j}{w_i + w_j}, & \dot{p}_i &= \frac{1}{w_i + w_j}, & \dot{p}_j &= -\frac{1}{w_i + w_j}.
			\end{align*}
		\end{small}
		\item[\emph{(C2)}] If $\sU_B = \{i,j\}$, $i \neq j$ and $w_i \neq w_j$, and $\sL_B = \emptyset$, then:
		\begin{small}
			\begin{align*} 
				\dot{q} &= \frac{z_i - z_j}{w_i - w_j}, & \dot{p}_i &= \frac{1}{w_i - w_j}, & \dot{p}_j = -\frac{1}{w_i - w_j}.	
			\end{align*}
		\end{small}
	\end{enumerate}
	The derivatives $\dot{\bm{p}}$ and $\dot{q}$ of all other types of feasible bases to problem~\eqref{eq:q_linear_program} are zero. 
\end{prop}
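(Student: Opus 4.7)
The solution $(\bm{p}_B(\xi), \bm{l}_B(\xi))$ is uniquely determined by the linear system~\eqref{eq:basis_equalities}, in which only the equation $\bm{w}^\top \bm{l} = \xi$ has a right-hand side depending on $\xi$. My plan is to differentiate this system with respect to $\xi$ and then exhaust the short list of basis types permitted by~\cref{lem:basic_feasible}, verifying the claimed formulas in (C1) and (C2) and showing that every other feasible-in-neighborhood basis yields $\dot{\bm{p}} = \bm{0}$.

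First, I would extract two structural consequences of neighborhood feasibility. If $2 \notin \sQ_B$, then every equation in~\eqref{eq:basis_equalities} is independent of $\xi$, so $(\bm{p}_B, \bm{l}_B)$ is constant and $\bm{w}^\top \bm{l}_B$ cannot track the varying right-hand side $\xi'$ for $\xi'$ near $\xi$; therefore $2 \in \sQ_B$, and differentiating $\bm{w}^\top \bm{l} = \xi$ gives $\bm{w}^\top \dot{\bm{l}} = 1$. Moreover, feasibility of $\bm{1}^\top \bm{p}_B(\xi') = 1$ on an open neighborhood forces $\bm{1}^\top \dot{\bm{p}} = 0$, regardless of whether $1 \in \sQ_B$. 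Differentiating the per-index equations in~\eqref{eq:basis_equalities} then yields $\dot p_i = \dot l_i = 0$ for all $i \in \sE_B \cup \sUZ_B \cup \sLZ_B$, because each such index already has two per-index equations pinning down both $p_i$ and $l_i$; for the remaining indices the relations are $\dot p_i = \dot l_i$ (for $i \in \sU_B$), $\dot p_i = -\dot l_i$ (for $i \in \sL_B$), $\dot p_i = 0$ with $\dot l_i$ free (for $i \in \sZ_B$), and no per-index relation at all (for $i \in \sN_B$).

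The remainder is a short case split driven by \cref{lem:basic_feasible}, which gives $|\sU_B|+|\sL_B|+|\sZ_B|+2|\sN_B| = |\sQ_B| \in \{1, 2\}$. For $|\sQ_B| = 1$, the sub-cases $\sU_B = \{i\}$ and $\sL_B = \{i\}$ both force $\dot p_i = 0$ via $\bm{1}^\top \dot{\bm{p}} = 0$, hence $\dot l_i = 0$, contradicting $\bm{w}^\top \dot{\bm{l}} = 1$; the only remaining sub-case $\sZ_B = \{i\}$ gives $\dot{\bm{p}} = \bm{0}$ with $\dot l_i = 1/w_i$, so $\dot q = 0$. For $|\sQ_B| = 2$, the sub-cases $\sN_B = \{i\}$, $(\sU_B, \sZ_B) = (\{i\}, \{j\})$, and $(\sL_B, \sZ_B) = (\{i\}, \{j\})$ all yield $\dot{\bm{p}} = \bm{0}$ by combining the per-index relations with $\bm{1}^\top \dot{\bm{p}} = 0$ and then determining the single free $\dot l$-component from $\bm{w}^\top \dot{\bm{l}} = 1$. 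Case (C1) with $\sU_B = \{i\}, \sL_B = \{j\}$ reduces to the $2 \times 2$ system $\dot p_i + \dot p_j = 0$ and $w_i \dot p_i - w_j \dot p_j = 1$ (using $\dot l_i = \dot p_i$ and $\dot l_j = -\dot p_j$), whose unique solution gives the stated formulas, and $\dot q = z_i \dot p_i + z_j \dot p_j = (z_i-z_j)/(w_i+w_j)$. Case (C2) with $\sU_B = \{i, j\}$ reduces analogously to $\dot p_i + \dot p_j = 0$ and $w_i \dot p_i + w_j \dot p_j = 1$, which has the claimed unique solution exactly when $w_i \neq w_j$; if $w_i = w_j$ the system is inconsistent and the basis is not feasible in a neighborhood.

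The main obstacle is not a single deep calculation but the careful bookkeeping of the enumeration above. The subtle point is the observation that neighborhood feasibility forces $\bm{1}^\top \dot{\bm{p}} = 0$ even when $1 \notin \sQ_B$; it is this extra identity, coming from the LP rather than from the basis itself, that eliminates all $|\sQ_B| = 1$ subcases apart from the trivial $\sZ_B = \{i\}$ case and that closes the $2 \times 2$ systems in (C1) and (C2) into unique solutions matching the stated expressions.
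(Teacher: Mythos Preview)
Your differentiation approach is sound and mirrors the paper's strategy, but your enumeration for $|\sQ_B|=2$ is incomplete: you omit the subcase $\sL_B=\{i,j\}$ (and $\sZ_B=\{i,j\}$, though the latter fails linear independence since $l_i,l_j$ then appear only in the single row $\bm w^\top\bm l=\xi$). The case $\sL_B=\{i,j\}$ \emph{is} a valid basis when $w_i\neq w_j$, and your own method applied to it gives $\dot p_i+\dot p_j=0$ together with $-w_i\dot p_i - w_j\dot p_j = 1$, hence $\dot p_i = -1/(w_i-w_j)\neq 0$. So the purely algebraic differentiation you propose does \emph{not} yield $\dot{\bm p}=\bm 0$ here, and you need an additional argument.

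The missing ingredient is a feasibility argument using the \emph{inequality} constraints, not just the active equalities. For $k\in\sL_B$ the inactive constraint $p_k-\bar p_k\le l_k$ together with $l_k=\bar p_k-p_k$ forces $p_k\le\bar p_k$; combining this with $\bm 1^\top\bm p=1$ and $\bm 1^\top\bar{\bm p}=1$ (the latter decomposed over the partition) pins $p_i=\bar p_i$, $p_j=\bar p_j$. That equality can hold only at one value of $\xi$, so the basis is not feasible in a neighborhood and the case is vacuous for the proposition. This is exactly how the paper disposes of its case~(C3); without it your enumeration leaves a live subcase with nonzero derivative, which would contradict the final sentence of the proposition.
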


The derivative $\dot{\bm{p}}$ shows that in a basis of class (C1), $i$ is the receiver and $j$ is the donor. In a basis of class (C2), on the other hand, an inspection of $\dot{\bm{p}}$ reveals that $i$ is the receiver and $j$ is the donor whenever $w_i > w_j$, and the reverse situation occurs when $w_i < w_j$.

\begin{proof}[Proof of~\cref{lem:structure}]
	In this proof, we consider a fixed basis $B$ and thus drop the subscript $B$ to reduce clutter. We also denote by $\*x_{\mathcal{D}}$ the subvector of $\bm{x} \in \Real^S$ formed by the elements $x_i$, $i \in \mathcal{D}$, whose indices are contained in the set $\mathcal{D} \subseteq \states$.
	
	Note that $i \in \sZ \cup \sUZ \cup \sLZ$ implies  $(\bm{p}_B (\xi))_i = 0$ for every $\xi$ and thus $\dot{p}_i = 0$. Likewise, $i \in \sUL$ implies that $(\bm{p}_B (\xi))_i = \bar{p}_i$ for every $\xi$ and thus $\dot{p}_i = 0$ as well. Hence, $\dot{p}_i \neq 0$ is only possible if $i \in \sU \cup \sL \cup \sN$. Since at least two components of $\bm{p}_B (\xi)$ need to change as we vary $\xi$, we can restrict ourselves to bases $B$ that satisfy $| \sU | + | \sL | + |\sN| \geq 2$. Since \cref{lem:basic_feasible} furthermore shows that $| \sU | + | \sL | + 2 |\sN| \leq 2$, we only need to consider three cases in the following: \emph{(C1)} $| \sU | = | \sL | = 1$ and $| \sN | = 0$; \emph{(C2)} $| \sU | = 2$ and $| \sL | = | \sN | = 0$; and \emph{(C3)} $| \sL | = 2$ and $| \sU | = | \sN | = 0$. For each of these cases, we denote by $\bm{p}$ and $\bm{l}$ the unique vectors that satisfy the active constraints~\eqref{eq:basis_equalities} for the basis $B$.

	 \cref{tab:constraint_structure} implies the following useful equality that any $\*p$ must satisfy.
	\begin{equation}\label{eq:element_sum}
		\begin{aligned} 
			1 = \one\tr \*p &= \one\tr \*p_{\sN} + \one\tr \*p_{\sU} + \one\tr \*p_{\sL} + \one\tr \*p_{\sE} + \one\tr \*p_{\sZ} +\one\tr \*p_{\sUZ} + \one\tr \*p_{\sLZ} \\
			&= \one\tr \*p_{\sN} + \one\tr \*p_{\sU} + \one\tr \*p_{\sL} + \one\tr \bar{\*p}_{\sE}
		\end{aligned}
	\end{equation}

	\emph{Case (C1); $\sU = \{ i \}$, $\sL = \{ j \}$, $i \neq j$, and $\sN = \emptyset$:}
	In this case, equation~\eqref{eq:element_sum} implies that $p_i + p_j = 1 - \one\tr\bar{\*p}_\sE$ and thus $\dot{p}_i + \dot{p}_j = 0$. We also have
	\begin{align*}
		\b{w}\tr \b{l} &=  \*w_{\sN}\tr \*l_{\sN} + \*w_{\sU}\tr \*l_{\sU} + \*w_{\sL}\tr \*l_{\sL} + \*w_{\sE}\tr \*l_{\sE} + \*w_{\sZ}\tr \*l_{\sZ} + \*w_{\sUZ}\tr \*l_{\sUZ} + \*w_{\sLZ}\tr \*l_{\sLZ} \\
		&= w_i l_i + w_j l_j + \b{w}_{\sUL}\tr \*l_{\sUL} + \*w_{\sUZ}\tr \*l_\sUZ + \*w_{\sLZ}\tr \*l_{\sLZ} \\
		&= w_i l_i + w_j l_j - \*w_{\sUZ}\tr \bar{\*p}_\sUZ + \*w_{\sLZ}\tr \bar{\*p}_{\sLZ} \\
		&= w_i (p_i - \bar{p}_i) + w_j (\bar{p}_j - p_j) - \*w_{\sUZ}\tr \bar{\*p}_\sUZ + \*w_{\sLZ}\tr \bar{\*p}_{\sLZ} ~,
	\end{align*}
	where the second identity follows from the fact that $\sN = \emptyset$, $\sU = \{ i \}$ and $\sL = \{ j \}$ by assumption, as well as $\sZ = \emptyset$ due to \cref{lem:basic_feasible}. The third identity holds since the active constraints in $\sE$, $\sUZ$ and $\sLZ$ imply that $\*l_{\sE} = \bm{0}$, $\bm{l}_{\sUZ} = - \bar{\b{p}}_\sUZ$ and $\bm{l}_{\sLZ} = \bar{\b{p}}_\sLZ$, respectively. The last identity, finally, is due to the fact that $p_i - \bar{p}_i = l_i$ since $i \in \mathcal{U}$ and $\bar{p}_j - p_j = l_j$ since $j \in \mathcal{L}$. Since any feasible basis $B$ satisfies that $\b{w}\tr \b{l} = \xi$, we thus obtain that
	\[
	\begin{array}{r@{\displaystyle\quad}r@{\displaystyle\; = \;}l@{\qquad}l}
	&  w_i (p_i - \bar{p}_i) + w_j (\bar{p}_j - p_j)
	& \xi + \*w_{\sUZ}\tr \bar{\*p}_\sUZ - \*w_{\sLZ}\tr \bar{\*p}_{\sLZ} \\
	\Longrightarrow
	& w_i \dot p_i - w_j \dot p_j
	& 1
	& \text{taking $\nicefrac{d}{d\xi}$ on both sides} \\
	\Longleftrightarrow
	& w_i \dot p_i + w_j \dot p_i
	& 1
	& \text{from } \dot{p}_i + \dot{p}_j = 0 \\
	\Longleftrightarrow
	& \dot p_i
	& \frac{1}{w_i + w_j} .
	\end{array}
	\]
	The expressions for $\dot p_j$ and $\dot q$ follow from $\dot{p}_i + \dot{p}_j = 0$ and elementary algebra, respectively.
		
	\emph{Case (C2); $\sU = \{ i, j \}$, $i \neq j$, and $\sL = \sN = \emptyset$:}
	Similar steps as in case (C1) show that
	\[ w_i (p_i - \bar{p}_i) + w_j (p_j - \bar{p}_j) = \xi + \b{w}_{\sUZ}\tr \bar{\b{p}}_\sUZ - \b{w}_{\sLZ}\tr \bar{\b{p}}_\sLZ~, \]
	which in turn yields the desired expressions for $\dot{p}_i$, $\dot{p}_j$ and $\dot q$. Note that if $w_i = w_j$ in the equation above, then the left hand side's derivative with respect to $\xi$ is zero, and we obtain a contradiction. This allows us to assume that $w_i \neq w_j$ in case (C2).
	
	\emph{Case (C3); $\sL = \{ i, j \}$, $i \neq j$, and $\sU =\sN = \emptyset$:}
	Note that $\*p_{\sL} \le \bar{\*p}_{\sL}$ since $\bm{l}_{\sL}$ satisfies both $\bm{l}_{\sL} \geq \bm{0}$ and $\bm{l}_{\sL} = \bar{\*p}_{\sL} - \*p_{\sL}$. Since~\eqref{eq:element_sum} implies that $\one\tr \*p = \one\tr \*p_{\sL} + \one\tr \bar{\*p}_{\sE} = 1$, however, we conclude that $\*p_{\sL} = \bar{\*p}_{\sL}$, that is, we must have $\dot{\bm{p}} = \bm{0}$ and $\dot{q} = 0$.
\end{proof}

%%%%%%%%%%%%%%%%%%%%%%%%%%%%%%%%%%%%%%%%%%%%%%%%%%%%%%%%
%%%%% END-2 %%%%%
%%%%%%%%%%%%%%%%%%%%%%%%%%%%%%%%%%%%%%%%%%%%%%%%%%%%%%%%

\subsection{Homotopy Algorithm}\label{sec:homo}

\begin{algorithm}
	\KwIn{LP parameters $\b{z}$, $\b{w}$ and $\bar{\b{p}}$}
	\KwOut{Breakpoints $(\xi_t)_{t = 0,\ldots T+1}$ and values $(q_t)_{t = 0,\ldots T+1}$, defining the function $q$}
	Initialize $\xi_0 \gets 0$, $\b{p}_0 \gets \bar{\b{p}}$ and $q_0 \gets q(\xi_0) = \b{p}_0\tr \b{z}$ \;
	~\\
	\tcp{Derivatives $\dot{q}$ for \emph{bases} of \eqref{eq:q_linear_program} (see \cref{lem:structure})} 
	\For{$i = 1\ldots S$}{
		\For{$j = 1 \ldots S$ satisfying $i \neq j$}{
			\emph{Case C1} ($\mathcal{U}_B = \{ i \}$ and $\mathcal{L}_B = \{ j \}$):
			$\alpha_{i,j} \gets (z_i - z_j) / (w_i + w_j)$ \; 
			\emph{Case C2} ($\mathcal{U}_B = \{ i, j \}$): 				
			$\beta_{i,j} \gets (z_i - z_j) / (w_i - w_j)$ if $w_i \neq w_j$ \;
		}
	}
	~\\
	\tcp{Sort derivatives and map to bases (see \cref{lem:structure})}
	Store $(\alpha_{i,j}, \text{C1})$, $i \neq j$ and $\alpha_{i,j} < 0$, and $(\beta_{i,j}, \text{C2})$, $i \neq j$ and $\beta_{i,j} < 0$, in a list $\mathcal{D}$ \;
	Sort the list $\mathcal{D}$ in ascending order of the first element \; 	
	Construct bases $B_1, \ldots, B_T$ from $\mathcal{D} = (d_1, \dots, d_T)$ as:\\
	$\qquad B_m = \begin{cases}
	(\mathcal{U}_B = \{ i \}, \, \mathcal{L}_B = \{ j \}) & \textbf{if } d_m = (\alpha_{i,j}, \text{C1}) \, , \\		
	(\mathcal{U}_B = \{ i, j \}, \, \mathcal{L}_B = \emptyset) & \textbf{if } d_m = (\beta_{i,j}, \text{C2}) \, ; \\
	\end{cases}$
	~\\[4mm]
	\tcp{Trace optimal $\*p_B(\xi)$ with increasing $\xi$}
	\For{$l = 1\ldots T$}{
		\If{$B_l$ infeasible for $\xi_{l-1}$}{
			Set $\xi_l \gets \xi_{l-1}$, $\*p_l \gets \*p_{l-1}$ and $q_l\gets q_{l-1}$ \;
			\textbf{continue}\;
		}
		Compute $\dot{\*p},\dot{q}$ according to the cases (C1) and (C2) from \cref{lem:structure} \;
		Compute maximum $\Delta \xi$ for which $B_l$ remains feasible:
		$\Delta\xi \gets$\\
		$\qquad  \begin{cases}
		\max \{\Delta \xi \ge 0 \ss (\*p_{l-1})_j + \Delta \xi \cdot \dot{p}_j \ge 0 \}
		&\quad \text{if } d_l = (\alpha_{i,j}, \text{C1}) \, , \\
		\max \{\Delta \xi \ge 0 \ss (\*p_{l-1})_j + \Delta \xi \cdot \dot{p}_j \ge \bar{p}_j \}
		&\quad \text{if } d_l = (\beta_{i,j}, \text{C2}) \text{ and } w_i > w_j \, , \\
		\max \{\Delta \xi \ge 0 \ss (\*p_{l-1})_i + \Delta \xi \cdot \dot{p}_i \ge \bar{p}_i \}
		&\quad \text{if } d_l = (\beta_{i,j}, \text{C2}) \text{ and } w_i < w_j \, ; \\
		\end{cases}$
		
		Set $\xi_l \gets \xi_{l-1} + \Delta \xi$, $\*p_l \gets \*p_{l-1} + \Delta\xi \cdot \dot{\*p}$ and $q_l\gets q_{l-1} + \Delta \xi \cdot \dot{q}$ \;
	}
	Set $\xi_{T+1} \gets \infty$ and $q_{T+1} \gets q_{T}$\;
	\Return{Breakpoints $(\xi_t)_{t = 0,\ldots T+1}$ and values $(q_t)_{t = 0,\ldots T+1}$.}
	\caption{Homotopy method to compute $q(\xi)$.} \label{alg:homotopy}
\end{algorithm}

We are now ready to describe our homotopy method, which is presented in \cref{alg:homotopy}. The algorithm starts at $\xi_0 = 0$ with the optimal solution $\bm{p}_0 = \bar{\bm{p}}$ achieving the objective value $q_0 = \bm{p}_0^\top \bm{z}$. The algorithm subsequently traces each optimal basis as $\xi$ increases, until the basis becomes infeasible and is replaced with the next basis. Since the function $q (\xi)$ is convex, it is sufficient to consider bases that have a derivative $\dot{q}$ that is no smaller than ones traced previously. Note that a basis of class (C1) satisfies $\mathcal{U}_B = \{ i \}$ and $\mathcal{L}_B = \{ j \}$ for some receiver $i \in S$ and some donor $j \in S$, $j \neq i$, and this basis is feasible at $\bm{p} = \bm{p}^\star (\xi)$, $\xi \geq 0$, only if $p_i \in [\bar{p}_i, 1]$ and $p_j \in [0, \bar{p}_j]$ (see~\cref{lem:structure}). Likewise, a basis of class (C2) satisfies $\mathcal{U}_B = \{ i, j \}$, $i \neq j$, and $\mathcal{L}_B = \emptyset$, and it is feasible at $\bm{p} = \bm{p}^\star (\xi)$, $\xi \geq 0$, only if $p_i \in [\bar{p}_i, 1]$ and $p_j \in [\bar{p}_j, 1]$. In a basis of class (C2), $i$ is the receiver and $j$ is the donor whenever $w_i > w_j$, and the reverse situation occurs when $w_i < w_j$. To simplify the exposition, we assume that all bases in \cref{alg:homotopy} have pairwise different slopes $\dot q$, which can always be achieved by applying a sufficiently small perturbation to $\bm{w}$ and/or $\bm{z}$. Our implementation accounts for floating-point errors by using a queue to store and examine the feasibility of all bases that are withing some small $\epsilon$ of the last $\dot q$.

\Cref{alg:homotopy} generates the entire solution path of $q(\xi)$. If the goal is to compute the function $q$ for a particular value of $\xi$, then we can terminate the algorithm once the for loop over $l$ has reached this value. In contrast, our bisection method for s-rectangular ambiguity sets (described in the next section) requires the entire solution path to compute robust Bellman policy updates. We also note that \cref{alg:homotopy} records all vectors $\* p_1, \ldots \* p_T$. This is done for ease of exposition; for practical implementations, it is sufficient to only store the current iterate $\* p_l$ and update the two components that change in the for loop over $l$.

The following theorem proves the correctness of our homotopy algorithm. It shows that the function $q$ is a piecewise affine function defined by the output of \cref{alg:homotopy}.

\begin{thm}\label{thm:homotopy_works}
	Let $(\xi_t)_{t = 0, \ldots, T + 1}$ and $(q_t)_{t = 0, \ldots, T + 1}$ be the output of \cref{alg:homotopy}. Then, $q (\xi)$ is a piecewise affine function with breakpoints $\xi_l$ that satisfies $q (\xi_t) = q_t$ for $t = 0, \ldots, T+1$.
\end{thm}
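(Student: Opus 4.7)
The plan is to prove correctness by induction on the sequence of breakpoints produced by \cref{alg:homotopy}, combined with standard parametric linear programming arguments. First, I would observe that $q(\xi)$ is convex, non-increasing, and piecewise affine in $\xi$. Piecewise affineness and convexity follow from the fact that $\xi$ appears only as a single right-hand side of the equality constraint $\b{w}\tr \b{l} = \xi$ in the LP \eqref{eq:q_linear_program}; non-increasingness follows because enlarging $\xi$ enlarges the feasible set. Consequently, the slopes $\dot q$ along the optimal solution path must be non-positive and non-decreasing, with breakpoints occurring exactly where the active basis changes.

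Next, I would argue that the algorithm enumerates the correct set of candidate bases. By \cref{lem:basic_feasible}, every basis of \eqref{eq:q_linear_program} satisfies $|\sU_B|+|\sL_B|+|\sZ_B|+2|\sN_B|\le 2$, which leaves only a handful of structural types. \cref{lem:structure} then shows that the only bases with nonzero derivative $\dot q$ are those of class (C1) (with $\sU_B = \{i\}$, $\sL_B = \{j\}$) and class (C2) (with $\sU_B = \{i,j\}$, $w_i \neq w_j$); all other types have $\dot{\*p} = \zero$ and cannot contribute a new affine piece to $q$. The algorithm enumerates exactly these (C1) and (C2) candidates, computes their slopes from \cref{lem:structure}, retains those with $\dot q < 0$, and sorts them in ascending order of $\dot q$. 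Since $q$ is convex and non-increasing, the sequence of active bases along the solution path must proceed from the most negative slope to the least negative slope, which matches the algorithm's sort order.

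The inductive step is the core of the argument. Assume that after iteration $l-1$ we have $\xi_{l-1}$, $\*p_{l-1} = \b{p}_{B_{l-1}}(\xi_{l-1})$, and $q_{l-1} = q(\xi_{l-1})$, with $B_{l-1}$ optimal at $\xi_{l-1}$; the base case $\xi_0 = 0$ is immediate, since $\*w > \zero$ forces $\bar{\*p}$ to be the unique feasible (and hence optimal) solution at $\xi = 0$. At iteration $l$, if the next candidate $B_l$ is infeasible at $\xi_{l-1}$, then by convexity its slope can never be attained on any later part of the solution path either (slopes must be non-decreasing), so the algorithm correctly skips it with $\xi_l = \xi_{l-1}$. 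If instead $B_l$ is feasible at $\xi_{l-1}$, it can be followed along an interval $[\xi_{l-1}, \xi_{l-1} + \Delta\xi]$, where $\Delta\xi$ is the largest step before an inactive constraint becomes active and the basis ceases to be feasible. For class (C1) the binding constraint is $p_j \ge 0$ (the donor exhausts its mass); for class (C2) with $w_i > w_j$ it is $p_j \ge \bar p_j$ (the donor returns to its nominal value, moving $j$ out of $\sU_B$); and symmetrically for $w_i < w_j$. These are precisely the thresholds that the algorithm computes. Along this interval, convexity together with the smallest-remaining-slope selection imply that $B_l$ is optimal, and therefore $q(\xi) = q_{l-1} + \dot q \cdot (\xi - \xi_{l-1})$ for $\xi \in [\xi_{l-1}, \xi_l]$; in particular $q(\xi_l) = q_l$ as claimed.

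After all $T$ candidates have been processed, no (C1) or (C2) basis with negative slope remains feasible, which means either the budget constraint is no longer binding or the only remaining optimal bases have $\dot q = 0$; in either case $q$ stays constant at $q_T$ for $\xi \ge \xi_T$, which is captured by the terminal assignment $\xi_{T+1} = \infty$, $q_{T+1} = q_T$. The main obstacle I anticipate is justifying the matching between the algorithm's sorted slope order and the true sequence of optimal bases along the parametric path, together with the edge case of tied slopes; \cref{alg:homotopy} sidesteps the latter by implicit perturbation of $\*w$ and $\*z$, while the former follows cleanly from the convexity of $q$ once one verifies, via \cref{lem:basic_feasible} and \cref{lem:structure}, that no basis outside classes (C1) and (C2) can ever intervene with a strictly smaller slope.
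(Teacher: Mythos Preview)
Your proposal is correct and rests on the same ingredients as the paper's proof: convexity and piecewise affineness of $q$, the classification of relevant bases via \cref{lem:basic_feasible} and \cref{lem:structure}, and the observation that processing bases in order of increasing slope matches the convex solution path. The only organizational difference is that the paper packages the argument as a short contradiction---defining the algorithm's output as a piecewise affine function $g$, noting $g\ge q$ by feasibility, and showing that $g(\xi')>q(\xi')$ at some $\xi'$ would force the true slope of $q$ there to be strictly smaller than the minimum-slope feasible basis the algorithm selected---whereas you run the same logic forward as an induction over breakpoints. Your justification for skipping an infeasible $B_l$ (``by convexity its slope can never be attained later'') is essentially the contrapositive of the paper's contradiction step, so the two arguments are equivalent.
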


We prove the statement by contradiction. Since each point $q_l$ returned by \cref{alg:homotopy} corresponds to the objective value of a feasible solution to problem~\eqref{eq:q_linear_program} at $\xi = \xi_l$, the output generated by \cref{alg:homotopy} provides an upper bound on $q (\xi)$. Assume to the contrary that the output does not coincide point-wise with the function $q (\xi)$. In that case, there must be a value of $\xi$ at which the homotopy method disregards a feasible basis that has a strictly smaller derivative than the one selected. This, however, contradicts the way in which bases are selected by the algorithm.

\begin{proof}[Proof of \cref{thm:homotopy_works}]
	For $\xi \leq \xi_T$, the piecewise affine function computed by \cref{alg:homotopy} is
	\[
		g(\xi) = \min_{\*\alpha \in \Delta^{T+1}} \, \left\{ \sum_{t=0}^T \alpha_t \, q_t \ss \sum_{t=0}^T \alpha_t \, \xi_t = \xi \right\}.
	\]
	To prove the statement, we show that $g(\xi) = q(\xi)$ for all $\xi \in [0, \xi_T]$. Note that $g (\xi) \geq q (\xi)$ for all $\xi \in [0, \xi_T]$ by construction since our algorithm only considers feasible bases. Also, from the construction of $g$, we have that $q(\xi_0) = g(\xi_0)$ for the initial point.
	
	To see that $g (\xi) \leq q (\xi)$, we need to show that \cref{alg:homotopy} does not skip any relevant bases. To this end, assume to the contrary that there exists a $\xi' \in (\xi_0, \xi_T]$ such that $q(\xi') < g(\xi')$. Without loss of generality, there exists a value $\xi'$ such that that $q (\xi) = g (\xi)$ for all breakpoints $\xi \leq \xi'$ of $q$; this can always be achieved by choosing a sufficiently small value of $\xi'$ where $q$ and $g$ differ. Let $\xi_l$ be the largest element in $\{ \xi_t \ss t = 0, \ldots, T \}$ such that $\xi_l < \xi'$, that is, we have $\xi_l < \xi' \le \xi_{l+1}$. Such $\xi_l$ exists because $\xi' > \xi_0$ and $q(\xi_0) = g(\xi_0)$. Let $B_l$ be the basis chosen by \cref{alg:homotopy} for the line segment connecting $\xi_l$ and $\xi_{l+1}$. We then observe that
	\[  \dot q(\xi') \;\; = \;\; \frac{q(\xi') - q_l}{\xi' - \xi_l} \;\; < \;\; \frac{g(\xi') - q_l}{\xi' - \xi_l} \;\; = \;\; \frac{q_{l+1} - q_l}{\xi_{l+1} - \xi_l} \;\; = \;\; \dot g(\xi') ~, \]
	where the first identity follows from our choice of $\xi'$, the inequality directly follows from $q(\xi') < g(\xi')$, and the last two identities hold since $B_l$ is selected by \cref{alg:homotopy} for the line segment connecting $\xi_l$ and $\xi_{l+1}$. However, by \cref{lem:basic_feasible,lem:structure}, $B_l$ is the basis with the minimal slope between $\xi_l$ and $\xi_{l+1}$, and it thus satisfies
	\[\frac{q_{l+1} - q_l}{\xi_{l+1} - \xi_l} \;\; \leq \;\; \dot q(\xi) ~,\]
	which contradicts the strict inequality above. The correctness of the last value $\xi_{T+1} = \infty$, finally, follows since $q$ is constant for large $\xi$ as the constraint $\b{w}\tr \*l = \xi$ is inactive.
\end{proof}

\subsection{Complexity Analysis} \label{sec:homotopy_complexity}

A naive implementation of \cref{alg:homotopy} has a computational complexity of $\bigO (S^2 \log S)$ because it sorts all pairs of indexes $(i, j) \in \mathcal{S} \times \mathcal{S}$ according to their derivatives $\dot q$. Although this already constitutes a significant improvement over the theoretical $\bigO (S^{4.5})$ complexity of solving~\eqref{eq:q_linear_program} using a generic LP solver, we observed numerically that the naive implementation performs on par with state-of-the-art LP solvers. In this section, we describe a simple structural property of the parametric problem~\eqref{eq:q_linear_program} that allows us to dramatically speed up \cref{alg:homotopy}.

Our improvement is based on the observation that a component $i \in \mathcal{S}$ cannot be a receiver in an optimal basis if there exists another component $j$ that has both a smaller objective coefficient $z_j$ and weight $w_j$. We call such components $i$ \emph{dominated}, and any dominated receivers can be eliminated from further consideration without affecting the correctness of \cref{alg:homotopy}.

\begin{prop} \label{lem:receivers}
	Consider a component $i \in \states$ such that there is another component $j \in \states$ satisfying $(z_j, w_j) \leq (z_i, w_i)$ as well as $(z_j, w_j) \neq (z_i, w_i)$. Then for any basis $B$ in which $i$ acts as receiver, \cref{alg:homotopy} selects the stepsize $\Delta \xi = 0$.
\end{prop}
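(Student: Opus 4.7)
The plan is to show that, for every basis $B$ in which $i$ acts as receiver, \cref{alg:homotopy} either declares $B$ infeasible at $\xi_{l-1}$ or the maximization defining $\Delta\xi$ returns zero. I proceed by a case split on the two structural classes of bases identified in \cref{lem:structure}, and I describe the argument in detail for class (C1); the argument for (C2) proceeds along the same lines with the analogous sibling construction ($\sU_{B'}=\{j,k\}$ when $w_j>w_k$, otherwise a C1-sibling with $j$ as receiver).

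The first step is to pair $B$ with a sibling basis $B'$ obtained by substituting $j$ for $i$. In case (C1), with $\sU_B=\{i\}$ and $\sL_B=\{k\}$ for some $k\neq i$, the sibling is $\sU_{B'}=\{j\}$, $\sL_{B'}=\{k\}$. Since $\dot q_B<0$ is stored in the sorted list $\mathcal{D}$, we have $z_i<z_k$; combined with $z_j\le z_i$ this yields $z_j<z_k$, so $\dot q_{B'}$ is also negative. An elementary algebraic comparison using $z_j\le z_i$, $w_j\le w_i$, and the strictness $(z_j,w_j)\neq(z_i,w_i)$ yields
\[
\dot q_{B'}\;=\;\frac{z_j-z_k}{w_j+w_k}\;<\;\frac{z_i-z_k}{w_i+w_k}\;=\;\dot q_B,
\]
so that $B'$ is processed strictly before $B$ in the sorted order.

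The second step translates this slope ordering into $\Delta\xi=0$ by induction on the iteration index $l$. The inductive hypothesis I propose is that $(\*p_{l-1})_i\le\bar p_i$ and, in addition, that every unit of mass already transferred out of a potential donor of $B$ at any slope no larger than $\dot q_B$ has been routed to a state dominating $i$. The base case $l=1$ holds trivially from $\*p_0=\bar{\*p}$. For the inductive step, observe that $i\in\sU_{B_l}$ forces the defining solution to satisfy $(\*p_{B_l}(\xi_{l-1}))_i\ge\bar p_i$; hence either $B_l$ is inconsistent with $\*p_{l-1}$ and thus infeasible, or $(\*p_{l-1})_i=\bar p_i$ with $l_i=0$ in $B_l$'s equations~\eqref{eq:basis_equalities}. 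Using the sibling argument from the first step together with this strengthened hypothesis, one shows that the donor satisfies $(\*p_{l-1})_k=0$ in case (C1) and $(\*p_{l-1})_k=\bar p_k$ in case (C2) with $w_i>w_k$, so the stepsize rule of \cref{alg:homotopy} returns $\Delta\xi=0$ and the inductive hypothesis is preserved.

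The main obstacle will be verifying that the strengthened inductive hypothesis is genuinely preserved by every update of \cref{alg:homotopy}. Between $B'$ and $B$, the algorithm may execute several intermediate bases that both deplete and replenish the donor's mass, and the bookkeeping of where mass goes must be carried out for each of the seven constraint patterns in \cref{tab:constraint_structure} that an intermediate basis can exhibit. In each case the slope ordering from the first step, combined with the monotonicity of the quantity $(z_m-z_k)/(w_m+w_k)$ in $(z_m,w_m)$, ensures that mass is always routed through states that dominate $i$, so that the required capacity equality at slope $\dot q_B$ holds exactly when $B$ is reached.
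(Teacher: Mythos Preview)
Your approach is genuinely different from the paper's, and it has a real gap that you yourself flag as ``the main obstacle.''

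You attempt a direct combinatorial induction on the iterations of \cref{alg:homotopy}: pair each basis $B$ having $i$ as receiver with a sibling $B'$ using $j$ instead, show $\dot q_{B'}<\dot q_B$, and then argue inductively that by the time $B$ is reached the donor has already been exhausted. The slope comparison in your first step is correct. The second step, however, is not a proof but a program: the ``strengthened inductive hypothesis'' (that all mass removed from the donor at slopes below $\dot q_B$ has been routed to states dominating $i$) is stated only informally, and you do not verify that it is preserved across the intermediate bases. As you note, mass can flow back into the donor $k$ whenever $k$ acts as receiver in some intermediate basis, and controlling this requires a case analysis over the seven constraint patterns of \cref{tab:constraint_structure} and both classes (C1)/(C2). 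None of that analysis is carried out, so the argument is incomplete as it stands. There is also a small omission in the first step: you should rule out $k=j$ (which holds because $z_j\le z_i<z_k$ would force $\dot q_B\ge 0$, so such a $B$ is never in $\mathcal{D}$).

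The paper avoids all of this bookkeeping by leveraging \cref{thm:homotopy_works} as a black box. It argues by contradiction: suppose some iteration $l$ has $i$ as receiver, donor $k$, and $\Delta\xi>0$, so $\bm p_l=\bm p_{l-1}+\delta(\one_i-\one_k)$ with $\delta>0$. Replace the receiver by $j$ to form $\bm p_l'=\bm p_{l-1}+\delta(\one_j-\one_k)$ and set $\xi_l'=\lVert\bm p_l'-\bar{\bm p}\rVert_{1,\bm w}$, $q_l'=\bm z^\top\bm p_l'$. Since $w_j\le w_i$ and $(\bm p_{l-1})_i\ge\bar p_i$ one gets $\xi_l'\le\xi_l$, and since $z_j\le z_i$ one gets $q_l'\le q_l$; the hypothesis $(z_j,w_j)\neq(z_i,w_i)$ makes one inequality strict. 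That produces a feasible point strictly below the curve traced by the algorithm, contradicting \cref{thm:homotopy_works}. This is a two-paragraph argument that never needs to track the algorithm's state or reason about intermediate bases; I recommend you adopt it instead of trying to complete the induction.
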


\begin{proof}%[Proof of \cref{lem:receivers}]
	Assume to the contrary that in iteration $l$, the basis $B_l$ contains $i$ as receiver and \cref{alg:homotopy} selects a stepsize $\Delta \xi > 0$. Consider $(\xi_{l-1}, \bm{p}_{l-1}, q_{l-1})$, the parameters at the beginning of iteration $l$, as well as $(\xi_l, \bm{p}_l, q_l)$, the parameters at the end of iteration $l$. To simplify the exposition, we denote in this proof by $\one_i$, $i = 1, \ldots, S$, the $i$-th unit basis vector in $\Real^S$.
	
	Let $k \in \mathcal{S}$ be the donor in iteration $l$. Note that $k \neq j$ as otherwise $\dot{q} \geq 0$, which would contradict the construction of the list $\mathcal{D}$. Define $\delta$ via $\bm{p}_l = \bm{p}_{l-1} + \delta [\one_i - \one_k]$, and note that $\delta > 0$ since $\Delta \xi > 0$. We claim that the alternative parameter setting $(\xi_l^\prime, \bm{p}_l^\prime, q_l^\prime)$ with $\bm{p}_l^\prime = \bm{p}_{l-1} + \delta [\one_j - \one_k]$, $\xi_l^\prime = \left \lVert \bm{p}_l^\prime - \bar{\bm{p}} \right \rVert_{1, \bm{w}}$ and $q_l^\prime = \bm{z}^\top \bm{p}_l^\prime$ satisfies $(\xi_l^\prime, q_l^\prime) \leq (\xi_l, q_l)$ and $(\xi_l^\prime, q_l^\prime) \neq (\xi_l, q_l)$. Since this would correspond to a line segment with a steeper decrease than the one constructed by \cref{alg:homotopy}, this contradicts the optimality of \cref{alg:homotopy} proved in \cref{thm:homotopy_works}. To see that $(\xi_l^\prime, q_l^\prime) \leq (\xi_l, q_l)$, note that
	\[
		\xi_l^\prime
		\;\; = \;\;
		\left \lVert \bm{p}_l^\prime - \bar{\bm{p}} \right \rVert_{1, \bm{w}}
		\;\; \leq \;\;
		\left \lVert \bm{p}_l - \bar{\bm{p}} \right \rVert_{1, \bm{w}}
		\;\; = \;\;
		\xi_l
	\]
	since $w_j \leq w_i$ and $p_i \geq \bar{p_i}$ (otherwise, $i$ could not be a receiver). Likewise, we have
	\[
		q_l^\prime
		\;\; = \;\;
		\bm{z}^\top \bm{p}_l^\prime
		\;\; \leq \;\;
		\bm{z}^\top \bm{p}_l
		\;\; = \;\;
		q_l
	\]
	since $z_j \leq z_i$. Finally, since $(w_i, z_i) \neq (w_j, z_j)$, at least one of the previous two inequalities must be strict, which implies that $(\xi_l, \bm{p}_l, q_l)$ is not optimal, a contradiction.
\end{proof}

One readily verifies that if there are two potential receivers $i$ and $j$ satisfying $w_i = w_j$ and $z_i = z_j$, either one of the receivers can be removed from further consideration without affecting the correctness of \cref{alg:homotopy}. We thus arrive at \cref{alg:non-dominated}, which constructs a minimal set of receivers to be considered by \cref{alg:homotopy} in time $\bigO(S\log S)$.

\begin{algorithm}
	\KwIn{Objective coefficients $z_i$ and weights $w_i$ for all components $i \in \mathcal{S}$}
	Sort the elements $z_i$ and $w_i$ in non-decreasing order of $z_i$; break ties in non-decreasing order of $w_i$ \; 
	Initialize the set of possible receivers as $\mathcal{R} \gets \{1\}$ \;
	\For{ $i = 2\ldots S$}{
		\If{ $w_i < \min \{ w_k \ss k\in\mathcal{R} \}$}{
			Update $\mathcal{R} \gets \mathcal{R} \cup \{i\}$ \;
		}
	}
	\Return{Possible receivers mapped back to their original positions in $\mathcal{R}$}
	\caption{Identify non-dominated receivers $i \in \mathcal{S}$.} \label{alg:non-dominated}
\end{algorithm}

\cref{lem:receivers} immediately implies that for a uniform $\b{w}$, only $i \in \mathcal{S}$ with a minimal component $z_i$ can serve as a receiver, and our homotopy method can be adapted to run in time $\bigO(S \log S)$. More generally, if there are $C$ different weight values, then we need to consider at most one receiver for each of the $C$ values. The following corollary summarizes this fact.

\begin{cor} \label{cor:few_pieces}
	If $| \{ w_i \ss i \in \mathcal{S} \} | = C$, then \cref{alg:non-dominated,alg:homotopy} can be adapted to run in  time $\bigO(CS \log CS)$ and produce an output of length $T \leq CS$.
\end{cor}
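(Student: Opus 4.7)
The plan is to exploit \cref{lem:receivers}: I would modify \cref{alg:homotopy} so that it only constructs bases whose receiver lies in the set $\mathcal{R}$ returned by \cref{alg:non-dominated}, and then bound $|\mathcal{R}|$ by $C$.

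First I would show that $|\mathcal{R}| \le C$. The set $\mathcal{R}$ starts as $\{1\}$ and grows only by appending an index $i$ whose weight $w_i$ is strictly smaller than every weight already in $\mathcal{R}$. Hence the weights of the elements of $\mathcal{R}$ are pairwise distinct, so $|\mathcal{R}|$ is bounded by the number $C$ of distinct weight values. I would then verify that $\mathcal{R}$ retains at least one representative of every non-dominated class of indices: since the sort is by non-decreasing $z_i$ with ties broken by non-decreasing $w_i$, if $i \notin \mathcal{R}$ there exists $k \in \mathcal{R}$ processed earlier with $(z_k, w_k) \le (z_i, w_i)$, meaning that $i$ is either equivalent to or dominated by $k$ in the sense of \cref{lem:receivers}.

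Next I would adapt \cref{alg:homotopy} so that in the enumeration over pairs $(i,j)$ the outer index $i$---playing the role of receiver in case (C1), and, after swapping labels so that $w_i > w_j$, also in case (C2)---ranges only over $\mathcal{R}$. \cref{lem:receivers} guarantees that any basis discarded in this way would have produced stepsize $\Delta\xi = 0$ in the unmodified algorithm, so the correctness argument of \cref{thm:homotopy_works} carries over verbatim: the discarded bases contribute no new pieces to $q(\xi)$. The restricted list $\mathcal{D}$ then contains at most $|\mathcal{R}|(S-1)$ pairs from each of (C1) and (C2), so its length, and in particular the number of breakpoints $T$, is $\bigO(CS)$.

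Aggregating runtimes: \cref{alg:non-dominated} performs one sort and one linear scan in $\bigO(S\log S)$ time; the modified \cref{alg:homotopy} populates $\mathcal{D}$ in $\bigO(CS)$ time, sorts it in $\bigO(CS\log CS)$ time, and then executes $T = \bigO(CS)$ iterations of constant work each (updating the two changing components of $\*p$ and computing $\Delta\xi$ from a single nonnegativity or lower-bound condition). These contributions sum to the claimed $\bigO(CS\log CS)$ bound. I expect the main subtlety to be the correctness verification in the previous paragraph, which has to invoke \cref{lem:receivers} for (C1) bases in their natural form and for (C2) bases only after relabelling so that the heavier-weight component plays the receiver role; the proof must also confirm that the output value $T+1$ is not inflated past $CS$ by bases that are infeasible at $\xi_{l-1}$ and are skipped with $\Delta\xi = 0$ in the main loop.
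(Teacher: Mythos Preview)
Your proposal is correct and follows essentially the same approach as the paper, which offers only a one-line justification (``if there are $C$ different weight values, then we need to consider at most one receiver for each of the $C$ values'') before stating the corollary without formal proof. Your argument---bounding $|\mathcal{R}| \le C$ via the strictly decreasing weights in \cref{alg:non-dominated}, invoking \cref{lem:receivers} to restrict the receiver index in both (C1) and (C2) bases, and then summing the $\bigO(CS\log CS)$ contributions---is precisely the fleshed-out version of that sentence, and your care about relabelling (C2) so the heavier-weight component is the receiver is the right way to make \cref{lem:receivers} apply there.
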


\section{Computing the Bellman Operator: S-Rectangular Sets} \label{sec:decomposition}

We now develop a bisection scheme to compute the s-rectangular robust Bellman optimality operator $\mathfrak{L}$ defined in~\eqref{eq:bellman_s_rectangular}. Our bisection scheme builds on the homotopy method for the sa-rectangular Bellman optimality operator described in the previous section.

The remainder of the section is structured as follows. We first describe the bisection scheme for computing $\mathfrak{L}$ in \cref{sec:bisection_scheme}. Our method does not directly compute the greedy policy required for our PPI from \cref{sec:ppi} but computes the optimal values of some dual variables instead. \cref{sec:optimal_primal} describes how to extract the optimal greedy policy from these dual variables. Since our bisection scheme for computing $\Bell$ cannot be used to compute the s-rectangular robust Bellman policy update $\Bell_{\bm{\pi}}$ for a fixed policy $\bm{\pi} \in \Pi$, we describe a different bisection technique for computing  $\Bell_{\bm{\pi}}$ in \cref{sec:value_function_update}. We use this technique to solve the robust policy evaluation MDP defined in \cref{sec:ppi}.

\subsection{Bisection Scheme for Robust Bellman Optimality Operator} \label{sec:bisection_scheme}

To simplify the notation, we fix a state $s \in \states$ throughout this section and drop the
associated subscripts whenever the context is unambiguous. In particular, we denote the nominal transition probabilities under action $a$ as $\bar{\b{p}}_a \in\Delta^\statecount$, the rewards under action $a$ as $\b{r}_a\in\Real^S$, the $L_1$-norm weight vector as $\b{w}_a \in\Real^\statecount$, and the budget of ambiguity as $\kappa$. We also fix a value function $\bm{v}$ throughout this section. We then aim to solve the optimization problem
\begin{equation} \label{eq:s_rect_optimization}
\max_{\*d\in\Delta^\actioncount} \min_{\b{\xi}\in\RealPlus^\actioncount} \left\{ \sum_{a\in\actions}  d_a \cdot q_{a}(\xi_a ) \ss \sum_{a\in\actions} \xi_a \le \kappa \right\},
\end{equation}
where $q_a (\xi)$ is defined in~\eqref{eq:q_optimization_l1}. Note that problem~\eqref{eq:s_rect_optimization} exhibits a very specific structure: It has a single constraint, and the function $q_a$ is piecewise affine with at most $S^2$ pieces. We will use this structure to derive an efficient solution scheme that outperforms the naive solution of~\eqref{eq:s_rect_optimization} via a standard LP solver.

\begin{figure}
	\centering
	\includegraphics[width=0.7\linewidth]{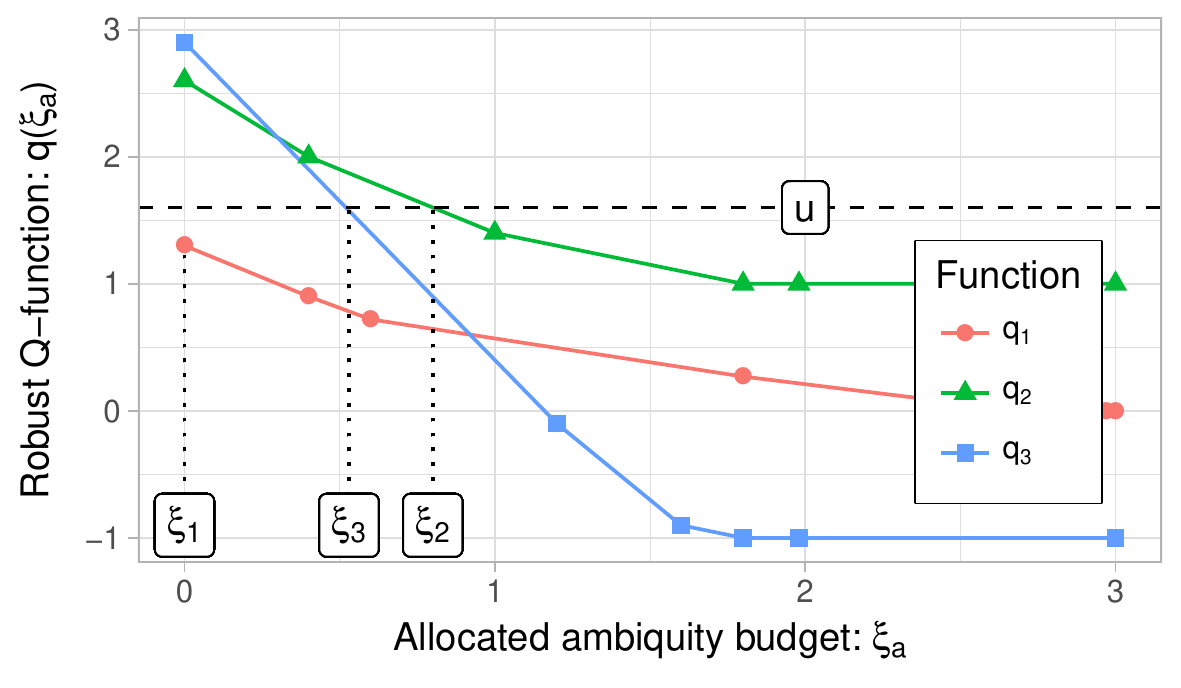}
	\caption{Visualization of the s-rectangular Bellman update with the response functions $q_1, q_2, q_3$ for $3$ actions.} \label{fig:example_decomposition} 
\end{figure}

Our bisection scheme employs the following reformulation of~\eqref{eq:s_rect_optimization}:
\begin{equation} \label{eq:simplified_opt}
	\min_{u \in\Real} \, \left\{ u \ss  \sum_{a\in\actions} q_a^{-1}(u) \le \kappa \right\},
\end{equation}
where the inverse functions $q_a^{-1}$ are defined as
\begin{equation} \label{eq:qa_inverse}
	q_a^{-1}(u) = \min_{\*p \in \Delta^\statecount} \left\{ \| \*p - \bar{\b{p}}_a \|_{1,\b{w}_a} \ss  \b{p}\tr \*z \le u \right\} \qquad \forall a \in \actions.
\end{equation}

Before we formally show that~\eqref{eq:s_rect_optimization} and~\eqref{eq:simplified_opt} are indeed equivalent, we discuss the intuition that underlies the formulation~\eqref{eq:simplified_opt}. In problem~\eqref{eq:s_rect_optimization}, the adversarial nature chooses the transition probabilities $\b{p}_a$, $a \in \mathcal{A}$, to minimize value of $\sum_{a\in\actions} d_a \cdot (\b{p}_a\tr \*z)$ while adhering to the ambiguity budget via $\sum_{a\in\actions} \xi_a \le \kappa$ for $\xi_a = \| \b{p}_a - \bar{\b{p}}_a \|_{1, \b{w}_a}$. In problem~\eqref{eq:qa_inverse}, $q_a^{-1}(u)$ can be interpreted as the minimum ambiguity budget $\| \b{p} - \bar{\b{p}_a} \|_{1,\b{w}_a}$ assigned to the action $a \in \mathcal{A}$ that allows nature to ensure that taking an action $a$ results in a robust value $\b{p}\tr \bm{z}$ not exceeding $u$. Any value of $u$ that is feasible in~\eqref{eq:simplified_opt} thus implies that within the specified overall ambiguity budget of $\kappa$, nature can ensure that \emph{every} action $a \in \mathcal{A}$ results in a robust value not exceeding $u$. Minimizing $u$ in~\eqref{eq:simplified_opt} thus determines the transition probabilities that lead to the lowest robust value under \emph{any} policy, which is the same as computing the robust Bellman optimality operator~\eqref{eq:s_rect_optimization}.

\begin{exm}
	\cref{fig:example_decomposition} shows an example with $3$ actions and the corresponding $q$-functions $q_1,q_2,q_3$. To achieve the robust value of $u$ depicted in the figure, the \emph{smallest} action-wise budgets $\xi_a$ that guarantee $q(\xi_a) \le u$, $i=1,2,3$, are indicated at $\xi_1$, $\xi_2$ and $\xi_3$, resulting in an overall budget of $\kappa = \xi_1 + \xi_2 + \xi_3$.
\end{exm}

We are now ready to state the main result of this section.

\begin{thm} \label{thm:objective_equality}
	The optimal objective values of \eqref{eq:s_rect_optimization} and \eqref{eq:simplified_opt} coincide.
\end{thm}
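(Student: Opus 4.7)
The plan is to establish the equality of the optimal values $V_1$ and $V_2$ of problems~\eqref{eq:s_rect_optimization} and~\eqref{eq:simplified_opt} via two separate inequalities, leveraging Sion's minimax theorem for the harder direction.

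First, I would prove $V_1 \le V_2$ directly by converting any feasible $u$ for~\eqref{eq:simplified_opt} into a concrete minimizing strategy in~\eqref{eq:s_rect_optimization}. Setting $\xi_a^\star = q_a^{-1}(u)$ for each $a \in \actions$, feasibility of $u$ guarantees $\sum_a \xi_a^\star \le \kappa$, while the definition of $q_a^{-1}$ in~\eqref{eq:qa_inverse} together with the monotonicity of $q_a$ yields $q_a(\xi_a^\star) \le u$. Hence for every $\*d \in \Delta^\actioncount$, the inner minimum in~\eqref{eq:s_rect_optimization} satisfies $\min_{\b{\xi}} \sum_a d_a q_a(\xi_a) \le \sum_a d_a q_a(\xi_a^\star) \le u$, so taking $\max_{\*d}$ gives $V_1 \le u$, and infimizing over feasible $u$ delivers $V_1 \le V_2$.

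For the reverse direction, I would apply Sion's minimax theorem to the bilinear-type objective $f(\*d, \b{\xi}) = \sum_a d_a q_a(\xi_a)$ over the compact convex sets $\Delta^\actioncount$ and $\{\b{\xi} \ge \zero : \sum_a \xi_a \le \kappa\}$. The function $f$ is linear in $\*d$ (hence concave and continuous) and a non-negative combination of the piecewise-affine, convex functions $q_a$ analyzed in \cref{sec:param}, so it is convex and continuous in $\b{\xi}$. Sion's theorem therefore yields
\[
V_1 \;=\; \min_{\b{\xi} \ge \zero,\, \sum_a \xi_a \le \kappa} \; \max_{\*d \in \Delta^\actioncount} \sum_a d_a q_a(\xi_a) \;=\; \min_{\b{\xi} \ge \zero,\, \sum_a \xi_a \le \kappa} \; \max_{a \in \actions} q_a(\xi_a),
\]
where the last identity uses that maximizing a linear function over the probability simplex concentrates all mass on a single component. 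To close the loop, for any feasible $\b{\xi}$, set $u := \max_a q_a(\xi_a)$; monotonicity of $q_a$ and the definition of $q_a^{-1}$ give $q_a^{-1}(u) \le \xi_a$, whence $\sum_a q_a^{-1}(u) \le \kappa$ and $u$ is feasible in~\eqref{eq:simplified_opt}. This yields $V_2 \le \min_{\b{\xi}} \max_a q_a(\xi_a) = V_1$, completing the proof.

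The main obstacle I anticipate is rigorously justifying the hypotheses of Sion's theorem and the well-definedness of $q_a^{-1}(u)$ on the relevant range. Both are essentially free from the structural analysis of the parametric LP~\eqref{eq:q_linear_program} performed in \cref{sec:param}: $q_a$ is piecewise affine, non-increasing, continuous, bounded below by $\min_{\*p \in \Delta^\statecount} \*p\tr \*z$, and takes the value $\b{\bar p}_a\tr \*z$ at $\xi = 0$; hence $q_a$ is convex and $q_a^{-1}(u)$ exists for every $u$ in the range relevant to~\eqref{eq:simplified_opt}. A small subtlety worth flagging is the handling of $u$ just at or below $\min q_a$, but feasibility of $u$ in~\eqref{eq:simplified_opt} rules out problematic values.
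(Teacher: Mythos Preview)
Your proposal is correct and follows essentially the same route as the paper: both apply a minimax swap (the paper invokes convexity of $q_a$, you invoke Sion), reduce~\eqref{eq:s_rect_optimization} to $\min_{\b\xi}\max_a q_a(\xi_a)$, and then connect to~\eqref{eq:simplified_opt} through the definition of $q_a^{-1}$. The only cosmetic difference is that the paper proceeds via a single chain of equalities using an epigraph reformulation and the identity $g_a(u)=\min\{\xi_a\ge 0: q_a(\xi_a)\le u\}=q_a^{-1}(u)$, whereas you split the argument into the two inequalities $V_1\le V_2$ and $V_2\le V_1$.
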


\Cref{thm:objective_equality} relies on the following auxiliary result, which we state first.

\begin{lem} \label{lem:q_convex}
	The functions $q_a$ and $q_a^{-1}$ are convex in $\xi$ and $u$, respectively.
\end{lem}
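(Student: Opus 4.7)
The plan is to prove convexity of each function directly from its definition, using the standard value-function argument for a convex program with a single right-hand-side parameter. Both $q_a$ and $q_a^{-1}$ are optimal values of minimization problems over $\b{p}\in\Delta^S$ in which (i) the feasible set is convex and cut out by a convex constraint whose right-hand side is the parameter, and (ii) the objective is linear (hence convex) in $\b{p}$. This is exactly the setting in which perturbation functions of convex programs are convex.

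Concretely, for $q_a$ I would fix $\xi_1,\xi_2\ge 0$ and $\lambda\in[0,1]$, set $\xi_\lambda=\lambda\xi_1+(1-\lambda)\xi_2$, and let $\b{p}_1,\b{p}_2\in\Delta^S$ attain $q_a(\xi_1)$ and $q_a(\xi_2)$ (these exist by compactness of the feasible sets). The candidate $\b{p}_\lambda=\lambda\b{p}_1+(1-\lambda)\b{p}_2$ lies in $\Delta^S$ by convexity of the simplex, and by the triangle inequality of $\|\cdot\|_{1,\b{w}_a}$,
\[
\|\b{p}_\lambda-\bar{\b{p}}_a\|_{1,\b{w}_a}\;\le\;\lambda\|\b{p}_1-\bar{\b{p}}_a\|_{1,\b{w}_a}+(1-\lambda)\|\b{p}_2-\bar{\b{p}}_a\|_{1,\b{w}_a}\;\le\;\lambda\xi_1+(1-\lambda)\xi_2\;=\;\xi_\lambda,
\]
so $\b{p}_\lambda$ is feasible for $q_a(\xi_\lambda)$. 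Linearity of the objective then yields $q_a(\xi_\lambda)\le \b{z}\tr\b{p}_\lambda=\lambda q_a(\xi_1)+(1-\lambda)q_a(\xi_2)$.

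For $q_a^{-1}$ I would run the identical argument with the roles of the objective and the constraint swapped: pick $u_1,u_2$, $\lambda\in[0,1]$, minimizers $\b{p}_1,\b{p}_2$ of~\eqref{eq:qa_inverse} at $u_1,u_2$, and set $\b{p}_\lambda=\lambda\b{p}_1+(1-\lambda)\b{p}_2\in\Delta^S$. Linearity of $\b{p}\mapsto\b{p}\tr\b{z}$ gives $\b{p}_\lambda\tr\b{z}\le\lambda u_1+(1-\lambda)u_2$, so $\b{p}_\lambda$ is feasible at $u_\lambda=\lambda u_1+(1-\lambda)u_2$, and convexity of the weighted $L_1$-norm then bounds $q_a^{-1}(u_\lambda)\le \lambda q_a^{-1}(u_1)+(1-\lambda)q_a^{-1}(u_2)$. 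A minor technical point to address up front is that $q_a^{-1}(u)$ is only defined for $u$ in the range of $q_a$ (otherwise the feasible set of~\eqref{eq:qa_inverse} is empty); over this interval the argument above is clean, which is all that is needed since~\eqref{eq:simplified_opt} only evaluates $q_a^{-1}$ at values that are attainable.

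There is no real obstacle here; the proof is a textbook perturbation-of-a-convex-program argument applied twice, and the structural facts it needs (convexity of $\Delta^S$, the triangle inequality for $\|\cdot\|_{1,\b{w}_a}$, linearity of $\b{p}\tr\b{z}$) are all immediate. The only thing to be mildly careful about is the domain issue for $q_a^{-1}$ mentioned above, which I would handle with a one-line remark rather than a separate case analysis.
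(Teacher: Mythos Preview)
Your proposal is correct and is essentially the same argument the paper uses: the paper's proof is a two-line remark that $q_a$ is convex because it is the optimal value of the LP~\eqref{eq:q_linear_program} as a function of its right-hand side $\xi$, and that $q_a^{-1}$ is handled identically after linearizing the $L_1$-objective in~\eqref{eq:qa_inverse}. Your write-up simply unpacks this standard parametric-programming fact via the explicit convex-combination-of-minimizers argument, which is exactly what underlies the paper's appeal to the LP formulation.
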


\begin{proof}
	The convexity of $q_a$ is immediate from the LP formulation~\eqref{eq:q_linear_program}.  The convexity of $q_a^{-1}$ can be shown in the same way by linearizing the objective function in~\eqref{eq:qa_inverse}.
\end{proof}

\begin{proof}[Proof of \cref{thm:objective_equality}]
	Since the functions $q_a$, $a \in \actions$, are convex (see~\cref{lem:q_convex}), we can exchange the maximization and minimization operators in~\eqref{eq:s_rect_optimization} to obtain
	\[
		\min_{\b{\xi}\in\RealPlus^\actioncount} \left\{ \max_{\*d\in\Delta^\actioncount} \left( \sum_{a\in\actions}  d_a \cdot q_{a}(\xi_a ) \right) \ss \sum_{a\in\actions} \xi_a \le \kappa \right\}.
	\]
	Since the inner maximization is linear in $\*d$, it is optimized at an extreme point of $\Delta^\actioncount$. This allows us to re-express the optimization problem as
	\[
		\min_{\b{\xi}\in\RealPlus^\actioncount} \left\{ \max_{a \in \actions} \left( q_{a} (\xi_a) \right) \ss \sum_{a\in\actions} \xi_a \le \kappa \right\}.
	\]
	We can linearize the objective function in this problem by introducing the epigraphical variable $u \in \Real$:
	\begin{equation}\label{eq:humpdy_dumpdy}
		\min_{u \in \Real} \min_{\b{\xi}\in\RealPlus^\actioncount} \left\{ u \ss \sum_{a\in\actions} \xi_a \le \kappa, \;\; u \geq \max_{a \in \actions} \left[ q_{a} (\xi_a) \right] \right\}.
	\end{equation}
	It can be readily seen that for a fixed $u$ in the outer minimization, there is an optimal $\bm{\xi}$ in the inner minimization that minimizes each $\xi_a$ individually while satisfying $q_a (\xi_a) \leq u$ for all $a \in \actions$. Define $g_a$ as the $a$-th component of this optimal $\bm{\xi}$:
	\begin{equation} \label{eq:g_definition}
		g_a(u) = \min_{\xi_a \in \RealPlus}  \{\xi_a  \ss q_a(\xi_a) \le u \}. 
	\end{equation}
	We show that $g_a(u) = q_a^{-1}(u)$. To see this, we substitute $q_a$ in~\eqref{eq:g_definition} to get:
	\[ g_a(u) = \min_{\xi_a \in \RealPlus} \min_{\b{p}_a \in\Delta^\statecount} \left\{ \xi_a  \ss  \b{p}_a\tr \b{z}_a \le u,\; \norm{\b{p}_a - \bar{\b{p}}_a}_{1,\*w_a} \le \xi_a \right\}~. \]
	The identity $g_a = q_a^{-1}$ then follows by realizing that the optimal $\xi_a\opt$ in the equation above must satisfy $\xi_a\opt = \norm{\b{p}_a - \bar{\b{p}}_a}_{1,\*w_a}$. Finally, substituting the definition of $g_a$ in \eqref{eq:g_definition} into the problem \eqref{eq:humpdy_dumpdy} shows that the optimization problem~\eqref{eq:s_rect_optimization} is indeed equivalent to \eqref{eq:simplified_opt}. 
\end{proof}

\begin{algorithm}
	\KwIn{Desired precision $\epsilon$, functions $q_a^{-1}$, $a\in\actions$\\
		$\quad$ $u_{\min}$: maximum known $u$ for which \eqref{eq:simplified_opt} is \emph{infeasible}, \\
		$\quad$ $u_{\max}$: minimum known $u$ for which \eqref{eq:simplified_opt} is \emph{feasible} }
	\KwOut{$\hat{u}$ such that $\lvert u\opt - \hat{u} \rvert \le \epsilon$, where $u\opt$ is optimal in \eqref{eq:simplified_opt}}
	\While{$u_{\max} - u_{\min} > 2\,\epsilon$}{
		Split interval $[u_{\min}, u_{\max}]$ in half:
		$u \gets (u_{\min} + u_{\max})/ 2$\;
		Calculate the budget required to achieve the mid point $u$: $s \gets \sum_{a\in\actions} q_a^{-1}(u)$ \;
		\eIf{$s \le \kappa$}{
			$u$ is \emph{feasible}: update the feasible upper bound: $u_{\max} \gets u$\;
		}{
			$u$ is \emph{infeasible}: update the infeasible lower bound: $u_{\min} \gets u$\;
		}
	}
	\Return{$(u_{\min} + u_{\max})/2$};
	\caption{Bisection scheme for the robust Bellman optimality operator~\eqref{eq:bellman_s_rectangular}} \label{alg:bisection}
\end{algorithm}

The bisection scheme for solving problem~\eqref{eq:simplified_opt} is outlined in \cref{alg:bisection}. Bisection is a natural and efficient approach for solving the one-dimensional optimization problem. This algorithm is simple and works well in practice, but it can be further improved by leveraging the fact that the functions $q^{-1}_a$, $a \in \actions$, are piecewise affine. In fact, \cref{alg:bisection} only solves problem~\eqref{eq:simplified_opt} to $\epsilon$-optimality, and it requires the choice of a suitable precision $\epsilon$. 

We outline how to adapt \cref{alg:bisection} to determine the optimal solution to problem~\eqref{eq:simplified_opt} in quasi-linear time independent of the precision $\epsilon$; please see \cref{sec:bisection_linear} for details. Recall that \cref{alg:homotopy} computes the breakpoints $(\xi^a_t)_{t = 0, \ldots, T_a+1}$, and objective values $(q^a_t)_{t = 0, \ldots, T_a+1}$, $T_a \leq S^2$, of each function $q_a$, $a \in \actions$. Then each inverse function $q^{-1}_a$ is also piecewise affine with breakpoints $(q^a_t)_{t = 0, \ldots, T_a+1}$, and corresponding function values $\xi_t^a = q^{-1}_a (q_t^a)$. (Care needs to be taken to define $q^{-1}_a (u) = \infty$ for $u < q_{T_a+1}^a$.) We now combine all breakpoints $q^a_t$, $a \in \actions$, to a single list $\mathcal{K}$ in ascending order. We then execute a variant of \cref{alg:bisection} in which both $u_{\min}$ and $u_{\max}$ are always set to some breakpoints from $\mathcal{K}$. Instead of choosing the midpoint $u \gets (u_{\min} + u_{\max})/2$ in each iteration of the bisection, we choose the \emph{median} breakpoint between $u_{\min}$ and $u_{\max}$. We stop once $u_{\min}$ and $u_{\max}$ are consecutive breakpoints in $\mathcal{K}$, in which case the optimal solution of~\eqref{eq:simplified_opt} can be computed by basic algebra. 

The details of \cref{alg:bisection} are described in \cref{sec:bisection_linear} which implies the following complexity statement. 
\begin{thm}\label{thm:p-equals-np}
The combined computational complexity of \cref{alg:homotopy,alg:bisection_linear} is $\bigO (S^2 A \log S A + A\log S \log SA)$.
\end{thm}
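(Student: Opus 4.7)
The plan is to bound the combined runtime by the cost of running \cref{alg:homotopy} once per action together with the cost of preprocessing that feeds \cref{alg:bisection_linear}, and then the cost of \cref{alg:bisection_linear} itself; these two contributions will match the two summands in the claimed bound.

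For the first contribution, I would invoke \cref{cor:few_pieces}: since $\mathcal{S}$ hosts at most $S$ distinct weights, each call to \cref{alg:homotopy} runs in $\bigO(S^2 \log S)$ time and produces at most $T_a \le S^2$ breakpoints. Summing over the $A$ actions gives $\bigO(S^2 A \log S)$. Because \cref{alg:bisection_linear} operates on the union $\mathcal{K}$ of these breakpoint lists, I would then merge the $A$ individually sorted output lists using a heap-based $k$-way merge in $\bigO(S^2 A \log A)$ time, which combined with the homotopy step yields the first term $\bigO(S^2 A \log SA)$.

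For the second contribution, I would argue that \cref{alg:bisection_linear} performs $\bigO(\log |\mathcal{K}|) = \bigO(\log SA)$ iterations, because replacing one endpoint of the working interval by the median breakpoint of $\mathcal{K}$ between $u_{\min}$ and $u_{\max}$ halves the number of enclosed breakpoints at every step. Each iteration evaluates $\sum_{a \in \actions} q_a^{-1}(u)$ at the chosen midpoint $u$. By \cref{thm:homotopy_works} and \cref{cor:few_pieces}, $q_a^{-1}$ is piecewise affine with $\bigO(S^2)$ breakpoints (namely the values $q_t^a$ produced by \cref{alg:homotopy}) already stored in sorted order, so a binary search locates the active piece in $\bigO(\log S)$ time and a linear interpolation within the piece is $\bigO(1)$. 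The per-iteration cost is therefore $\bigO(A \log S)$, and multiplying by the $\bigO(\log SA)$ iterations gives the second term $\bigO(A \log S \log SA)$.

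The main subtlety I expect is verifying that the median-splitting variant genuinely converges to the exact optimum of \eqref{eq:simplified_opt} in $\bigO(\log SA)$ iterations rather than only to an $\epsilon$-approximation: once the working interval has collapsed to two consecutive breakpoints of $\mathcal{K}$, every $q_a^{-1}$ is affine on that interval, so the equation $\sum_{a \in \actions} q_a^{-1}(u) = \kappa$ becomes linear in $u$ and can be solved exactly in $\bigO(A)$ extra time, which is absorbed into the second term. Summing the two contributions then yields the advertised bound.
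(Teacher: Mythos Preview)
Your proposal is correct and follows essentially the same argument as the paper's own analysis in \cref{sec:bisection_linear}: the first summand comes from running \cref{alg:homotopy} for each action ($\bigO(S^2A\log S)$ via \cref{cor:few_pieces} with $C=S$) plus the $k$-way merge of the $A$ sorted breakpoint lists ($\bigO(S^2A\log A)$), and the second summand comes from $\bigO(\log|\mathcal{K}|)=\bigO(\log SA)$ bisection iterations each costing $\bigO(A\log S)$ for the binary searches, with the final exact linear solve absorbed. There is no substantive difference in approach.
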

Because each execution of \cref{alg:bisection_linear} requires that \cref{alg:homotopy} is executed to produce its inputs, \cref{thm:p-equals-np} states the joint complexity of the two algorithms. Using reasoning similar to \cref{cor:few_pieces}, the bound in \cref{thm:p-equals-np} can be tightened as follows.
\begin{cor}
	If $| \{ w_i \ss i \in \mathcal{S} \} | = C$, then \cref{alg:homotopy,alg:bisection_linear} can be adapted to run jointly in time $\bigO(CSA \log CSA + A \log CS \log CSA)$.
\end{cor}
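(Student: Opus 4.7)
The plan is to redo the complexity accounting for \cref{thm:p-equals-np} step by step, replacing the generic bound $T_a \le S^2$ on the number of breakpoints produced by \cref{alg:homotopy} for action $a \in \actions$ with the sharper bound $T_a \le CS$ afforded by \cref{cor:few_pieces}, and verify that every subsequent cost in the combined execution of \cref{alg:homotopy,alg:bisection_linear} scales in the size of the input rather than in $S$ alone.

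First I would invoke \cref{cor:few_pieces} for each action $a \in \actions$: the adapted \cref{alg:homotopy} (preceded by \cref{alg:non-dominated}) runs in time $\bigO(CS\log CS)$ and outputs at most $CS$ affine pieces of $q_a$. Summing over the $A$ actions yields $\bigO(CSA\log CS)$ time and a combined collection of at most $CSA$ breakpoints. These breakpoints must be merged into the single sorted list $\mathcal{K}$ used by \cref{alg:bisection_linear}; producing $\mathcal{K}$ costs $\bigO(CSA \log CSA)$ (either by a direct sort of the union, or by an $A$-way merge of the $A$ already-sorted lists). This dominates the per-action homotopy cost and contributes the first term $\bigO(CSA\log CSA)$ of the claimed bound.

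Next I would analyze \cref{alg:bisection_linear} under the new bound $|\mathcal{K}| \le CSA$. The bisection always selects a median breakpoint of $\mathcal{K}$ between the current $u_{\min}$ and $u_{\max}$, so it terminates after $\bigO(\log|\mathcal{K}|) = \bigO(\log CSA)$ iterations. Each iteration must evaluate $\sum_{a\in\actions} q_a^{-1}(u)$ at the candidate midpoint $u$; because each $q_a^{-1}$ is piecewise affine with at most $CS$ pieces sorted by the output of \cref{alg:homotopy}, a single evaluation is computed by a binary search in the per-action breakpoint list in time $\bigO(\log CS)$, and the $A$ evaluations together cost $\bigO(A \log CS)$. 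Multiplying by the number of iterations gives the second term $\bigO(A \log CS \log CSA)$. Adding the two contributions yields the advertised bound $\bigO(CSA \log CSA + A\log CS \log CSA)$.

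The main obstacle I anticipate is verifying that the internal invariants of \cref{alg:bisection_linear} (which is deferred to an appendix) are preserved by the tighter bound. In particular I must confirm that (i) the median-of-breakpoints step can still be implemented in amortized logarithmic cost when the merged list has only $CSA$ rather than $S^2A$ entries, and (ii) the terminal step that exactly solves the last linear segment is a constant-time operation independent of $C$. Both are essentially bookkeeping exercises once the piecewise-affine structure of $q_a$ with at most $CS$ pieces has been established by \cref{cor:few_pieces}, and nothing in the bisection argument relies on any particular upper bound on $T_a$; it only needs the bound to be applied consistently. The rest is arithmetic simplification of logarithms in exactly the same form as the proof of \cref{thm:p-equals-np}, with $S^2$ systematically replaced by $CS$.
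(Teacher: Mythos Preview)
Your proposal is correct and matches the paper's own argument essentially line for line: the paper's appendix derives the complexity of \cref{alg:bisection_linear} directly in terms of the per-action breakpoint bound $T_a \le CS$ from \cref{cor:few_pieces}, obtaining $\bigO(CSA \log CSA)$ for sorting/merging the $A$ lists and $\bigO(A \log CS \log CSA)$ for the $\bigO(\log CSA)$ bisection iterations at $\bigO(A\log CS)$ each. Your anticipated ``obstacles'' are indeed mere bookkeeping, exactly as you surmised.
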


We emphasize that general (interior-point) algorithms for the linear programming formulation of the robust Bellman optimality operator has the theoretical worst-case complexity of $\bigO(S^{4.5} A^{4.5})$; see \cref{sec:s_rect_linear_formulation}.

\subsection{Recovering the Greedy Policy} \label{sec:optimal_primal}

Since \cref{alg:bisection} only computes the value of the robust Bellman optimality operator $\Bell$ and not an optimal greedy policy $\*d^\star$ achieving this value, it cannot be used in PPI or related robust policy iteration methods \cite{Iyengar2005,Kaufman2013} as is. This section describes how to compute an optimal solution $\*d\opt$ to problem~\eqref{eq:s_rect_optimization} from the output of \cref{alg:bisection}. We again fix a state $s \in \states$ and drop the associated subscripts whenever the context is unambiguous. We also fix a value function $\bm{v}$ throughout this section. Finally, we assume that $\kappa > 0$; the limiting case $\kappa = 0$ is trivial since the robust Bellman optimality operator then reduces to the nominal Bellman optimality operator.

Recall that \cref{alg:bisection} computes the optimal solution $u\opt\in\Real$ to problem~\eqref{eq:simplified_opt}, which thanks to \cref{thm:objective_equality} equals the optimal value of problem~\eqref{eq:s_rect_optimization}. We therefore have
\begin{align}
    \nonumber
    u\opt &= \max_{\*d\in\Delta^\actioncount} \min_{\b{\xi}\in\RealPlus^\actioncount} \left\{ \sum_{a\in\actions}  d_a \cdot q_{a}(\xi_a ) \ss \sum_{a\in\actions} \xi_a \le \kappa \right\} \\
    \label{eq:dual_u_optimal}
    &= \min_{\b{\xi}\in\RealPlus^\actioncount}  \left\{\max_{\*d\in\Delta^\actioncount}  \sum_{a\in\actions}  d_a \cdot q_{a}(\xi_a ) \ss \sum_{a\in\actions} \xi_a \le \kappa \right\}~,
\end{align}
where the second equality follows from the classical Minimax theorem. To compute an optimal $\*d\opt$ from $u^\star$, we first use the definition~\eqref{eq:qa_inverse} of $q_a^{-1}$ to compute $\*\xi\opt$ defined as
\begin{equation} \label{eq:xi_opt_construction}
    \xi_a\opt = q_a^{-1}(u\opt) \qquad \forall a\in\actions~. 
\end{equation}
Intuitively, the components $\xi_a^\star$ of this vector represent the action-wise uncertainty budgets required to ensure that no greedy policy achieves a robust value that exceeds $u^\star$. The set $\mathcal{C}(\*\xi\opt) = \{ a\in\actions \ss q_a(\xi_a\opt) = u\opt\}$ of all actions achieving the optimal robust value plays an important role in the construction of an optimal greedy policy $\*d\opt$. To this end, the following result collects important properties of $\*\xi\opt$ and $\mathcal{C}(\*\xi\opt)$.

\begin{lem}\label{lem:xistar_optimal}
    The vector $\*\xi\opt$ defined in~\eqref{eq:xi_opt_construction} is optimal in~\eqref{eq:dual_u_optimal}. Moreover, $\mathcal{C} (\*\xi\opt) \neq \emptyset$ and
    \begin{enumerate}
        \item[(i)] $q_a(\xi\opt_a) = u^\star$ for all $a \in \mathcal{C} (\*\xi\opt)$;
        \item[(ii)] $\xi_a^\star = 0$ and $q_a(\xi\opt_a) = \bar{\bm{p}}_a^\top \bm{z} \leq u^\star$ for all $a\in\actions\setminus\mathcal{C}(\*\xi\opt)$.
    \end{enumerate}
\end{lem}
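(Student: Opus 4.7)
The plan is to first establish the feasibility and optimality of $\*\xi\opt$ in \eqref{eq:dual_u_optimal}, and then read off properties (i) and (ii) from the characterization of $q_a^{-1}$. Throughout, I will rely on the fact that $q_a$ is non-increasing (enlarging the ambiguity budget can only lower nature's minimum) and continuous piecewise-affine (from \cref{thm:homotopy_works}), as is its inverse $q_a^{-1}$; moreover, the identity $q_a^{-1}(u) = \min\{\xi \ge 0 \ss q_a(\xi) \le u\}$ links the two, which I would verify as a short preliminary observation by swapping the inner and outer minimizations in \eqref{eq:qa_inverse}.

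Feasibility is immediate: since $u\opt$ is optimal (and hence feasible) in \eqref{eq:simplified_opt}, we have $\sum_{a\in\actions} \xi_a\opt = \sum_{a\in\actions} q_a^{-1}(u\opt) \le \kappa$, so $\*\xi\opt$ lies in the feasible set of \eqref{eq:dual_u_optimal}. The inner maximization over $\*d\in\Delta^\actioncount$ of a linear objective is attained at a vertex, so the objective of \eqref{eq:dual_u_optimal} at $\*\xi\opt$ equals $\max_{a\in\actions} q_a(\xi_a\opt)$. It therefore suffices to prove that this maximum equals $u\opt$.

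For the upper bound $\max_a q_a(\xi_a\opt)\le u\opt$, pick for each $a$ a minimizer $\*p_a\opt$ in the definition \eqref{eq:qa_inverse} of $q_a^{-1}(u\opt) = \xi_a\opt$; then $\|\*p_a\opt-\bar{\*p}_a\|_{1,\*w_a}\le \xi_a\opt$ and $(\*p_a\opt)\tr \*z \le u\opt$, so $\*p_a\opt$ is feasible in \eqref{eq:q_optimization_l1} for budget $\xi_a\opt$, giving $q_a(\xi_a\opt) \le (\*p_a\opt)\tr \*z \le u\opt$. For the matching lower bound, I argue by contradiction: if $u' := \max_a q_a(\xi_a\opt) < u\opt$, then for each $a$ the minimizer $\*p_a\opt$ above also satisfies $(\*p_a\opt)\tr \*z \le u'$ (because $(\*p_a\opt)\tr \*z = q_a(\xi_a\opt) \le u'$ after possibly re-selecting among the optima), hence $q_a^{-1}(u') \le \xi_a\opt$. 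Summing gives $\sum_a q_a^{-1}(u') \le \sum_a \xi_a\opt \le \kappa$, so $u'$ is feasible in \eqref{eq:simplified_opt} with $u' < u\opt$, contradicting the optimality of $u\opt$. Hence $\max_a q_a(\xi_a\opt)=u\opt$, which proves optimality of $\*\xi\opt$ in \eqref{eq:dual_u_optimal} and shows that the set $\mathcal{C}(\*\xi\opt)$ is non-empty, with property (i) being just the definition of $\mathcal{C}(\*\xi\opt)$.

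For property (ii), fix $a \notin \mathcal{C}(\*\xi\opt)$, so $q_a(\xi_a\opt) < u\opt$. The main obstacle is the reverse direction: to conclude $\xi_a\opt=0$, I have to rule out $\xi_a\opt>0$. This is where continuity of $q_a$ comes in: using the equivalent formulation $\xi_a\opt = \min\{\xi\ge 0 \ss q_a(\xi) \le u\opt\}$ noted at the outset, if $\xi_a\opt>0$ then continuity of $q_a$ around $\xi_a\opt$ would yield a smaller $\xi<\xi_a\opt$ still satisfying $q_a(\xi)\le u\opt$, contradicting minimality. Hence $\xi_a\opt=0$, and the zero-budget problem \eqref{eq:q_optimization_l1} admits only $\*p=\bar{\*p}_a$, giving $q_a(\xi_a\opt)=q_a(0)=\bar{\*p}_a\tr \*z$; the bound $\bar{\*p}_a\tr \*z \le u\opt$ then follows from the upper bound $\max_a q_a(\xi_a\opt)\le u\opt$ established above.
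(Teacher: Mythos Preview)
Your proof is correct and follows essentially the same strategy as the paper: establish $q_a(\xi_a\opt)\le u\opt$ for all $a$, verify feasibility of $\*\xi\opt$, deduce optimality and $\mathcal{C}(\*\xi\opt)\neq\emptyset$ by contradiction with the optimality of $u\opt$, and then argue (ii) by contradiction. Two tactical differences are worth noting. For feasibility, you invoke directly that $u\opt$ is feasible in \eqref{eq:simplified_opt}, whereas the paper instead fixes an optimal $\bar{\*\xi}$ in \eqref{eq:dual_u_optimal} and shows $\xi_a\opt\le\bar\xi_a$ componentwise; your route is shorter. For (ii), you rely on the alternative characterization $\xi_a\opt=\min\{\xi\ge 0: q_a(\xi)\le u\opt\}$ together with continuity of $q_a$, while the paper works directly in \eqref{eq:qa_inverse} and constructs an explicit improving point $\*p_a\opt+\epsilon(\bar{\*p}_a-\*p_a\opt)$; both arguments are equally elementary, and yours has the mild advantage of not needing to unpack the inner optimization over $\*p$.
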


\begin{proof}
We first show that $\mathcal{C} (\*\xi\opt) \neq \emptyset$. To this end, we note that for all $a \in \actions$, we have
\[ q_a(\xi_a\opt) = q_a(q_a^{-1}(u\opt)) = \min_{\*p_1 \in\Delta^\statecount} \Bigl\{  \b{p}_1\tr \b{z} \ss \norm{\b{p}_1 - \bar{\b{p}}_a}_{1,\b{w}_a} \le \min_{p_2\in\Delta^S} \left\{ \norm{\b{p}_2 - \bar{\b{p}}_a}_{1,\b{w}_a} \ss \*p_2\tr\*z \le u\opt \right\} \Bigr\} \]
by the definitions of $q_a$ and $q_a^{-1}$ in~\eqref{eq:q_optimization_l1} and~\eqref{eq:qa_inverse}, respectively. Any optimal solution $\*p_2\opt$ to the inner minimization is also feasible in the outer minimization, and therefore $q_a(\xi_a\opt) \le (\*p_2\opt)\tr \*z \le u\opt$. Imagine now that $\mathcal{C} (\*\xi\opt) = \emptyset$. This implies, by the previous argument, that $q_a(\xi\opt_a) < u\opt$ for all $a\in\actions$. In that case, $u^\star$ would not be optimal in~\eqref{eq:simplified_opt} which is a contradiction and therefore $\mathcal{C} (\*\xi\opt) \neq \emptyset$.

We next argue that $\*\xi\opt$ is optimal in~\eqref{eq:dual_u_optimal}. To see that $\*\xi\opt$ is feasible in~\eqref{eq:dual_u_optimal}, we fix any optimal solution $\bar{\*\xi}\in\Real^A$ in~\eqref{eq:dual_u_optimal}. By construction, this solution satisfies $q_a (\bar\xi_a )\le u\opt$ for all $a\in\actions$, and the definition of $q_a$ in~\eqref{eq:q_optimization_l1} implies that there are $\*p_a\in\Delta^S$, $a \in \actions$, such that
$\*p_a\tr \*z \le u\opt$ and $\|\*p_a - \bar{\*p}_a\|_{1,\*w_a} \le \bar{\xi}_a$. The definition of $q_a^{-1}$ in~\eqref{eq:qa_inverse} implies that each $\*p_a$ is feasible in $q_a^{-1}(u\opt)$. Thus, each $\xi_a\opt$ is bounded from above by $\bar{\xi}_a$, and we observe that
\[ \sum_{a\in\actions} \xi\opt_a \;\; \le \;\; \sum_{a\in\actions} \bar{\xi}_a \;\; \le \;\; \kappa ~.\]
Since the definition of $q_a^{-1}$ also implies that $\xi\opt_a = q_a^{-1}(u\opt) \ge 0$, $\*\xi\opt$ is indeed feasible in~\eqref{eq:dual_u_optimal}. The optimality of $\*\xi\opt$ in~\eqref{eq:dual_u_optimal} then follows from the fact that $q_a(\xi\opt_a) \le u\opt$ for all $a \in \actions$.

The statement that $q_a(\xi\opt_a) = u^\star$ for all $a \in \mathcal{C} (\*\xi\opt)$ follows immediately from the definition of $\mathcal{C} (\*\xi\opt)$. To see that $\xi\opt_a = 0$ for $a\in\actions\setminus\mathcal{C}(\*\xi\opt)$, assume to the contrary that $\xi\opt_a > 0$ for some $a\in\actions\setminus\mathcal{C}(\*\xi\opt)$. Since $q_a (\xi\opt_a) < u^\star$, there is $\*p_a\opt \in \Delta^S$ optimal in~\eqref{eq:qa_inverse} satisfying $(\*p_a\opt) \tr \*z < u\opt$ and $\|\*p_a\opt - \bar{\*p}_a\|_{1,\*w_a} \le \xi^\star_a$. At the same time, since $\xi_a^\star > 0$, we have $\|\*p_a\opt - \bar{\*p}_a\|_{1,\*w_a} > 0$ as well. This implies, however, that there is $\epsilon > 0$ such that $\*p_a\opt + \epsilon \cdot(\bar{\*p}_a - \*p_a\opt)$ is feasible in~\eqref{eq:qa_inverse} and achieves a lower objective value than $\*p\opt_a$, which contradicts the optimality of $\*p\opt_a$  in~\eqref{eq:qa_inverse}. We thus conclude that $\xi\opt_a = 0$ for $a\in\actions\setminus\mathcal{C}(\*\xi\opt)$. This immediately implies that $q_a(\xi\opt_a) = \bar{\bm{p}}_a^\top \bm{z}$ for all $a\in\actions\setminus\mathcal{C}(\*\xi\opt)$ as well. The fact that $q_a(\xi\opt_a) \leq u^\star$ for all $a\in\actions\setminus\mathcal{C}(\*\xi\opt)$, finally, has already been shown in the first paragraph of this proof. 
\end{proof}

%\begin{remark}
%We assume in the remainder of the section without any loss of generality that the constraint $\sum_{a\in\actions} \xi_a \le \kappa$ in~\eqref{eq:s_rect_optimization} is active in the optimal solution.  
%\end{remark}

The construction of $\*d\opt\in \Delta^A$ relies on the slopes of $q_a$, which are piecewise constant but discontinuous at the breakpoints of $q_a$. However, the functions $q_a$ are convex by \cref{lem:q_convex}, and therefore their subdifferentials $\partial q_a(\xi_a)$ exist for all $\xi_a\ge 0$. Using these subdifferentials, we construct optimal action probabilities $\*d\opt\in \Delta^A$ from $\*\xi\opt$ as follows.
\begin{enumerate}[nosep]
    \item[\emph{(i)}] If $0 \in \partial q_{\bar{a}}(\xi_{\bar{a}}\opt)$ for some $\bar{a}\in\mathcal{C}(\*\xi\opt)$, define $\*d\opt$ as
    \begin{subequations}
    \begin{equation} \label{eq:ea_case1}
        d_a\opt = \begin{cases} 1 &\operatorname{if}~ a = \bar{a}  \\
            0 & \operatorname{otherwise}
        \end{cases} 
        \qquad \forall a \in \actions~.
    \end{equation}
    \item[\emph{(ii)}] If $0\notin \partial q_{\bar{a}}(\xi_{\bar{a}}\opt)$ for all $a\in\mathcal{C}(\*\xi\opt)$, define $\*d\opt$ as
    \begin{equation} \label{eq:ea_case2}
        d_a\opt = \frac{e_a}{\sum_{a'\in\actions} e_{a'} } \quad\text{with}\quad 
        e_a = \begin{cases}
            -\frac{1}{f_a} &\operatorname{if}~  a\in\mathcal{C}(\*\xi\opt)  \\
            0  &\operatorname{otherwise}
        \end{cases}
        \qquad \forall a \in \actions~,
    \end{equation}
    \end{subequations}
    where $f_a$ can be any element from $\partial q_a(\xi_a\opt)$, $a \in \actions$.
\end{enumerate}
The choice of $\*d\opt$ may not be unique as there may be multiple $\bar{a}\in\mathcal{C}(\*\xi\opt)$ that satisfy the first condition, and the choice of $f_a \in \partial q_a(\xi_a\opt)$ in the second condition may not be unique either.

\begin{thm} \label{thm:optimal_primal}
    Any vector $\*d\opt$ satisfying~\eqref{eq:ea_case1} or~\eqref{eq:ea_case2} is optimal in problem~\eqref{eq:s_rect_optimization}. Moreover, for $\*\xi\opt$ defined in~\eqref{eq:xi_opt_construction}, $(\*d\opt, \*\xi\opt)$ is a saddle point in~\eqref{eq:s_rect_optimization}.
\end{thm}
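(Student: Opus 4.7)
My plan is to establish that $(\*d\opt, \*\xi\opt)$ is a saddle point of the bilinear-looking objective $\sum_a d_a q_a(\xi_a)$ over $\Delta^A \times \{\*\xi \in \RealPlus^A \ss \sum_a \xi_a \le \kappa\}$; the optimality of $\*d\opt$ in \eqref{eq:s_rect_optimization} then follows by the standard minimax argument, since the saddle-point values of the inner and outer problems must coincide. Concretely, I would verify the two saddle-point inequalities
\[
\sum_a d_a \, q_a(\xi_a\opt) \;\le\; \sum_a d_a\opt \, q_a(\xi_a\opt) \;\le\; \sum_a d_a\opt \, q_a(\xi_a)
\]
for every $\*d \in \Delta^A$ and every feasible $\*\xi$.

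The left-hand inequality is the easy direction. \cref{lem:xistar_optimal} gives $q_a(\xi_a\opt) = u\opt$ for $a\in\mathcal{C}(\*\xi\opt)$ and $q_a(\xi_a\opt) \le u\opt$ otherwise. Both constructions \eqref{eq:ea_case1} and \eqref{eq:ea_case2} place the support of $\*d\opt$ inside $\mathcal{C}(\*\xi\opt)$, so $\sum_a d_a\opt q_a(\xi_a\opt) = u\opt \ge \sum_a d_a q_a(\xi_a\opt)$ for any $\*d \in \Delta^A$.

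The right-hand inequality is the main work and relies on the convexity of each $q_a$ (\cref{lem:q_convex}), which makes the Karush--Kuhn--Tucker conditions sufficient for optimality in the inner minimization. I would exhibit multipliers $\lambda \ge 0$ and $\mu_a \ge 0$ for the budget and nonnegativity constraints so that the stationarity condition $\mu_a - \lambda \in d_a\opt \,\partial q_a(\xi_a\opt)$ and the complementary slackness conditions hold at $\*\xi = \*\xi\opt$. In Case (i) I would take $\lambda = 0$ and $\mu_a = 0$: stationarity collapses to the hypothesis $0 \in \partial q_{\bar a}(\xi_{\bar a}\opt)$ at $a=\bar a$ and to $0 \in \{0\}$ elsewhere (since $d_a\opt = 0$ there), and both complementary slackness conditions are immediate. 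In Case (ii) I would take $\lambda = 1/\sum_{a\in\mathcal{C}(\*\xi\opt)}(-1/f_a) > 0$, where all $f_a$ are strictly negative because $q_a$ is non-increasing and $0\notin\partial q_a(\xi_a\opt)$. For $a\in\mathcal{C}(\*\xi\opt)$ with $\xi_a\opt > 0$, the identity $-\lambda = d_a\opt f_a$ is precisely the definition of $\*d\opt$ in \eqref{eq:ea_case2}; for $a\notin\mathcal{C}(\*\xi\opt)$ one has $d_a\opt = 0$ and $\xi_a\opt = 0$ by \cref{lem:xistar_optimal}(ii), so $\mu_a = \lambda$ together with $\mu_a \xi_a\opt = 0$ trivially holds.

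The main technical obstacle is verifying in Case (ii) that the budget constraint is active, i.e.\ $\sum_a \xi_a\opt = \kappa$, which is required for complementary slackness on $\lambda$. I would argue by contradiction: if $\sum_a \xi_a\opt < \kappa$, then, since each $f_a$ with $a \in \mathcal{C}(\*\xi\opt)$ is strictly negative, distributing a small amount of spare budget across $\mathcal{C}(\*\xi\opt)$ would strictly decrease $q_a(\xi_a\opt)$ for every $a \in \mathcal{C}(\*\xi\opt)$ while leaving $q_a(\xi_a\opt) < u\opt$ intact for $a \notin \mathcal{C}(\*\xi\opt)$. For a sufficiently small perturbation the new $\max_a q_a$ lies strictly below $u\opt$, contradicting the optimality of $\*\xi\opt$ in \eqref{eq:dual_u_optimal} established in \cref{lem:xistar_optimal}. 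A minor additional subtlety is the possibility of actions in $\mathcal{C}(\*\xi\opt)$ with $\xi_a\opt = 0$; in that case the stationarity condition must be read as $-\lambda \in d_a\opt \partial q_a(0)$, which is verified by allowing $f_a$ to be any element of the (left-endpoint) subdifferential consistent with the chosen $\lambda$. Once the saddle-point inequalities are in place, the chain $u\opt = \sum_a d_a\opt q_a(\xi_a\opt) = \max_{\*d}\min_{\*\xi}\sum_a d_a q_a(\xi_a)$ follows from the standard minimax argument, certifying that $\*d\opt$ solves \eqref{eq:s_rect_optimization}.
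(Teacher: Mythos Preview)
Your proposal is correct and follows essentially the same route as the paper: verify that $(\*d\opt,\*\xi\opt)$ attains the value $u\opt$ using \cref{lem:xistar_optimal}, and then certify that $\*\xi\opt$ minimizes $\sum_a d_a\opt q_a(\xi_a)$ via the KKT conditions with exactly the multipliers you propose ($\lambda=0,\;\mu=\zero$ in Case~(i); $\lambda=1/\sum_{a\in\mathcal{C}(\*\xi\opt)}(-1/f_a)$ and $\mu_a=\lambda\cdot\mathbbm{1}\{a\notin\mathcal{C}(\*\xi\opt)\}$ in Case~(ii)), together with the same contradiction argument for $\sum_a\xi_a\opt=\kappa$. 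The only cosmetic difference is that you phrase the argument as establishing both saddle-point inequalities explicitly, whereas the paper states only the minimization direction and reads the other off from the identity $\sum_a d_a\opt q_a(\xi_a\opt)=u\opt$; your ``minor subtlety'' about $a\in\mathcal{C}(\*\xi\opt)$ with $\xi_a\opt=0$ is already covered once you note that $-\lambda=d_a\opt f_a$ holds \emph{by construction} for the chosen $f_a\in\partial q_a(\xi_a\opt)$, so no after-the-fact adjustment of $f_a$ is needed.
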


\begin{proof}
    One readily verifies that $\bm{d}^\star$ satisfying~\eqref{eq:ea_case1} is contained in $\Delta^A$. To see that $\bm{d}^\star \in \Delta^A$ for $\bm{d}^\star$ satisfying~\eqref{eq:ea_case2}, we note that $\mathcal{C}(\*\xi\opt)$ is non-empty due to \cref{lem:xistar_optimal} and that $f_a < 0$ and thus $e_a > 0$ since $q_a$ is non-increasing. To see that $\*d\opt$ satisfying~\eqref{eq:ea_case1} or~\eqref{eq:ea_case2} is optimal in~\eqref{eq:s_rect_optimization}, we show that it achieves the optimal objective value $u\opt$, that is, that
    \begin{equation}\label{eq:minimization_xi_distar}
        \min_{\b{\xi}\in\RealPlus^\actioncount} \left\{ \sum_{a\in\actions}  d_a\opt \cdot q_{a}(\xi_a ) \ss \sum_{a\in\actions} \xi_a \le \kappa \right\} \ge u\opt~.
    \end{equation}
    Observe that $u\opt$ is indeed achieved for $\*\xi = \*\xi\opt$ since
    \[
        \sum_{a\in\actions}  d_a\opt \cdot q_{a}(\xi_a\opt)
        \;\; = \;\;
        \sum_{a\in\mathcal{C}(\xi\opt)} d_a\opt \cdot q_{a}(\xi_a\opt)
        \;\; = \;\;
        \sum_{a\in\mathcal{C}(\xi\opt)} d_a\opt \cdot u\opt = u\opt~.
    \]
    Here, the first equality holds since $d_a\opt = 0$ for $a\notin\mathcal{C}(\*\xi\opt)$, the second equality follows from the definition of $\mathcal{C}(\*\xi\opt)$, and the third equality follows from $\*d\opt\in\Delta^A$.
    
    To establish the inequality~\eqref{eq:minimization_xi_distar}, we show that $\*\xi\opt$ is optimal in~\eqref{eq:minimization_xi_distar}. This also proves that $(\*d\opt, \*\xi\opt)$ is a saddle point of problem~\eqref{eq:s_rect_optimization}. We denote by $\partial_{\*\xi} (f) [\bm{\xi}^\star]$ the subdifferential of a convex function $f$ with respect to $\*\xi$, evaluated at $\bm{\xi} = \bm{\xi}^\star$. The KKT conditions for non-differentiable convex programs (see, for example, Theorem 28.3 of \citealt{Rockafellar1970}), which are sufficient for the optimality of $\bm{\xi}^\star$ in the minimization on the left-hand side of~\eqref{eq:minimization_xi_distar}, require the existence of a scalar $\lambda\opt\ge 0$ and a vector $\*\alpha\opt\in\RealPlus^A$ such that
    \[
        \begin{array}{c@{\qquad}l}
            \displaystyle \zero \in \partial_{\*\xi} \left(\sum_{a\in\actions} d_a\opt \cdot q_a(\xi_a) - \lambda\opt \left(\kappa - \sum_{a\in\actions} \xi_a\right)  - \sum_{a\in\actions} \alpha_a\opt \cdot \xi_a \right)[\*\xi\opt] \qquad & \text{[Stationarity]} \\
            \displaystyle \lambda\opt \cdot \left(\kappa - \sum_{a\in\actions} \xi_a\opt \right) = 0, \quad \alpha_a\opt \cdot \xi\opt_a = 0 \;\; \forall a \in \actions \qquad & \text{[Compl.~Slackness]}
        \end{array}
    \]
    The stationarity condition simplifies using the chain rule to
    \begin{equation}\label{eq:xi_opt_sim}
    0\in d_a\opt\cdot \partial q_a(\xi_a\opt) + \lambda\opt - \alpha_a\opt \qquad \forall a\in\actions ~.
    \end{equation}
    
    If $\bm{d}^\star$ satisfies~\eqref{eq:ea_case1}, then both~\eqref{eq:xi_opt_sim} and complementary slackness are satisfied for $\lambda^\star = 0$ and $\bm{\alpha}^\star = \bm{0}$. On the other hand, if $\bm{d}^\star$ satisfies~\eqref{eq:ea_case2}, we set
    \[
        \lambda\opt = \frac{1}{\sum_{a\in\mathcal{C}(\*\xi\opt)} e_a }, \qquad 
        \alpha_a\opt = 0 \quad \forall a\in\mathcal{C}(\*\xi\opt), \qquad
        \alpha_a\opt = \lambda\opt \quad \forall a\in\actions\setminus\mathcal{C}(\*\xi\opt)~,
    \]    
    where $e_a$ is defined in~\eqref{eq:ea_case2}. This solution satisfies $\lambda\opt\ge 0$ and $\*\alpha \ge \zero$ because $f_a \le 0$ and therefore $e_a \ge 0$. This solution satisfies~\eqref{eq:xi_opt_sim}, and \cref{lem:xistar_optimal} implies that the second complementary slackness condition is satisfied as well. To see that the first complementary slackness condition is satisfied, we argue that $\sum_{a\in\actions} \xi_a\opt = \kappa$ under the conditions of~\eqref{eq:ea_case2}. Assume to the contrary that $\sum_{a\in\actions} \xi_a\opt < \kappa$. Since $0\notin \partial q_a(\*\xi_a\opt)$ and the sets $\partial q_a(\xi_a\opt)$ are closed for all $a\in\mathcal{C}(\*\xi\opt)$ (see Theorem~23.4 of~\citealt{Rockafellar1970}), we have
    \[
        \exists \bar{\beta}_a > 0
        \quad \text{such that} \quad
        q_a(\xi_a\opt + \beta_a) < q_a(\xi_a) \;\; \forall \beta_a \in (0, \bar{\beta}_a)
    \]
    for all $a\in\mathcal{C}(\*\xi\opt)$. We can thus marginally increase each component $\xi_a^\star$, $a\in\mathcal{C}(\*\xi\opt)$, to obtain a new solution to problem~\eqref{eq:dual_u_optimal} that is feasible and that achieves a strictly lower objective value than $u^\star$. This, however, contradicts the optimality of $u^\star$. We thus conclude that $\sum_{a\in\actions} \xi_a\opt = \kappa$, that is, the first complementary slackness condition is satisfied as well.
\end{proof}

The values $\bm{\xi}^\star$ and $\bm{d}^\star$ can be computed in time $\bigO (A \log S)$ since they rely on the quantities $q_a (\xi_a^\star)$ and $q_a^{-1} (u^\star)$ that have been computed previously by \cref{alg:homotopy} and \cref{alg:bisection}, respectively. The worst-case transition probabilities can also be retrieved from the minimizers of $q_a$ defined in~\eqref{eq:q_optimization_l1} since, as \cref{thm:optimal_primal} implies, $\*\xi\opt$ is optimal in the minimization problem in~\eqref{eq:s_rect_optimization}.

\subsection{Bisection Scheme for Robust Bellman Policy Update} \label{sec:value_function_update}

Recall that the robust policy evaluation MDP $(\mathcal{S}, \bar{\mathcal{A}}, \bm{p}_0, \bar{\bm{p}}, \bar{\bm{r}}, \gamma)$ defined in \cref{sec:ppi} has continuous action sets $\bar\actions(s) = \probs_s$, $s\in\states$, and the transition function $\bar{\*p}$ and the rewards $\bar{\bm{r}}$ defined as
\[ \bar{\*p}_{s, \bm{\alpha}} = \sum_{a\in\actions} \pi_{s,a} \cdot \*\alpha_a \quad \text{and} \quad \bar{r}_{s, \bm{\alpha}} = - \sum_{a\in\actions} \pi_{s,a}\cdot \b{\alpha}_a\tr \b{r}_{s,a}~ . \]
To solve this MDP via value iteration or (modified) policy iteration, we must compute the Bellman optimality operator $\Bell$ defined as
\begin{align*}
    (\Bell \bm{v})_s &= \max_{\bm{\alpha} \in \mathcal{P}_s} \left\{  \bar{r}_{s, \bm{\alpha}} + \gamma \cdot \bar{\*p}_{s, \bm{\alpha}}\tr \bm{v} \right\} \\
    &= \max_{\bm{\alpha} \in (\Delta^S)^A} \left\{  \sum_{a\in\actions} \pi_{s,a} \cdot \*\alpha_a \tr (\gamma \bm{v} - \b{r}_{s,a}) \ss \sum_{a\in\actions} \lVert \b{\alpha}_a - \bar{\b{p}}_{s,a} \rVert_{1,\b{w}_{s,a}} \le \kappa_{s} \right\} \\
    &= - \min_{\bm{\alpha} \in (\Delta^S)^A} \left\{ \sum_{a\in\actions} \pi_{s,a} \cdot \*\alpha_a \tr (\b{r}_{s,a} - \gamma \bm{v}) \ss \sum_{a\in\actions} \lVert \b{\alpha}_a - \bar{\b{p}}_{s,a} \rVert_{1,\b{w}_{s,a}} \le \kappa_{s} \right\}~.
\end{align*}
The continuous action space in this MDP makes it impossible to compute $\Bell \*v$ by simply enumerating the actions. The non-robust Bellman operator could be solved as a linear program, but this suffers from the same computational limitations its application to the robust Bellman operator described earlier.

Using similar ideas as in \cref{sec:bisection_scheme}, we can re-express the minimization problem as
\begin{equation} \label{eq:srect_evaluation}
    \min_{\b{\xi}\in\RealPlus^\actioncount} \left\{ \sum_{a\in\actions}  \pi_{s,a} \cdot q_{s,a}(\xi_a ) \ss \sum_{a\in\actions} \xi_a \le \kappa_s \right\},
\end{equation}
where we use $\bm{z} = \b{r}_{s,a} - \gamma \bm{v}$ in our definition of the functions $q_{s,a}$.

At the first glance, problem~\eqref{eq:srect_evaluation} seems to be a special case of problem~\eqref{eq:s_rect_optimization} from \cref{sec:bisection_scheme}, and one may posit that it can be solved using \cref{alg:bisection}. Unfortunately, this is not the case: In problem~\eqref{eq:srect_evaluation}, the policy $\bm{\pi}$ is fixed and may be \emph{randomized}, whereas \cref{alg:bisection} takes advantage of the fact that $\bm{d}$ can be assumed to be deterministic once the maximization and minimization are swapped in~\eqref{eq:s_rect_optimization}.

Problem~\eqref{eq:srect_evaluation} can still be solved efficiently by taking advantage of the fact that it only contains a single resource constraint on $\b{\xi}$ and that the functions $q_{s,a}$ are piecewise affine and convex. To see this, note that the Lagrangian of \eqref{eq:srect_evaluation} is
\[
    \max_{\lambda \in\RealPlus} \min_{\b{\xi}\in\RealPlus^\actioncount} \left\{ \sum_{a\in\actions} \left( \pi_{s,a} \cdot q_{s,a}(\xi_a) \right) + \lambda \cdot \one\tr  \b{\xi}  - \lambda\,  \kappa_s \right\},
\]
where the use of strong duality is justified since~\eqref{eq:srect_evaluation} can be reformulated as a linear program that is feasible by construction. The minimization can now be decomposed by actions:
\[
    \max_{\lambda \in\RealPlus} \underbrace{\left\{ \sum_{a\in\actions} \min_{\xi_a \in \Real_+} \left\{ \pi_{s,a} \cdot q_{s,a}(\xi_a) + \lambda \xi_a \right\} - \lambda\,  \kappa_s \right\}}_{= u(\lambda)}
\]
The inner minimization problems over $\xi_a$, $a \in \mathcal{A}$, are convex, and they can be solved exactly by bisection since the involved functions $q_{s,a}$ are piecewise affine. Likewise, the maximization over $\lambda$ can be solved exactly by bisection since $u$ is concave and piecewise affine. Note that the optimal value of $\lambda$ is bounded from below by $0$ and from above by the maximum derivative of any $q_{s,a}$, $a \in \mathcal{A}$.

\section{Numerical Evaluation} \label{sec:numerical} 

We now compare the runtimes of PPI (\cref{alg:rpi}) combined with the homotopy method (\cref{alg:homotopy}) and the bisection method (\cref{alg:bisection}) with the runtime of a naive approach that combines the robust value iteration with a computation of the robust Bellman optimality operator $\Bell$ using a general LP solver. We use Gurobi 9.0, a state-of-the-art commercial optimization package. All algorithms were implemented in C++, parallelized using the OpenMP library, and used the Eigen library to perform linear algebra operations. The algorithms were compiled with GCC 9.3 and executed on an AMD Ryzen 9 3900X CPU with 64GB RAM. The source code of the implementation is available at \url{http://github.com/marekpetrik/craam2}.

\subsection{Experimental Setup} \label{sec:domains} 

Our experiments involve two problems from different domains with a fundamentally different structure. The two domains are the \emph{inventory management} problem~\cite{Zipkin200,Porteus2002} and the \emph{cart-pole} problem~\cite{Lagoudakis2003}. The inventory management problem has many actions and dense transition probabilities. The cart-pole problem, on the other hand, has only two actions and sparse transition probabilities. More actions and dense transition probabilities make for much more challenging computation of the Bellman update compared to policy evaluation.

Next, we give a high-level description of both problems as well as our parameter choice. Because the two domains serve simply as benchmark problems and their full description would be lengthy, we only outline their motivation, construction, and properties. To facilitate the reproducibility of the domains, the full source code, which was used to generate them, is available at \url{http://github.com/marekpetrik/PPI_paper}. The repository also contains CSV files with the precise specification of the RMDPs being solved.

In our inventory management problem, a retailer orders, stores and sells a single product over an infinite time horizon. Any orders submitted in time period $t$ will be fulfilled at the beginning of time period $t + 1$, and orders are subject to deterministic fixed and variable costs. Any items held in inventory incur deterministic per-period holding costs, and the inventory capacity is limited. The per-unit sales price is deterministic, but the per-period demand is stochastic. All accrued demand in time period $t$ is satisfied up to the available inventory. Any remaining unsatisfied demand is backlogged at a per-unit backlogging penalty up to a given limit. The states and actions of our MDP represent the inventory levels and the order quantities in any given time period, respectively. The stochastic demands drive the stochastic state transitions. The rewards are the sales revenue minus the purchase costs in each period.

In our experiments, we set the fixed and variable ordering costs to $5.99$ and $1.0$, respectively. The inventory holding and backlogging costs are $0.1$ and $0.15$, respectively. We vary the inventory capacity $I$ to study the impact of the problem's size on the runtimes, while the backlog limit is $I / 3$. We also impose an upper limit of $I / 2$ on each order. The corresponding MDP thus has $I + I/3 = 4/3 \cdot I$ states and $I/2$ actions. Note that due to the inventory capacity limits, not all actions are available at every state. The unit sales price is $1.6$. The demand in each period follows a Normal distribution with a mean of $I/2$ and a standard deviation of $I/5$ and is rounded to the closest integer. We use a discount factor of $0.995$.

In our cart-pole problem, a pole has to be balanced upright on top of a cart that moves along a single dimension. At any point in time, the state of the system is described by four continuous quantities: the cart's position and velocity, as well as the pole's angle and angular velocity. To balance the pole, one can apply a force to the cart from the left or from the right. The resulting MDP thus accommodates a 4-dimensional continuous state space and two actions. Several different implementations of this problem can be found in the literature; in the following, we employ the deterministic implementation from the OpenAI Gym. Again, we use a discount factor of $0.995$. 

Since the state space of our cart-pole problem is continuous, we discretize it to be amenable to our solution methods. The discretization follows a standard procedure in which random samples from the domain are subsampled to represent the discretized state space. The transitions are then estimated from samples that are closest to each state. In other words, the probability of transitioning from a discretized state $s$ to another discretized state $s'$ is proportional to the number of sampled transitions that originate near $s$ and end up near $s'$. The discretized transition probabilities are no longer deterministic, even though the original problem transitions are.

The ambiguity sets are modified slightly in this section to ensure a more realistic evaluation.
Assuming that the robust transition can be positive to any state of the RMDP can lead to overly conservative policies. To obtain less conservative policies, we restrict our ambiguity sets $\mathcal{P}_{s,a}$ and $\mathcal{P}_s$ from \cref{sec:robust_bellman_updates} to probability distributions that are \emph{absolutely continuous} with respect to the nominal distributions $\bar{\b{p}}_{s,a}$. Our sa-rectangular ambiguity sets $\mathcal{P}_{s,a}$ thus become
\[ 
\mathcal{P}_{s,a} = \left\{ \b{p} \in \Delta^\statecount \ss \| \b{p} - \bar{\b{p}}_{s,a} \|_{1,\b{w}_{s,a}} \le \kappa_{s,a}, \;\; p_{s'} \leq \left \lceil \bar{p}_{s,a,s'} \right \rceil \;\; \forall s' \in \mathcal{S} \right\}~,
\]
and we use a similar construction for our s-rectangular ambiguity sets $\mathcal{P}_s$. We set the ambiguity budget to $\kappa_{s,a} = 0.2$ and $\kappa_{s} = 1.0$ in the sa-rectangular and s-rectangular version of our inventory management problem, respectively, and we set $\kappa_{s,a} = \kappa_s = 0.1$ in our cart-pole problem. Anecdotally, the impact of the ambiguity budget on the runtimes is negligible. We report separate results for uniform weights $\*w_{s,a} = \one$ and non-uniform weights $\*w_{s,a}$ that are derived from the value function $\*v$. In the latter case, we follow the suggestions of \citeasnoun{Russel2019} and choose weights $(\*w_{s,a})_{s'}$ that are proportional to $\lvert v_{s'} - \one\tr \*v / S \rvert$. All weights $\*w_{s,a}$ are normalized so that their values are contained in $[0, 1]$. Note that the simultaneous scaling of $\*w_{s,a}$ and $\kappa_{s,a}$ does not affect the solution.

Recall that the policy evaluation step in PPI can be accomplished by any MDP solution method. In our inventory management problem, whose instances have up to $1,000$ states, we use policy iteration and solve the arising systems of linear equations via the LU decomposition of the Eigen library~\cite{Puterman2005}. This approach does not scale well to MDPs with $S \gg 1,000$ states as the policy iteration manipulates matrices of dimension $S\times S$. Therefore, in our cart-pole problem, whose instances have $1,000$ or more states, we use modified policy iteration~\cite{Puterman2005} instead. We compare the performance of our algorithms to the robust value iteration as well as the robust modified policy iteration~(RMPI) of \citeasnoun{Kaufman2013}. Recall that in contrast to PPI, RMPI evaluates robust policies through a fixed number of value iteration steps. Since the impact of the number of value iteration steps on the overall performance of RMPI is not well understood, we fix this number to $1,000$ throughout our experiments. Finally, we set $\epsilon_{k+1} = \min\{ \gamma^2 \epsilon_k, 0.5/(1-\gamma) \cdot \norm{\Bell_{\bm{\pi}_{k}} \b{v}_k - \b{v}_k}_\infty\}$ in \cref{alg:rpi}, which satisfies the convergence condition in \cref{thm:ppi_convergence}. 

\subsection{Results and Discussion} \label{sec:bellman_runtime} 

\begin{table}
 \centering
 \begin{tabular}{|llr|rr|rr|} 
 \toprule
    & & & \multicolumn{2}{c|}{SA-rectangular} & \multicolumn{2}{c|}{S-rectangular} \\
  \midrule
  \multicolumn{1}{|c}{Problem} & \multicolumn{1}{c}{Ambiguity} & \multicolumn{1}{c|}{States} & \multicolumn{1}{c}{LP Solver} & \multicolumn{1}{c|}{\cref{alg:homotopy}} & \multicolumn{1}{c}{LP Solver} & \multicolumn{1}{c|}{\cref{alg:bisection}}\\
  \midrule
  Inventory    & Uniform  &   100 & 13.96 & 0.02 & 24.67 & 0.06 \\  
  Inventory    & Weighted &   100 & 13.85 & 0.75 & 21.36 & 0.86 \\
  Inventory    & Uniform  &   500 & 583.20 & 0.36 & 1,715.94 & 19.65 \\
  Inventory    & Weighted &   500 & 440.35 & 20.69 & 655.00 & 36.24 \\
  Inventory    & Uniform  & 1,000 & $>$ 10,000.00 & 20.00 & $>$ 10,000.00 & 51.97 \\
  Inventory    & Weighted & 1,000 & 4,071.47 & 109.27 & 3,752.21 & 163.32 \\
  \midrule
  Cart-pole    & Uniform  &  1,000 & 9.50 & 0.18 & 19.85 & 1.94 \\  
  Cart-pole    & Weighted &  1,000 & 12.70 & 1.93 & 32.80 & 1.90 \\
  Cart-pole    & Uniform  &  2,000 & 12.81 & 1.90 & 13.33 & 1.88 \\
  Cart-pole    & Weighted &  2,000 & 12.04 & 2.03 & 13.08 & 1.95 \\
  Cart-pole    & Uniform  &  4,000 & 23.39 & 1.91 & 23.29 & 1.76 \\
  Cart-pole    & Weighted &  4,000 & 19.96 & 2.05 & 21.16 & 2.14\\
  \bottomrule
 \end{tabular} 
 \caption{Runtime (in seconds) required by different algorithms to compute 200 steps of the robust Bellman optimality operator.} \label{tab:results_bellman}
\end{table}

\Cref{tab:results_bellman} reports the runtimes required by our homotopy method ({\cref{alg:homotopy}), our bisection method (\cref{alg:bisection}) and Gurobi (LP Solver) to compute 200 steps of the robust Bellman optimality operator $\Bell$ across all states $s \in \mathcal{S}$. We fixed the number of Bellman evaluations in this experiment to clearly separate the speedups achieved by a quicker evaluation of the Bellman operator itself, studied in this experiment, from the speedups obtained by using PPI in place of value iteration, studied in the next experiment. The computations are parallelized over all available threads via OpenMP using Jacobi-style value iteration~\cite{Puterman2005}. By construction, all algorithms identify the same optimal solutions in each application of the Bellman operator. The computations were terminated after $10,000$ seconds.

There are several important observations we can make from the results in \cref{tab:results_bellman}. First of all, that our algorithms outperform Gurobi by an order of magnitude for weighted ambiguity sets and by two orders of magnitude for uniform (unweighted) ambiguity sets, independent of the type of rectangularity. This impressive performance is because the inventory management problem has many actions, which makes computing the Bellman operator particularly challenging. The computation time also reflects that homotopy and bisection methods have quasi-linear time complexities when used with uniform $L_1$ norms. It is remarkable that even with the simple cart-pole problem our algorithms are about 10 to 20 times faster than a state-of-the-art LP solver. Notably, even moderately-sized RMDPs may be practically intractable to general LP solvers.

S-rectangular instances of such problems are particularly challenging for LP solvers as they have to solve a single, monolithic LP across all actions. Perhaps surprisingly, our algorithms also outperform Gurobi in the simple cart-pole problem by an order of magnitude. In fact, the table reveals that even moderately-sized RMDPs may be practically intractable when solved with generic LP solvers.

\begin{table}
 \centering
 \begin{tabular}{|llr|rrr|rr|} 
 \toprule
    & & & \multicolumn{3}{c|}{SA-rectangular} & \multicolumn{2}{c|}{S-rectangular} \\
  \midrule
  \multicolumn{1}{|c}{Problem} & \multicolumn{1}{c}{Ambiguity} & \multicolumn{1}{c|}{States} & \multicolumn{1}{c}{VI} & \multicolumn{1}{c}{RMPI} & \multicolumn{1}{c|}{PPI} & \multicolumn{1}{c}{VI} & \multicolumn{1}{c|}{PPI}\\
  \midrule
  Inventory   & Uniform  & 100  & 0.12 & 0.03 & 0.01 & 3.52 & 0.15 \\ 
  Inventory   & Weighted & 100  & 10.28 & 0.94 & 0.14 & 15.02 & 1.02 \\  
  Inventory   & Uniform  & 500  & 1.39 & 0.06 & 0.14 & 24.69 & 2.71 \\ 
  Inventory   & Weighted & 500  & 140.53 & 5.69 & 2.11 & 276.63 & 16.76 \\ 
  Inventory   & Uniform  & 1,000  & 8.65 & 0.23 & 0.59 & 217.90 & 13.98 \\ 
  Inventory   & Weighted & 1,000  & 393.90 & 14.36 & 6.90 & 519.21 & 163.18 \\
  \midrule  
  Cart-pole   & Uniform  & 1,000  & 0.03 & 0.06 & 0.03 & 0.80 & 0.15 \\ 
  Cart-pole   & Weighted & 1,000  & 0.25 & 0.17 & 0.04 & 0.98 & 0.28 \\  
  Cart-pole   & Uniform  & 10,000  & 0.32 & 0.26 & 0.13 & 8.40 & 1.06 \\ 
  Cart-pole   & Weighted & 10,000  & 1.72 & 1.13 & 0.21 & 13.43 & 3.52 \\  
  Cart-pole   & Uniform  & 20,000  & 0.44 & 0.54 & 0.29 & 16.24 & 2.40 \\ 
  Cart-pole   & Weighted & 20,000  & 6.37 & 3.22 & 0.62 & 28.50 & 9.30 \\  
  \bottomrule
 \end{tabular}
 \caption{Runtime (in seconds) required by different algorithms to compute an approximately optimal robust value function.} \label{tab:results_ppi}
\end{table}

\Cref{tab:results_ppi} reports the runtimes required by the parallelized versions of the robust value iteration (VI), the robust modified policy iteration (RMPI) and our partial policy iteration (PPI) to solve our inventory management and cart-pole problems to approximate optimality. To this end, we choose a precision of $\delta = 40$ (that is, $\norm{\Bell_{\bm{\pi}_{k}} \b{v}_k - \b{v}_k}_\infty \le 0.1$), as defined in \cref{alg:rpi}, for our inventory management problem, as well as a smaller precision of $\delta = 4$ (that is, $\norm{\Bell_{\bm{\pi}_{k}} \b{v}_k - \b{v}_k}_\infty \le 0.01$) for our cart-pole problem, to account for the smaller rewards in this problem. All algorithms use the homotopy (\cref{alg:homotopy}) and the bisection method (\cref{alg:bisection}) to compute the robust Bellman optimality operator. Note that RMPI is only applicable to sa-rectangular ambiguity sets. The computations were terminated after $10,000$ seconds.

There are also several important observations we can make from the results in \cref{tab:results_ppi}. As one would expect, PPI in RMDPs behaves similarly to policy iteration in MDPs. It outperforms value iteration in essentially all benchmarks, being almost up to 100 times faster, but the margin varies significantly. The improvement margin depends on the relative complexity of policy improvements and evaluations. In the sa-rectangular cart-pole problem, for example, the policy improvement step is relatively cheap, and thus the benefit of employing a policy evaluation is small. The situation is reversed in the s-rectangular inventory management problem, in which the policy improvement step is very time-consuming. PPI outperforms the robust value iteration most significantly in the sa-rectangular inventory management problem since the policy evaluation step is much cheaper than the policy improvement step due to the large number of available actions. RMPI's performance, on the other hand, is more varied: while it sometimes outperforms the other methods, it is usually dominated by at least one of the competing algorithms. We attribute this fact to the inefficient value iteration that is employed in the robust policy evaluation step of RMPI.  It is important to emphasize that PPI has the same theoretical convergence rate as the robust value iteration, and thus its performance relative to the robust value iteration and RMPI will depend on the specific problem instance and as well as the employed parameter settings.

In conclusion, our empirical results show that our proposed combination of PPI and the homotopy or bisection method achieves a speedup of up to four orders of magnitude for both sa-rectangular and s-rectangular ambiguity sets when compared with the state-of-the-art solution approach that combines a robust value iteration with a computation of the robust Bellman operator via a commercial LP solver. Since our methods scale more favorably with the size of the problem, their advantage is likely to only increase with larger problems that what we considered here.

\section{Conclusion}\label{sec:conclusion}

We proposed three new algorithms to solve robust MDPs over $L_1$-ball uncertainty sets. Our homotopy algorithm computes the robust Bellman operator over sa-rectangular $L_1$-ball uncertainty sets in quasi-linear time and is thus almost as efficient as computing the nominal, non-robust Bellman operator. Our bisection scheme utilizes the homotopy algorithm to compute the robust Bellman operator over s-rectangular $L_1$-ball uncertainty sets, again in quasi-linear time. Both algorithms can be combined with PPI, which generalizes the highly efficient modified policy iteration scheme to robust MDPs. Our numerical results show significant speedups of up to four orders of magnitude over a leading LP solver for both sa-rectangular and s-rectangular ambiguity sets.

Our research opens up several promising avenues for future research. First, our homotopy method sorts the bases of problem~\eqref{eq:q_linear_program} in quasi-linear time. This step could also be implemented in linear time using a variant of the \emph{quickselect} algorithm, which has led to improvements in a similar context~\cite{Condat2016}. Second, we believe that the techniques presented here can be adapted to other uncertainty sets, such as $L_\infty$- and $L_2$-balls around the nominal transition probabilities or uncertainty sets based on $\phi$-divergences. Both the efficient implementation of the resulting algorithms as well as the empirical comparison of different uncertainty sets on practical problem instances would be of interest. Finally, it is important to study how our methods generalize to robust value function approximation methods~\cite{Tamar2014a}.

\subsection*{Acknowledgments}

%We thank the anonymous reviewers for comments that helped to improve this paper. 

We thank Bruno Scherrer for pointing out the connections between policy iteration and algorithms for solving zero-sum games and Stephen Becker for insightful comments. This work was supported by the National Science Foundation under Grants No.~IIS-1717368 and IIS-1815275, by the Engineering and Physical Sciences Research Council under Grant No.~EP/R045518/1, and by the Start-Up Grant scheme of the City University of Hong Kong. Any opinions, findings, and conclusions or recommendations are those of the authors and do not necessarily reflect the views of the funding bodies.

\bibliographystyle{abbrvnatplain}
\bibliography{library}

\appendix

\section{Properties of Robust Bellman Operator} \label{sec:bell_props}

We prove several fundamental properties of the robust Bellman policy update $\Bell_{\bm{\pi}}$ and the robust Bellman optimality operator $\Bell$ over s-rectangular and sa-rectangular ambiguity sets.

\begin{prop} \label{lem:contraction}
    For both s-rectangular and sa-rectangular ambiguity sets, the robust Bellman policy update $\Bell_{\bm{\pi}}$ and the robust Bellman optimality operator $\Bell$ are $\gamma$-contractions under the $L_\infty$-norm, that is
    \[ 
    \norm{\Bell_{\bm{\pi}} \b{x} - \Bell_{\bm{\pi}} \b{y}}_\infty \le \gamma \norm{\b{x} - \b{y}}_\infty
    \qquad \text{and} \qquad
    \norm{\Bell \b{x} - \Bell \b{y}}_\infty \le \gamma \norm{\b{x} - \b{y}}_\infty ~.
    \]
    The equations $\Bell_{\bm{\pi}} \b{v} = \b{v}$ and $\Bell \b{v} = \b{v}$ have the unique solutions $\bm{v}_{\bm{\pi}}$ and $\b{v}\opt$, respectively. 
\end{prop}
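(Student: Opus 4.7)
The plan is to establish the contraction for each operator separately and then invoke the Banach fixed-point theorem to obtain uniqueness. I would begin with the robust Bellman policy update $\Bell_{\bm{\pi}}$ in the sa-rectangular case. Fix any state $s$ and, for each action $a$, let $\*p_{s,a}^{\*y}$ be a minimizer in~\eqref{eq:bellman_sa_rectangular_pol} for the value function $\*y$; such minimizers exist since $\mathcal{P}_{s,a}$ is compact. Then $\*p_{s,a}^{\*y}$ is feasible (though generally suboptimal) for $\*x$, which gives
\[
(\Bell_{\bm{\pi}}\*x)_s - (\Bell_{\bm{\pi}}\*y)_s \;\leq\; \sum_{a\in\actions} \pi_{s,a}\cdot \gamma\cdot (\*p_{s,a}^{\*y})^\top (\*x - \*y) \;\leq\; \gamma\,\|\*x - \*y\|_\infty,
\]
where the last step uses $\*p_{s,a}^{\*y}\in\Delta^S$ and $\sum_a \pi_{s,a} = 1$. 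Swapping the roles of $\*x$ and $\*y$ and taking the maximum over $s$ yields the desired $\gamma$-contraction.

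For the sa-rectangular optimality operator $\Bell$, I would apply the same argument inside each max. Fix $s$ and let $a^{\*y}$ be a maximizer in~\eqref{eq:bellman_sa_rectangular} for $\*y$, and let $\*p_{s,a^{\*y}}^{\*y}$ be the corresponding inner minimizer. Then
\[
(\Bell\*x)_s \;\geq\; \min_{\*p\in\mathcal{P}_{s,a^{\*y}}} \*p^\top(\*r_{s,a^{\*y}} + \gamma\*x) \;\geq\; (\*p_{s,a^{\*y}}^{\*y})^\top(\*r_{s,a^{\*y}} + \gamma\*y) + \gamma(\*p_{s,a^{\*y}}^{\*y})^\top(\*x - \*y),
\]
so $(\Bell\*y)_s - (\Bell\*x)_s \leq \gamma\,\|\*x-\*y\|_\infty$, and the reverse inequality is symmetric. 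The s-rectangular versions of both operators admit the same proof scheme: in~\eqref{eq:bellman_s_rectangular_pol} one picks a joint minimizer $(\*p_{s,a}^{\*y})_{a\in\actions}\in\mathcal{P}_s$ for $\*y$ and uses it as a feasible choice for $\*x$, and in~\eqref{eq:bellman_s_rectangular} one additionally fixes an outer maximizer $\*d^{\*y}$ for $\*y$ and uses it as a feasible (though suboptimal) choice for $\*x$; the resulting chain of inequalities is identical modulo notation.

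Once both operators are shown to be $\gamma$-contractions on the complete metric space $(\Real^S, \|\cdot\|_\infty)$, the Banach fixed-point theorem immediately yields a unique fixed point for each. The identification of these fixed points with $\bm{v}_{\bm{\pi}}$ and $\bm{v}^\star$ respectively is already cited in \cref{sec:robust_bellman_updates} (see~\cite{Iyengar2005,Wiesemann2013}), so it only remains to note the correspondence.

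The main obstacle is essentially bookkeeping: making sure that in each of the four cases (sa-rectangular vs.\ s-rectangular, policy update vs.\ optimality) the selector argument is applied to the correct side of the inequality, and that the chosen auxiliary point is genuinely feasible under the rectangularity structure of the ambiguity set. No substantive analytic obstacle arises, since compactness of $\mathcal{P}$, $\mathcal{P}_s$, and $\mathcal{P}_{s,a}$ guarantees existence of all required minimizers and maximizers, and the stochasticity of the resulting probability vectors ensures that inner products against $\*x - \*y$ are bounded by $\|\*x - \*y\|_\infty$.
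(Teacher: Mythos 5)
Your proof is correct in structure and is, in essence, the standard argument; note, though, that the paper does not actually prove this proposition itself---its ``proof'' is a citation to Theorem 3.2 of Iyengar (2005) for the sa-rectangular case and Theorem 4 of Wiesemann et al.\ (2013) for the s-rectangular case---so your self-contained derivation is a genuine addition rather than a reproduction of the paper's text. The technique you use (fix the optimizer for one argument, reuse it as a feasible but suboptimal choice for the other, and exploit that stochastic vectors satisfy $\lvert \*p\tr(\*x-\*y)\rvert\le\norm{\*x-\*y}_\infty$) is exactly how these contraction results are established in the cited references, and your case bookkeeping and the appeal to the Banach fixed-point theorem for existence and uniqueness of the fixed points are fine.

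One display is wrong as written, however. In the sa-rectangular optimality case you claim
\[
\min_{\*p\in\probs_{s,a^{\*y}}} \*p\tr(\*r_{s,a^{\*y}} + \gamma\*x) \;\ge\; (\*p_{s,a^{\*y}}^{\*y})\tr(\*r_{s,a^{\*y}} + \gamma\*y) + \gamma\,(\*p_{s,a^{\*y}}^{\*y})\tr(\*x - \*y)~,
\]
but the right-hand side simplifies exactly to $(\*p_{s,a^{\*y}}^{\*y})\tr(\*r_{s,a^{\*y}} + \gamma\*x)$, i.e.\ the objective evaluated at the feasible point $\*p_{s,a^{\*y}}^{\*y}$, which is an \emph{upper} bound on the minimum, not a lower bound; the inequality is reversed except in the degenerate case where $\*p_{s,a^{\*y}}^{\*y}$ also happens to minimize for $\*x$. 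The repair is immediate: let $\*p^{\*x}$ denote a minimizer for $\*x$ under action $a^{\*y}$; then
\[
\min_{\*p\in\probs_{s,a^{\*y}}}\*p\tr(\*r_{s,a^{\*y}}+\gamma\*x) = (\*p^{\*x})\tr(\*r_{s,a^{\*y}}+\gamma\*y)+\gamma\,(\*p^{\*x})\tr(\*x-\*y) \ge (\*p_{s,a^{\*y}}^{\*y})\tr(\*r_{s,a^{\*y}}+\gamma\*y)+\gamma\,(\*p^{\*x})\tr(\*x-\*y)~,
\]
and since the last term is at least $-\gamma\norm{\*x-\*y}_\infty$ for \emph{any} stochastic vector, your conclusion $(\Bell\*y)_s - (\Bell\*x)_s\le\gamma\norm{\*x-\*y}_\infty$ stands unchanged. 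The same correction (use the minimizer for the argument appearing in the minimum, and the optimality of the other minimizer only to bound the reward-plus-$\gamma\*y$ term) should be carried through the s-rectangular cases as well.
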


\begin{proof}
    See Theorem 3.2 of \citeasnoun{Iyengar2005} for sa-rectangular sets and Theorem 4 of \citeasnoun{Wiesemann2013} for s-rectangular sets.
\end{proof}

\begin{prop} \label{lem:monotone}
    For both s-rectangular and sa-rectangular ambiguity sets, the robust Bellman policy update $\Bell_{\bm{\pi}}$ and the robust Bellman optimality operator $\Bell$ are monotone:
    \[ \Bell_{\bm{\pi}} \b{x} \ge \Bell_{\bm{\pi}} \b{y} \quad \text{ and } \quad \Bell \b{x} \ge \Bell \b{y} \qquad \forall \b{x} \ge \b{y} ~.\]
\end{prop}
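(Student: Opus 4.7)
The plan is to reduce all four monotonicity claims (for the sa-rectangular $\Bell$ and $\Bell_{\bm{\pi}}$ and for the s-rectangular $\Bell$ and $\Bell_{\bm{\pi}}$) to the elementary fact that $\bm{p}\tr\bm{x} \ge \bm{p}\tr\bm{y}$ whenever $\bm{p} \in \RealPlus^S$ and $\bm{x} \ge \bm{y}$. Since $\gamma > 0$, this immediately yields $\bm{p}\tr(\bm{r}_{s,a} + \gamma\bm{x}) \ge \bm{p}\tr(\bm{r}_{s,a} + \gamma\bm{y})$ for every $\bm{p} \in \Delta^S$ and every state-action pair $(s,a)$, which is the pointwise ``seed'' inequality we will lift through the nested min/max/weighted-sum structure of each operator.

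First I would record the standard preservation lemma: if $f(\bm{p}) \ge g(\bm{p})$ for every $\bm{p}$ in a common domain $\mathcal{Q}$, then $\min_{\bm{p}\in\mathcal{Q}} f(\bm{p}) \ge \min_{\bm{p}\in\mathcal{Q}} g(\bm{p})$ and $\max_{\bm{p}\in\mathcal{Q}} f(\bm{p}) \ge \max_{\bm{p}\in\mathcal{Q}} g(\bm{p})$, both of which follow by chasing pointwise bounds against a minimizer/maximizer of the dominated function. I would also note that non-negative linear combinations preserve pointwise inequalities, which covers the weightings by $\pi_{s,a}$ and by $d_a \in \Delta^A$ that appear in the operator definitions.

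I would then apply these facts to each operator in turn. For the sa-rectangular $\Bell$ in~\eqref{eq:bellman_sa_rectangular}, the innermost objective $\bm{p}\tr(\bm{r}_{s,a}+\gamma\bm{x})$ is monotone in $\bm{x}$ by the seed inequality; the minimum over $\bm{p}\in\probs_{s,a}$ preserves monotonicity and so does the subsequent maximum over $a\in\actions$. The sa-rectangular $\Bell_{\bm{\pi}}$ in~\eqref{eq:bellman_sa_rectangular_pol} is identical except the outer maximum is replaced by a non-negative weighted sum indexed by $\bm{\pi}_s \in \Delta^A$, which also preserves monotonicity. For the s-rectangular $\Bell_{\bm{\pi}}$ in~\eqref{eq:bellman_s_rectangular_pol}, the joint objective $\sum_{a\in\actions} \pi_{s,a}\bm{p}_a\tr(\bm{r}_{s,a}+\gamma\bm{x})$ is monotone in $\bm{x}$ for each fixed $(\bm{p}_a)_{a\in\actions}$ since $\pi_{s,a}\ge 0$ and $\bm{p}_a\ge \zero$, and taking the minimum over $\bm{p}\in\probs_s$ preserves this monotonicity. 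Finally, the s-rectangular $\Bell$ in~\eqref{eq:bellman_s_rectangular} adds an outer maximum over $\bm{d}\in\Delta^A$, handled in the same way.

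There is no substantive obstacle here; the only care required is to track the sign conditions (non-negativity of transition probabilities $\bm{p}$, of the discount factor $\gamma$, and of the policy weights $\pi_{s,a}$ or $d_a$) at every layer. Once these are verified, the proof reduces to a bookkeeping exercise that chains the preservation lemma through the layered structure of each of the four operators.
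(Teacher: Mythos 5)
Your proposal is correct and follows essentially the same route as the paper: the paper also reduces everything to the fact that $\*p\tr(\*r_{s,a}+\gamma\*x)\ge\*p\tr(\*r_{s,a}+\gamma\*y)$ for $\*p\ge\zero$, pushes this through the minimum by evaluating at a minimizer, and handles the optimality operator via the greedy policy (which is exactly your ``max preserves pointwise dominance'' step). The only nitpick is that for the $\min$ half of your preservation lemma you should evaluate at a minimizer of the \emph{dominating} function, not the dominated one, but the lemma itself is standard and true, so this does not affect correctness.
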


\begin{proof}
    We show the statement for s-rectangular ambiguity sets; the proof of sa-rectangular uncertainty sets is analogous. Consider $\bm{\pi} \in \Pi$ as well as $\bm{x}, \bm{y} \in \Real^S$ such that $\bm{x} \geq \bm{y}$ and define
    \[
        F_s(\*p,\*x) = \sum_{a\in\actions} \pi_{s,a} \cdot \*p_a \tr (\*r_{s,a} + \gamma\cdot\*x)~.
    \]
    The monotonicity of the robust Bellman policy update $\Bell_{\bm{\pi}}$ follows from the fact that
    \[
        (\Bell_{\bm{\pi}} \*x)_s = \min_{\*p\in \mathcal{P}_s} \; F_s(\*p,\*x) = F_s(\*p\opt,\*x) \ge F_s(\*p\opt,\*y) \stackrel{\text{(a)}}{\ge} (\Bell_{\bm{\pi}} \*y)_s \qquad \forall s \in \mathcal{S}~,
    \]
    where $\*p\opt \in \mathop{\arg \min}_{\*p\in \mathcal{P}_s} \, F_s(\*p,\*x)$. The inequality (a) holds because $F_s(\*p\opt, \cdot)$ is monotone since $\*p\opt \ge \zero$.
    
    To prove the monotonicity of the robust Bellman optimality operator $\Bell$, consider again some $\bm{x}$ and $\bm{y}$ with $\bm{x} \geq \bm{y}$ and let $\bm{\pi}\opt$ be the greedy policy satisfying $\Bell \bm{y} = \Bell_{\bm{\pi}^\star} \bm{y}$. We then have that
    \[
        (\Bell \*y)_s  = (\Bell_{\bm{\pi}\opt} \*y)_s \le  (\Bell_{\bm{\pi}\opt}\*x)_s \le (\Bell \*x)_s,
    \]
    where the inequalities follow from the (previously shown) monotonicity of $\Bell_{\bm{\pi}^\star}$ and the fact that $(\Bell \*x)_s = (\max_{\*\pi\in\Pi} \Bell_{\*\pi}\*x)_s \ge (\Bell_{\*\pi\opt}\*x)_s $.
\end{proof}

\Cref{lem:contraction,lem:monotone} further imply the following two properties of $\Bell_{\bm{\pi}}$ and $\Bell$.
\begin{cor} \label{cor:optimal_dominates}
    For both s-rectangular and sa-rectangular ambiguity sets, the robust Bellman policy update $\Bell_{\bm{\*\pi}}$ and the robust Bellman optimality operator $\Bell$ satisfy $\bm{v}\opt \geq \*v_{\*\pi}$ for each $\*\pi \in \Pi$.
\end{cor}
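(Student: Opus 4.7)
The plan is to combine the two properties of the operators that have already been established in the appendix: $\Bell$ and $\Bell_{\bm{\pi}}$ are $\gamma$-contractions with unique fixed points $\bm{v}\opt$ and $\bm{v}_{\bm{\pi}}$ respectively (\cref{lem:contraction}), and both operators are monotone (\cref{lem:monotone}). The key observation tying these together is that $\Bell \bm{v} \ge \Bell_{\bm{\pi}} \bm{v}$ pointwise for any $\bm{v} \in \Real^S$ and any $\bm{\pi} \in \Pi$, which follows directly from the definitions~\eqref{eq:bellman_s_rectangular_pol}--\eqref{eq:bellman_s_rectangular} (in the s-rectangular case) and~\eqref{eq:bellman_sa_rectangular_pol}--\eqref{eq:bellman_sa_rectangular} (in the sa-rectangular case), since the policy $\bm{\pi}_s$ is a feasible choice for the distribution $\bm{d}$ in the outer maximization defining $(\Bell \bm{v})_s$.

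The proof itself is a short induction. First I would observe that evaluating the inequality $\Bell \bm{v} \ge \Bell_{\bm{\pi}} \bm{v}$ at $\bm{v} = \bm{v}\opt$ and using $\Bell \bm{v}\opt = \bm{v}\opt$ yields the base case $\bm{v}\opt \ge \Bell_{\bm{\pi}} \bm{v}\opt$. Then I would apply $\Bell_{\bm{\pi}}$ to both sides and invoke monotonicity (\cref{lem:monotone}) to deduce $\Bell_{\bm{\pi}} \bm{v}\opt \ge \Bell_{\bm{\pi}}^2 \bm{v}\opt$, giving $\bm{v}\opt \ge \Bell_{\bm{\pi}}^2 \bm{v}\opt$ by chaining. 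Iterating this argument $n$ times yields $\bm{v}\opt \ge \Bell_{\bm{\pi}}^n \bm{v}\opt$ for every $n \ge 1$.

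Finally, by \cref{lem:contraction}, $\Bell_{\bm{\pi}}$ is a $\gamma$-contraction under $\|\cdot\|_\infty$, so by the Banach fixed-point theorem $\Bell_{\bm{\pi}}^n \bm{v}\opt \to \bm{v}_{\bm{\pi}}$ as $n \to \infty$. Passing to the limit in the componentwise inequality $\bm{v}\opt \ge \Bell_{\bm{\pi}}^n \bm{v}\opt$ then yields $\bm{v}\opt \ge \bm{v}_{\bm{\pi}}$, as desired. Since neither the pointwise inequality $\Bell \bm{v} \ge \Bell_{\bm{\pi}} \bm{v}$ nor \cref{lem:contraction,lem:monotone} distinguishes between the s-rectangular and sa-rectangular cases, the argument covers both simultaneously.

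There is no serious obstacle here: the argument is entirely mechanical once the two propositions from the appendix are in place. The only subtle point that warrants explicit mention is the inequality $\Bell \bm{v} \ge \Bell_{\bm{\pi}} \bm{v}$, which is immediate from the definitions but relies on the fact that the maximization in $\Bell$ ranges over distributions $\bm{d} \in \Delta^A$ (rather than, say, deterministic actions), so that the components of $\bm{\pi}_s$ are feasible.
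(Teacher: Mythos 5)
Your proof is correct and is exactly the standard monotonicity-plus-contraction argument that the paper invokes by citing Proposition~2.1.2 of Bertsekas; you have simply written it out in full ($\Bell \*v \ge \Bell_{\*\pi}\*v$, evaluate at $\*v\opt$, iterate via \cref{lem:monotone}, pass to the limit via \cref{lem:contraction}). One small note: in the sa-rectangular definition~\eqref{eq:bellman_sa_rectangular} the outer maximization is over actions $a\in\actions$ rather than distributions $\*d\in\Delta^A$, but the inequality $\Bell\*v \ge \Bell_{\*\pi}\*v$ still holds there because a maximum dominates any convex combination, so nothing in your argument breaks.
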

\begin{proof}
The corollary follows from the monotonicity (\cref{lem:monotone}) and contraction properties (\cref{lem:contraction}) of $\Bell$ and $\Bell_{\*\pi}$ using standard arguments. See, for example, Proposition 2.1.2 in \citeasnoun{Bertsekas2013}.  
\end{proof}

\begin{cor} \label{lem:opt_value_function_approx}
    For both s-rectangular and sa-rectangular ambiguity sets, the robust Bellman policy update $\Bell_{\bm{\pi}}$ and the robust Bellman optimality operator $\Bell$ satisfy for any $\*v \in\Real^S$ that
    \[
        \norm{\b{v}\opt - \b{v}}_\infty \le \frac{1}{1-\gamma} \norm{\Bell \b{v} - \b{v}}_\infty
        \quad \text{and} \quad
        \norm{\b{v}_{\bm{\pi}} - \b{v}}_\infty \le \frac{1}{1-\gamma} \norm{\Bell_{\bm{\pi}} \b{v} - \b{v}}_\infty~.
    \]
\end{cor}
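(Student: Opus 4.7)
The plan is to prove both inequalities by a single, standard argument that exploits only the contraction property of the operator (\cref{lem:contraction}) and the fact that $\b{v}\opt$ and $\b{v}_{\bm{\pi}}$ are the unique fixed points of $\Bell$ and $\Bell_{\bm{\pi}}$, respectively. Since the two statements are structurally identical, I would introduce a generic $\gamma$-contraction $T$ with fixed point $v_T$ (instantiated later as either $\Bell$ with $v_T = \b{v}\opt$, or $\Bell_{\bm{\pi}}$ with $v_T = \b{v}_{\bm{\pi}}$), and prove the bound $\norm{v_T - \b{v}}_\infty \le (1-\gamma)^{-1}\norm{T\b{v} - \b{v}}_\infty$ once.

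First, I would apply the triangle inequality to a telescoping sum:
\[
\norm{v_T - \b{v}}_\infty \;\le\; \sum_{k=0}^{\infty} \norm{T^{k+1}\b{v} - T^k \b{v}}_\infty ,
\]
which is justified because the Banach fixed-point theorem (implicit in \cref{lem:contraction}) guarantees that $T^k \b{v} \to v_T$ as $k \to \infty$, so the partial sums bound $\norm{T^N\b{v} - \b{v}}_\infty$ and the limit bounds $\norm{v_T - \b{v}}_\infty$.

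Next, I would repeatedly apply the $\gamma$-contraction property from \cref{lem:contraction}: for every $k \ge 0$,
\[
\norm{T^{k+1}\b{v} - T^k \b{v}}_\infty \;=\; \norm{T(T^{k}\b{v}) - T(T^{k-1}\b{v})}_\infty \;\le\; \gamma^k \,\norm{T\b{v} - \b{v}}_\infty .
\]
Substituting this into the telescoping bound and summing the geometric series yields
\[
\norm{v_T - \b{v}}_\infty \;\le\; \sum_{k=0}^{\infty} \gamma^k \,\norm{T\b{v} - \b{v}}_\infty \;=\; \frac{1}{1-\gamma}\,\norm{T\b{v} - \b{v}}_\infty ,
\]
which is exactly the claimed inequality. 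Specializing $T = \Bell$ (with $v_T = \b{v}\opt$) and $T = \Bell_{\bm{\pi}}$ (with $v_T = \b{v}_{\bm{\pi}}$) gives the two stated bounds.

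There is no real obstacle here: the argument is the textbook geometric-series bound for Banach contractions, and both ingredients (the contraction factor $\gamma$ and the uniqueness of the fixed points) are supplied verbatim by \cref{lem:contraction}. The only minor care point is to justify passing to the infinite limit in the telescoping sum, which follows immediately from the convergence $T^k \b{v} \to v_T$ guaranteed by the contraction.
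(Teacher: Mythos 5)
Your proof is correct and is exactly the ``standard argument'' the paper invokes by citing Proposition 2.1.1 of Bertsekas: the telescoping sum, the $\gamma$-contraction bound from \cref{lem:contraction}, and the geometric series. You only need the contraction property (the paper's proof also lists monotonicity, but that is not required for this particular bound), so your write-up is a complete and slightly leaner version of the same argument.
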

\begin{proof}
The corollary follows from the monotonicity (\cref{lem:monotone}) and contraction properties (\cref{lem:contraction}) of $\Bell$ and $\Bell_{\*\pi}$ using standard arguments. See, for example, Proposition 2.1.1 in \citeasnoun{Bertsekas2013}.  
\end{proof}

We next show that both $\Bell_{\bm{\pi}}$ and $\Bell$ are invariant when adding a constant to the value function.

\begin{lem} \label{lem:bellman_linear_translation}
    For both s-rectangular and sa-rectangular ambiguity sets, the robust Bellman policy update $\Bell_{\bm{\pi}}$ and the robust Bellman optimality operator $\Bell$ are translation invariant for each $\*\pi\in\Pi$:
    \[ 
    \Bell_{\bm{\pi}} (\*v + \epsilon \cdot \one) = \Bell_{\bm{\pi}} \*v + \gamma \epsilon \cdot \one
    \quad \text{and} \quad
    \Bell (\*v + \epsilon \cdot \one) = \Bell \*v + \gamma \epsilon \cdot \one
    \qquad \forall \*v \in \Real^S, \; \forall \epsilon \in \Real
    \]
\end{lem}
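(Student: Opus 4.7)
The plan is to leverage the simple observation that every transition probability vector $\*p \in \Delta^S$ satisfies $\*p\tr \one = 1$, which means that shifting the value function by a constant $\epsilon$ adds exactly $\gamma \epsilon$ to every inner expression $\*p\tr(\*r_{s,a} + \gamma \cdot \*v)$. Since this shift is independent of the choice of $\*p$ (and of $\*\pi$), both the inner minimization and the outer maximization are unaffected, and the constant factors out cleanly.

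Concretely, I would handle the four cases (policy update vs.\ optimality operator, sa-rectangular vs.\ s-rectangular) in parallel and only write out one in detail. Take the s-rectangular optimality operator as the representative case. Starting from~\eqref{eq:bellman_s_rectangular}, I would substitute $\*v + \epsilon \one$ in place of $\*v$, distribute, and use that $\*p_a\tr \one = 1$ for each $a \in \actions$ since $\*p_a \in \Delta^S$. This gives, for every $\*d \in \Delta^A$ and $\*p \in \probs_s$,
\[
\sum_{a\in\actions} d_a \cdot \*p_a\tr(\*r_{s,a} + \gamma\cdot(\*v + \epsilon\one)) \;=\; \sum_{a\in\actions} d_a\cdot \*p_a\tr(\*r_{s,a} + \gamma\cdot\*v) \;+\; \gamma\epsilon \sum_{a\in\actions} d_a ,
\]
and the second term equals $\gamma \epsilon$ since $\*d \in \Delta^A$. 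Thus the additive constant $\gamma \epsilon$ can be pulled outside both the $\min$ over $\*p$ and the $\max$ over $\*d$, yielding $(\Bell(\*v + \epsilon \one))_s = (\Bell \*v)_s + \gamma \epsilon$.

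The other three cases follow by the same mechanism with strictly easier bookkeeping: for the sa-rectangular $\Bell$ the outer operator is $\max_{a\in\actions}$ so no convex combination over actions is even needed, and for the policy update $\Bell_{\bm{\pi}}$ the weights $\pi_{s,a}$ replace $d_a$ but still sum to one. No obstacle is anticipated here; the statement is essentially a bookkeeping consequence of the simplex constraint $\*p\tr\one = 1$ combined with linearity of the objective in the value function, and the whole argument should fit in a short displayed equation for each case.
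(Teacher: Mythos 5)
Your proposal is correct and rests on the same core observation as the paper's proof: the simplex constraints $\*p_a\tr\one = 1$ and $\sum_a \pi_{s,a} = 1$ (or $\sum_a d_a = 1$) turn the shift of $\*v$ into an additive constant $\gamma\epsilon$ that commutes with the optimizations. The only cosmetic difference is that the paper handles the optimality operator $\Bell$ via a two-sided sandwich using greedy policies $\bm{\pi}^1,\bm{\pi}^2$, whereas you pull the constant directly through the nested $\max$--$\min$; both are valid and amount to the same fact.
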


\begin{proof}
    We show the statement for s-rectangular ambiguity sets; the proof of sa-rectangular uncertainty sets is analogous. Fixing $\bm{\pi} \in \Pi$, $\bm{v} \in \Real^S$ and $\epsilon \in \Real$, we have
    \begin{align*} 
        (\Bell_{\bm{\pi}} (\b{v} + \epsilon \one))_s &= \min_{\*p\in\probs_{s}} \sum_{a\in\actions} \pi_{s,a} \cdot \*p_a\tr (\*r_{s,a} + \gamma\cdot [\*v + \epsilon \cdot \one]) \\
        &= \min_{\*p\in\probs_{s}} \sum_{a\in\actions} \pi_{s,a} \cdot (\*p_a\tr (\*r_{s,a} + \gamma\cdot \*v) + \gamma \epsilon ) \\
        &= \gamma \epsilon + \min_{\*p\in\probs_{s}} \sum_{a\in\actions} \pi_{s,a} \cdot \*p_a\tr (\*r_{s,a} + \gamma\cdot \*v) ~,
    \end{align*}
    where the first identity holds by definition of $\Bell_{\bm{\pi}}$, the second is due to the fact that $\*p_a\tr \one = 1$ since $\mathcal{P}_s \subseteq (\Delta^S)^A$, and the third follows from the fact that $\sum_{a \in \actions} \pi_{s,a} = 1$.
    
    To see that $\Bell (\*v + \epsilon \cdot \one) = \Bell \*v + \gamma \epsilon \cdot \one$, we note that
    \[
        \Bell (\*v + \epsilon \cdot \one)
        =
        \Bell_{\bm{\pi}^1} (\*v + \epsilon \cdot \one)
        =
        \Bell_{\bm{\pi}^1} \*v + \gamma \epsilon \cdot \one
        \leq
        \Bell \*v + \gamma \epsilon \cdot \one~,
    \]
    where $\bm{\pi}^1 \in \Pi$ is the greedy policy that satisfies $\Bell_{\bm{\pi}^1} (\*v + \epsilon \cdot \one) = \Bell (\*v + \epsilon \cdot \one)$, as well as
    \[
        \Bell \*v + \gamma \epsilon \cdot \one
        =
        \Bell_{\bm{\pi}^2} \*v + \gamma \epsilon \cdot \one
        =
        \Bell_{\bm{\pi}^2} (\*v + \epsilon \cdot \one)
        \leq
        \Bell (\*v + \epsilon \cdot \one)~,
    \]
    where $\bm{\pi}^2 \in \Pi$ is the greedy policy that satisfies $\Bell_{\bm{\pi}^2} \*v = \Bell \*v$.
\end{proof}

Our last result in this section shows that the difference between applying the robust Bellman policy update $\Bell_{\bm{\pi}}$ to two value functions can be bounded from below by a linear function.

\begin{lem} \label{lem:Bellman_bound_linear}
    For both s-rectangular and sa-rectangular ambiguity sets, there exists a stochastic matrix $\*P$ such that the robust Bellman policy update $\Bell_{\bm{\pi}}$ satisfies
    \[ \Bell_{\bm{\pi}} \*x - \Bell_{\bm{\pi}} \*y \ge \gamma \cdot \*P (\*x - \*y)~, \]
    for each $\*\pi\in\Pi$ and $\*x, \*y \in \Real^S$.
\end{lem}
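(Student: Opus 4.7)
The plan is to exhibit the stochastic matrix $\*P$ explicitly by taking the row-wise worst-case transition probabilities achieved at $\*x$, and then to use the optimality of these probabilities at $\*x$ together with their mere feasibility at $\*y$ to obtain the desired one-sided bound. I expect the sa-rectangular case to follow by the same reasoning applied action-wise, so I will present the s-rectangular case in detail first.

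Fix $\bm{\pi}\in\Pi$ and $\*x,\*y\in\Real^S$. For each state $s\in\states$, compactness of $\probs_s$ guarantees the existence of a minimizer
\[
    \*p\opt(s) \;\in\; \mathop{\arg\min}_{\*p\in\probs_s} \; \sum_{a\in\actions} \pi_{s,a}\cdot \*p_a\tr(\*r_{s,a} + \gamma\cdot\*x)~.
\]
Define the candidate stochastic matrix $\*P$ row-by-row via
\[
    \*P_{s,:} \;=\; \Bigl(\sum_{a\in\actions} \pi_{s,a}\cdot \*p\opt_a(s)\Bigr)\tr \qquad \forall s\in\states~.
\]
Each row of $\*P$ is a convex combination (with weights $\pi_{s,a}$) of elements of $\Delta^S$, hence lies in $\Delta^S$, so $\*P$ is indeed stochastic.

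The key step is to compare the values of $\Bell_{\bm{\pi}}$ at $\*x$ and $\*y$ using $\*p\opt(s)$. By optimality of $\*p\opt(s)$ for $\*x$ we have the equality
\[
    (\Bell_{\bm{\pi}} \*x)_s \;=\; \sum_{a\in\actions} \pi_{s,a}\cdot \*p\opt_a(s)\tr(\*r_{s,a} + \gamma\cdot\*x)~,
\]
whereas $\*p\opt(s)\in\probs_s$ is merely feasible (not necessarily optimal) at $\*y$, so
\[
    (\Bell_{\bm{\pi}} \*y)_s \;\le\; \sum_{a\in\actions} \pi_{s,a}\cdot \*p\opt_a(s)\tr(\*r_{s,a} + \gamma\cdot\*y)~.
\]
Subtracting and cancelling the reward terms yields
\[
    (\Bell_{\bm{\pi}} \*x)_s - (\Bell_{\bm{\pi}} \*y)_s \;\ge\; \gamma\cdot\Bigl(\sum_{a\in\actions} \pi_{s,a}\cdot \*p\opt_a(s)\Bigr)\tr(\*x-\*y) \;=\; \gamma\cdot \*P_{s,:}(\*x-\*y)~,
\]
which is the desired inequality in component $s$; stacking over $s$ gives $\Bell_{\bm{\pi}}\*x - \Bell_{\bm{\pi}}\*y \ge \gamma\cdot \*P(\*x-\*y)$.

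For the sa-rectangular case, the minimization decouples over actions, so I would instead choose, for each pair $(s,a)$, a minimizer $\*p\opt_a(s)\in\probs_{s,a}$ of $\*p\tr(\*r_{s,a}+\gamma\cdot\*x)$ and define $\*P$ by the same formula $\*P_{s,:}=(\sum_a \pi_{s,a}\*p\opt_a(s))\tr$. The same optimality-versus-feasibility argument, now applied to each action's inner minimization separately before summing over $a$ with weights $\pi_{s,a}$, yields the identical conclusion. There is no serious obstacle here: the only subtle point is that $\*P$ depends on $\bm{\pi}$ and on $\*x$, but the statement only asserts existence of such a matrix, so this dependence is harmless.
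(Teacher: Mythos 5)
Your proof is correct and follows essentially the same route as the paper's: both construct $\*P$ row-wise from worst-case transition probabilities and exploit that an optimizer for one value function is merely feasible for the other. The only cosmetic difference is which optimizer defines $\*P$ --- you use the minimizer of the $\*x$-problem, while the paper first applies $\min f - \min g \ge \min(f-g)$ and builds $\*P$ from the minimizer of the resulting difference problem; both choices yield a valid stochastic matrix.
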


\begin{proof}
    We show the statement for s-rectangular ambiguity sets; the proof of sa-rectangular uncertainty sets is analogous. We have that
    \begin{align*}
        (\Bell_{\bm{\pi}} \*x - \Bell_{\bm{\pi}} \*y)_s &= \min_{\*p \in \probs_s} \left\{ \sum_{a\in\actions} \pi_{s,a} \cdot \*p_a\tr (\*r_{s,a} + \gamma\cdot \*x) \right\} - \min_{\*p \in \probs_s} \left\{ \sum_{a\in\actions} \pi_{s,a} \cdot \*p_a\tr (\*r_{s,a} + \gamma\cdot \*y) \right\} \\
        &\ge \min_{\*p \in \probs_s}  \left\{ \sum_{a\in\actions} \left( \pi_{s,a} \cdot \*p_a\tr (\*r_{s,a} + \gamma\cdot \*x) \right) - \sum_{a\in\actions} \left( \pi_{s,a} \cdot \*p_a\tr (\*r_{s,a} + \gamma\cdot \*y) \right) \right\} \\
        &= \min_{\*p \in \probs_s} \left\{ \sum_{a\in\actions} \pi_{s,a} \cdot \gamma\cdot \*p_a\tr (\*x - \*y) \right\}  ~.
    \end{align*}
    The result follows by constructing the stochastic matrix $\*P$ such that its $s$-th row is $\sum_{a\in\actions} \pi_{s,a} \cdot \*p_a\tr$ where $\*p_a$ is the optimizer in the last minimization above.        
\end{proof}

\section{Bisection Algorithm with Quasi-Linear Time Complexity} \label{sec:bisection_linear}

We adapt \cref{alg:bisection} to determine the optimal solution to problem~\eqref{eq:simplified_opt} in quasi-linear time without dependence on any precision $\epsilon$. Recall that \cref{alg:homotopy} computes the breakpoints $(\xi^a_t)_t$, $t = 0, \ldots, T_a+1$ and objective values $(q^a_t)_t$, $t = 0, \ldots, T_a + 1$, $T_a \leq S^2$, of each function $q_a$, $a \in \actions$. Moreover, each inverse function $q^{-1}_a$ is also piecewise affine with breakpoints $(q^a_t)_t$, $t = 0, \ldots, T_a+1$ and corresponding function values $\xi_t^a = q^{-1}_a (q_t^a)$, as well as $q^{-1}_a (u) = \infty$ for $u < q_{T_a+1}^a$. We use this data as input for our revised bisection scheme in \cref{alg:bisection_linear}.

\begin{algorithm}[htb]
    \KwIn{Breakpoints $(q_t^a)_{t = 0, \ldots, T_a+1}$, of all functions $q_a$, $a \in \actions$}
    \KwOut{The optimal solution $u^\star$ to the problem~\eqref{eq:simplified_opt}}
    Combine $q_t^a$, $t = 0, \ldots, T_a$ and $a \in \actions$, to a single list $\mathcal{K} = (\hat{q}_1, \ldots, \hat{q}_K)$ in ascending order, omitting any duplicates \;
    ~ \\
    \tcp{Bisection search to find the optimal line segment $(k_{\min}, k_{\max})$}
    $k_{\min} \gets 1$; $k_{\max} \gets K$  \;
    \While{$k_{\max} - k_{\min} > 1$}{    
        Split $\{ k_{\min}, \ldots, k_{\max} \}$ in half: $k \gets \operatorname{round}((k_{\min} + k_{\max})/ 2)$ \;
        Calculate the budget required to achieve $u = \hat{q}_k$: $s \gets \sum_{a\in\actions} q_a^{-1}(\hat{q}_k)$ \;
        \eIf{$s \le \kappa$}{
            $u = \hat{q}_k$ is \emph{feasible}: update the feasible upper bound: $k_{\max} \gets k$ \;
        }{
            $u = \hat{q}_k$ is \emph{infeasible}: update the infeasible lower bound: $k_{\min} \gets k$ \;
        }
    }
    ~ \\
    \tcp{All $q^{-1}_a$ are affine on $(\hat{q}_{k_{\min}}, \hat{q}_{k_{\max}})$}
    $u_{\min} \gets \hat{q}_{k_{\min}}$; $\mspace{73mu}$ $u_{\max} \gets \hat{q}_{k_{\max}}$  \;
    $s_{\min} \gets \sum_{a\in\actions} q_a^{-1}(u_{\min})$;
    $s_{\max} \gets \sum_{a\in\actions} q_a^{-1}(u_{\max})$ \;
    $\alpha \gets (\kappa - s_{\min}) / (s_{\max} - s_{\min})$ \;
    $u\opt \leftarrow (1-\alpha) \cdot u_{\min} + \alpha \cdot u_{\max}$\;
    \Return{$u\opt$}
    \caption{Quasi-linear time bisection scheme for solving~\eqref{eq:bellman_s_rectangular}} \label{alg:bisection_linear}
\end{algorithm}

\cref{alg:bisection_linear} first combines all breakpoints $q^a_t$, $t = 0, \ldots T_a+1$ and $a \in \actions$, of the inverse functions $q^{-1}_a$, $a \in \actions$, to a single list $\mathcal{K}$ in ascending order. It then bisects on the indices of these breakpoints. The result is a breakpoint pair $(k_{\min}, k_{\max})$ satisfying $k_{\max} = k_{\min} + 1$ as well as $\kappa \in \left[ \sum_{a\in\actions} q_a^{-1}(\hat{q}_{k_{\min}}), \, \sum_{a\in\actions} q_a^{-1}(\hat{q}_{k_{\max}}) \right]$. Since none of the functions $q^{-1}_a$ have a breakpoint between $\hat{q}_{k_{\min}}$ and $\hat{q}_{k_{\max}}$, finding the optimal solution $u^\star$ to problem~\eqref{eq:bellman_s_rectangular} then reduces to solving a single linear equation in one unknown, which is done in the last part of \cref{alg:bisection_linear}.

%\cref{alg:bisection_linear} assumes that the uncertainty budget $\kappa$ is binding. If this cannot be ensured, then we should first verify whether $\kappa \geq \sum_{a \in \actions} q_a^{-1} (\hat{q}_1)$. If this is the case, the algorithm can terminate with $u^\star = \hat{q}_1$; otherwise, the steps of \cref{alg:bisection_linear} should be followed.

The complexity of \cref{alg:bisection_linear} is dominated by the merging of the sorted lists $(q_t^a)_{t = 0, \ldots T_a+1}$, $a \in \mathcal{A}$, as well as the computation of $s$ inside the while-loop. Merging $A$ sorted lists, each of size less than or equal to $CS$, can be achieved in time $\bigO(CSA \log A)$. However, each one of these lists needs to be also sorted in \cref{alg:homotopy} giving the overall complexity of $\bigO (CSA \log CSA)$. Then, computing $q_a^{-1}$ at a given point can be achieved in time $\bigO (\log CS)$, so that $s$ in an individual iteration of the while-loop can be computed in time $\bigO (A \log CS)$. Since the while-loop is executed $\bigO (\log CSA)$ many times, computing $s$ has an overall complexity of $\bigO (A \log CS \log CSA)$. We thus conclude that \cref{alg:bisection_linear} has a complexity of $\bigO (CSA \log A + A \log CS \log CSA)$.

\section{Computing the Bellman Operator via Linear Programming} \label{sec:s_rect_linear_formulation}

In this section we present an LP formulation for the robust s-rectangular Bellman optimality operator $\Bell$ defined in~\eqref{eq:bellman_s_rectangular}:
\[
    (\Bell \b{v})_s = \max_{\bm{d} \in\Delta^A}  \min_{\*p\in (\Delta^\statecount)^A} \left\{ \sum_{a\in\actions} d_a \cdot \*p_a \tr \*z_a  \ss \sum_{a\in\actions} \lVert \b{p}_a - \bar{\b{p}}_{s,a} \rVert_{1,\b{w}_{s,a}} \le \kappa_{s} \right\}
\]
Here, we use $\*z_a = \*r_{s,a} + \gamma\cdot\*v$ in the objective function. Employing an epigraph reformulation, the inner minimization problem can be re-expressed as the following linear program:
\[
    \begin{array}{l@{\quad}l@{\qquad}l@{\qquad}l}
        \displaystyle \min_{\b{p}\in\Real^{A \times S},\b{\theta}\in\Real^{A \times S}} & \displaystyle \sum_{a\in\actions} d_a \cdot \b{z}_a\tr \b{p}_a \\[6mm]
        \displaystyle \text{subject to} & \displaystyle \one\tr \b{p}_a = 1 & \displaystyle \forall a \in \actions & [x_a] \\
        & \displaystyle \b{p}_a - \bar{\b{p}}_a \ge - \b{\theta}_a & \displaystyle \forall a \in \actions & [y^n_a] \\
        & \displaystyle \bar{\b{p}}_a - \b{p}_a \ge - \b{\theta}_a & \displaystyle \forall a \in \actions & [y^p_a] \\[1mm]
        & \displaystyle -\sum_{a\in\actions} \b{w}_a\tr \b{\theta}_a \ge - \kappa & & [\lambda] \\[4mm]
        & \displaystyle \b{p} \ge \zero, \quad \b{\theta} \ge \zero
    \end{array}
\]
For ease of exposition, we have added the dual variables corresponding to each constraint in brackets. This linear program is feasible by construction, which implies that its optimal value coincides with the optimal value of its dual. We can thus dualize this linear program and combine it with the outer maximization to obtain the following linear programming reformulation of the the robust s-rectangular Bellman optimality operator $\Bell$:
\[
    \begin{array}{l@{\quad}l@{\qquad}l@{\qquad}l}
        \displaystyle \max_{\substack{\b{d}\in\Real^{A},\b{x}\in\Real^{A}, \, \lambda\in\Real \\ \b{y^p}\in\Real^{S\times A},\b{y^n}\in\Real^{S\times A}}} & \displaystyle \sum_{a\in\actions} \Bigl( x_a + \bar{\b{p}}_a\tr [\b{y}^n_a - \b{y}^p_a] \Bigr) - \kappa \cdot \lambda \\[6mm]
        \displaystyle \text{subject to} & \displaystyle \one\tr \b{d} = 1, \quad \b{d} \ge \zero \\
        & \displaystyle - \b{y}^p_a + \b{y}^n_a + x \cdot \one \le d_a z_a & \displaystyle \forall a\in\actions \\
        & \displaystyle \b{y}^p_a + \b{y}^n_a - \lambda \cdot \b{w}_a  \le \zero & \displaystyle \forall a\in\actions \\
        & \displaystyle \b{y}^p \ge \zero \quad \b{y}^n \ge \zero \\
        & \displaystyle \lambda \ge 0
    \end{array}
\]
This problem has $\bigO (SA)$ variables and an input bitlength of $\bigO (SA)$. As such, its theoretical runtime complexity is $\bigO (S^{4.5} A^{4.5})$.

\end{document}